\renewcommand{\url}[1]{\href{#1}{#1}} 
\newcommand{\todo}[1]{{#1}}
\newcommand{\finaltodo}[1]{{#1}}
\algrenewcommand\algorithmicrequire{\textbf{Precondition:}}
\newtheorem{theorem}{Theorem}
\newtheorem{corollary}{Corollary}
\begin{document}

\title{Turing learning: a metric-free approach to inferring behavior and its application to swarms\let\thefootnote\relax\footnotetext{All authors have contributed equally to this work.}}

\author[1]{Wei Li\thanks{wei.li11@sheffield.ac.uk}}
\author[2]{Melvin Gauci\thanks{mgauci@g.harvard.edu}}
\author[1]{Roderich Gro{\ss}\thanks{r.gross@sheffield.ac.uk (corresponding author)}}

\affil[1]{Department of Automatic Control and Systems Engineering, The~University of Sheffield,
Sheffield, UK}
\affil[2]{Wyss Institute for Biologically Inspired Engineering, Harvard~University, 
Boston, MA}

\renewcommand\Authands{ and }

\date{}

\maketitle

\begin{abstract}
We propose~\textit{Turing Learning}, a novel \todo{system identification} method for inferring the behavior of natural or artificial systems. \textit{Turing Learning} simul\-ta\-neous\-ly optimizes two populations of computer programs, one representing \textit{models} of the behavior of the system under investigation, and the other representing \textit{classifiers}.  By observing the behavior of the system as well as the behaviors produced by the models, two sets of data samples are obtained. The classifiers are rewarded for discriminating between these two sets, that is, for correctly categorizing data samples as either genuine or counterfeit.  Conversely, the models \todo{are rewarded for `tricking'} the classifiers into categorizing their data samples as genuine. Unlike other methods for system identification, \textit{Turing Learning} does not require \todo{predefined} metrics to quantify the difference between the system and its models. We present two case studies with swarms of simulated robots and prove that the underlying behaviors cannot be inferred by a metric-based system identification method. By contrast, \textit{Turing Learning} infers the behaviors with high accuracy.  It also produces a useful by-product---the classifiers---that can be used to detect abnormal behavior in the swarm. Moreover, we show that \textit{Turing Learning} also successfully infers the behavior of physical robot swarms. The results show that collective behaviors can be directly inferred from motion trajectories of \textcolor{black}{individuals} in the swarm, which may have significant implications for the study of animal collectives. \textcolor{black}{Furthermore, \textit{Turing Learning} could prove useful whenever a behavior is not easily characterizable using metrics, making it suitable for a wide range of applications.}
\\[0.3cm]
\textbf{Keywords}: system identification; Turing test; collective behavior; swarm robotics; coevolution; machine learning
\end{abstract}

\section{Introduction}\label{sec:introduction}

System identification is the process of modeling natural or artificial systems through observed data. It has drawn a large interest among researchers for decades~\citep{Ljung2010, Billings2013}. 
A limitation of current system identification methods is that they rely on predefined metrics, such as the sum of square errors, to measure the difference between the output of the models and that of the system under investigation. Model optimization then proceeds by minimizing the measured differences. However, for complex systems, defining a metric can be non-trivial and case-dependent. It may require prior information about the systems. Moreover, an unsuitable metric may not distinguish well between good and bad models, or even bias the identification process. This paper overcomes these problems by introducing a system identification method that does not rely on predefined metrics.

A promising application of such a metric-free method is the identification of collective behaviors, which are emergent behaviors that arise from the interactions of numerous simple individuals~\citep{Camazine03}. Inferring collective behaviors 
is particularly challenging, as the individuals not only interact with the environment but also with each other. Typically, their motion appears stochastic and is hard to predict~\citep{Dirk2011}. For instance, given a swarm of simulated fish, one would have to evaluate how close its behavior is to that of a real fish swarm, or how close the individual behavior of a simulated fish is to that of a real fish. Characterizing the behavior at the level of the swarm is difficult~\citep{Harvey:SI:2015}. Such a metric may require domain-specific knowledge; moreover, it may not be able to discriminate among distinct individual behaviors that lead to similar collective dynamics~\citep{Weitz2012}. Characterizing the behavior at the level of individuals is also difficult, as even the same individual fish in the swarm is likely to exhibit a different trajectory every time it is being looked at. 

In this paper, we propose \textit{Turing Learning}, a novel system identification method that allows a machine to \textcolor{black}{autonomously} infer the behavior of a natural or artificial system.
\textit{Turing Learning} simul\-ta\-neous\-ly optimizes two populations of computer programs, one representing \textit{models} of the behavior, the other representing \textit{classifiers}. 
The purpose of the models is to imitate the behavior of the system under investigation. The purpose of the classifiers is to discriminate between the behaviors produced by the system and any of the models. In \textit{Turing Learning}, all behaviors are observed for a period of time. This generates two sets of data samples. The first set consists of \textit{genuine} data samples, which originate from the system. The second set consists of \textit{counterfeit} data samples, which originate from the models.
The classifiers are rewarded for discriminating between samples of these two sets: Ideally, they should recognize any data sample from the system as genuine, and any data sample from the models as  counterfeit. 
Conversely, the models are rewarded for their ability to `trick' the classifiers into categorizing their data samples as genuine.

\textit{Turing Learning} does not rely on predefined metrics for measuring how close the models reproduce the behavior of the system under investigation;
rather, the metrics (classifiers) are produced as a by-product of the identification process. The method is inspired by the Turing test~\citep{Turing1950,Pinar-etal2000:minds_machine,Harnad2000}, which machines can pass if behaving indistinguishably from humans. Similarly, the models could pass the tests by the classifiers if behaving indistinguishably from the system under investigation. We hence call our method~\textit{Turing Learning}.

In the following, we examine the ability of \textit{Turing Learning} to infer the behavioral rules of a swarm of mobile agents. The agents are either simulated or physical robots. They execute known behavioral rules.  
This allows us to compare the inferred models to the ground truth. To obtain the data samples, we record the motion trajectories of all the agents. In addition, we record the motion trajectories of an agent \textit{replica}, which is mixed into the group of agents. The replica executes the rules defined by the models---one at a time.
As will be shown, by observing the motion trajectories of agents and of the agent replica,
\textit{Turing Learning} automatically infers the behavioral rules of the agents.
The behavioral rules examined here 
relate to two canonical problems in swarm robotics: self-organized aggregation~\citep{Gauci2014_ijrr}, and object clustering~\citep{Melvin2014_aamas}. 
They are reactive; in other words, each agent maps its inputs (sensor readings) directly onto the outputs (actions). 
The problem of inferring the mapping is challenging, as the inputs are not known.
Instead, \textit{Turing Learning} has to infer the mapping indirectly, from the observed motion trajectories of the agents and of the replica.

We originally presented the basic idea of \textit{Turing Learning}, along with preliminary simulations, in~\citep{Li-etal2013:proc_gecco,Li-etal2014:proc_gecco}. This paper extends our prior work as follows:
\begin{itemize}
\renewcommand{\labelitemi}{\scriptsize$\bullet$} 
\item It presents an algorithmic description of \textit{Turing Learning};
\item It shows that \textit{Turing Learning} outperforms a metric-based system identification method in terms of model accuracy;
\item It proves that the metric-based method is fundamentally flawed, as the globally optimal solution differs from the solution that should be inferred;
\item It demonstrates, to the best of our knowledge for the first time, that system identification can infer the behavior of swarms of physical robots;
\item It examines in detail the usefulness of the classifiers;
\item It examines through simulation how \textit{Turing Learning} can simultaneously infer the agent's brain (controller) and \textcolor{black}{an aspect of its} morphology that determines the agent's field of view;
\item \textcolor{black}{It demonstrates through simulation that \textit{Turing Learning} can infer the behavior even if the agent's control system structure is unknown.}
\end{itemize}

This paper is organized as follows. Section~\ref{sec:related_work} discusses related work. Section~\ref{sec:methodology} describes \textit{Turing Learning} and the general methodology of the two case studies. Section~\ref{sec:results_simulation} investigates the ability of \textit{Turing Learning} to infer two behaviors of swarms of simulated robots. It also presents a mathematical analysis, proving that these behaviors cannot be inferred by a metric-based system identification method. Section~\ref{sec:physical_implementation} presents a real-world validation of~\textit{Turing Learning} with a swarm of physical robots. Section~\ref{sec:conclusion} concludes the paper. 

\section{Related work}\label{sec:related_work}
\textcolor{black}{This section is organized as follows. First, we outline our previous work on \textit{Turing Learning}, and review a similar line of research, which has appeared since its publication. As the \textit{Turing Learning} implementation uses coevolutionary algorithms, we then overview work using coevolutionary algorithms (but with predefined metrics), as well as work on the evolution of physical systems. Finally, works using replicas in ethological studies are presented.}

\textit{Turing Learning} is a system identification method that simultaneously optimizes a population of models and a population of classifiers. The objective for the models is to be indistinguishable from the system under investigation. The objective for the classifiers is to distinguish between the models and the system. The idea of \textit{Turing Learning} was first proposed in~\citep{Li-etal2013:proc_gecco}; this work presented a coevolutionary approach for inferring the behavioral rules of a single agent. The agent moved in a simulated, one-dimensional environment. Classifiers were rewarded for distinguishing between the models and the agent. In addition, they were able to control the stimulus that influenced the behavior of the agent. This allowed the classifiers to interact with the agent during the learning process. 
\textit{Turing Learning} was subsequently investigated with swarms of simulated robots~\citep{Li-etal2014:proc_gecco}.

\citet{Goodfellow:NIPS:2014} proposed \textit{generative adversarial nets} (GANs). GANs, while independently invented, are essentially based on the same idea as \textit{Turing Learning}. The authors
used GANs to train models for generating counterfeit images that resemble real images, for example, from the Toronto Face Database (for further examples, see~\citep{Radford-etal:iclr2016}).
They simultaneously optimized a generative model \todo{(producing counterfeit images)} and a discriminative model that estimates the probability of an image to \todo{be real}. \todo{The optimization was done using a stochastic gradient descent method}. 

In a work reported in \citep{Herbert-Read2015}, humans were asked to discriminate between the collective motion of real and simulated fish. The authors reported that the humans could do so even though the data from the model were consistent with the real data according to predefined metrics. Their results ``highlight a limitation of fitting detailed models to real-world data''. They argued that ``observational tests [...] could be used to cross-validate models'' (see also~\citet{Harel2005:nat_biotechnol}). This is in line with \textit{Turing Learning}. Our method, however,  automatically generates both the models and the classifiers, and thus does not require human observers.

While \textit{Turing Learning} can in principle be used with any optimization algorithm, our implementation relies on coevolutionary algorithms. Metric-based coevolutionary algorithms have already \todo{proven} effective for system identification~\citep{ Bongard_remote_robot_2004,Bongard_function_recovery_2004, Bongard2005,Bongard2007PNAS,Koos2009,Mirm2011,Ly2014}. A range of work has been performed on simulated agents. \citet{Bongard_remote_robot_2004} proposed the \emph{estimation-exploration algorithm}, a nonlinear system identification method to coevolve inputs and models in a way that minimizes the number of inputs to be tested on the system. In each generation, the input (test) that led, in simulation, to the highest disagreement between the models' predicted outputs was carried out on the real system. The models' predictions were then compared with 
the actual output of the system. The method was applied to evolve morphological parameters of a simulated quadrupedal robot after it had undergone physical damage. In a later work~\citep{Bongard_function_recovery_2004}, the authors reported that ``in many cases the simulated robot would exhibit wildly different behaviors even when it very closely approximated the damaged `physical' robot. This result is not surprising due to the fact that the robot is a highly coupled, non-linear system: Thus similar initial conditions [...] are expected to rapidly diverge in behavior over time''. \todo{The authors} addressed this problem by using a more refined comparison metric reported in~\citep{Bongard_function_recovery_2004}. In~\citep{Koos2009}, an algorithm which also coevolves models and inputs (tests) was presented to model a simulated quadrotor and improve the control quality. The tests were selected based on multiple criteria:
to provide disagreement between models as in~\citep{Bongard_remote_robot_2004}, and to evaluate the control quality in a given task. Models were then refined by comparing the predicted trajectories with those of the real system. In these works, predefined metrics were critical for evaluating the performance of models. Moreover, the algorithms are not applicable to the scenarios we consider here, as the system's inputs are assumed to be unknown
(the same would typically also be the case for biological systems). 

Some studies also investigated the implementation of evolution directly in physical environments, on either a single robot~\citep{Floreano1996, Zykov2004, Bongard-etal2006:science, Koos2013, Cully2015} or multiple robots~\citep{Watson2002, Alan2014, Heinerman:GECCO:2015}. In~\citep{Bongard-etal2006:science}, a four-legged robot was built to study how it can infer its own morphology through a process of continuous self-modeling. The robot ran a coevolutionary algorithm on its onboard processor. One population evolved models for the robot's morphology, while the other evolved actions (inputs) to be conducted on the robot for gauging the quality of these models. Note that this approach required knowledge of the robot's inputs (sensor data). \citet{Alan2014} presented a distributed approach to coevolve onboard simulators and controllers for a swarm of ten robots. Each robot used its simulators to evolve controllers for performing foraging behavior.
The best performing controller was then used to control the physical robot.
The foraging performances of the robot and of its neighbors were then compared to inform the evolution of simulators.
This physical/embodied evolution helped reduce the \textit{reality gap} between the simulated and physical environments~\citep{Jakobi95}. In all these approaches, the model optimization was based on predefined metrics (explicit or implicit).
  
The use of replicas can be found in ethological studies in which researchers use robots that interact with animals~\citep{ Vaughan2000, J.Halloy2007, Faria2010, Halloy2013, Thomas2013}. Robots can be created and systematically controlled in such a way that they are accepted as conspecifics or heterospecifics by the animals in the group~\citep{Krause2011}. For example,  in~\citep{Faria2010}, a replica fish, which resembled sticklebacks in appearance, was created to investigate two types of interaction: recruitment and leadership. In~\citep{J.Halloy2007}, autonomous robots, which executed a model, were mixed into a group of cockroaches to modulate their decision-making of selecting a shelter. The robots behaved in a similar way to the cockroaches. Although the robots' appearance was different from that of the cockroaches, the robots released a specific odor such that the cockroaches would \todo{perceive} them as conspecifics. In these works, the models were manually derived and the robots were only used for model validation. We believe that this robot-animal interaction framework could be enhanced through \textit{Turing Learning}, which autonomously infers the collective behavior.

\section{\todo{Methodology}}\label{sec:methodology}

In this section, we present the~\textit{Turing Learning} method and \todo{show how it can be applied to two case studies in swarm robotics.}

\subsection{Turing learning}\label{sec:turing_learning}

\textit{Turing Learning} is a system identification method for inferring \textcolor{black}{the} behavior of natural or artificial systems. 
\textit{Turing Learning} needs data samples of the behavior---we refer to these data samples as \textit{genuine}. For example, if the behavior of interest \textcolor{black}{were} to shoal like fish, genuine data samples could be trajectory data from fish. If the behavior \textcolor{black}{were} to \textcolor{black}{produce paintings} in a particular style (e.g., Cubism), genuine data samples could be existing paintings \textcolor{black}{in} this style.

\textit{Turing Learning} simul\-ta\-neous\-ly optimizes two populations of computer programs, one representing \textit{models} of the behavior, the other representing \textit{classifiers}. 
The purpose of the models is to imitate the behavior of the system under investigation. 
The models are used to produce data samples---we refer to these data samples as \textit{counterfeit}. There are thus two sets of data samples: one containing genuine data samples, the other containing counterfeit ones. 
The purpose of the classifiers is to discriminate between these two sets. 
Given a data sample, the classifiers need to judge where it comes from.
Is it genuine, and thus originating from the system under investigation? Or is it counterfeit, and thus originating from a model?
This setup is akin of a Turing test; hence the name \textit{Turing Learning}.

The models and classifiers are competing. The models are rewarded for `tricking' the classifiers into categorizing their data samples as genuine, 
whereas the classifiers are rewarded for correctly categorizing data samples as either genuine or counterfeit. \textit{Turing Learning} thus optimizes models for producing behaviors that are seemingly genuine, in other words, \textit{indistinguishable} from the behavior of interest. This is in contrast to other system identification methods, which optimize models for producing behavior that is \textit{as similar as possible} to the behavior of interest. The Turing test inspired setup allows for model generation \todo{irrespective of whether suitable similarity metrics are known.}

The model can be any computer program that can be used to produce data samples.
It must however be expressive enough to produce data samples that---from an observer's per\-spec\-tive---are indistinguishable from those of the system. 

The classifier can be any computer program that takes a sequence of data as input and produces a binary output. The classifier must be fed with sufficient information about the behavior of the system. If it has access to only a subset of the behavioral information, any system characteristic not influencing this subset cannot be learned. In principle, classifiers with non-binary
outputs (e.g., probabilities or confidence levels) could also be considered.

\begin{algorithm}[t]
  \caption{Turing Learning 
    \label{alg:turing_learning}}
  \begin{algorithmic}[1]
    \Procedure{\textsc{Turing learning}}{}
     \State initialize population of $M$ models and population of $N$ classifiers
      \While{termination criterion not met}
		 \ForAll {classifiers $i \in \{1, 2, \ldots, N\}$}
		  \State obtain genuine data samples (system, \textcolor{black}{classifier $i$}) 
		  \State for each sample, obtain and store output of classifier $i$

		 	\ForAll {models $j \in \{1, 2, \ldots, M\}$}
		 	 \State obtain counterfeit data samples (model $j$, \textcolor{black}{classifier $i$}) 
		 	 \State	for each sample, obtain and store output of classifier $i$
		 	\EndFor
		 	
		\EndFor
		
                \State \todo{reward models ($r_m$)} for misleading classifiers (classifier outputs)
                \State \todo{reward classifiers ($r_c$)} for making correct judgements (classifier outputs)
	   \State improve model and classifier populations based \todo{on $r_m$ and $r_c$}
      \EndWhile
    \EndProcedure
  \end{algorithmic}
\end{algorithm}

Algorithm~\ref{alg:turing_learning} provides a description of \textit{Turing Learning}. We assume a population of \textcolor{black}{$M > 1$} models and a population of \textcolor{black}{$N > 1$} classifiers. \todo{After an initialization stage,
\textit{Turing Learning} proceeds in an iterative manner until a termination criterion is met.}

In each iteration cycle, data samples are obtained from observations of both the system and the models.   
In the case studies considered here, the classifiers do not influence the sampling process. Therefore, the same set of data samples is provided to all classifiers of an iteration cycle.\footnote{In general, the classifiers may influence the sampling process. In this case, independent data samples should be generated for each classifier. In particular, the classifiers could change the stimuli that influence the behavior of the system under investigation. This would enable a classifier to interact with the system by choosing the conditions under which the behavior is observed~\citep{Li-etal2013:proc_gecco}. The classifier could then extract hidden information about the system, which may not be revealed through passive observation alone~\citep{LiThesis2016}.}
For simplicity, we assume that each of the $N$ classifiers is provided with \textcolor{black}{$K \geq 1$} data samples for the system and with one data sample for every model. 

A model's quality is determined by its ability of misleading classifiers to judge its data samples as genuine. Let $m_{ij}=1$ if classifier $i$ \textcolor{black}{wrongly} classified the data sample of model $j$, and $m_{ij}=0$ otherwise. 
The quality of model $j$ is then given by:
\begin{equation}\label{eq:model_fitness_definition}
{r_m(j)} = \frac{1}{N} \sum \limits_{i=1}^{N} m_{ij}.
\end{equation}

A classifier's quality is determined by how well it judges data samples from both the system and its models. 
The quality of classifier $i$ is given by:
\begin{equation}\label{eq:classifier_fitness_definition}
{r_c(i)} = \frac{1}{2} (\mbox{specificity}_{i}+\mbox{sensitivity}_{i}). 
\end{equation}

\textcolor{black}{The \textit{specificity} of a classifier (in statistics, also called the true-negative rate) denotes the percentage of genuine data samples that it correctly identified as such.} Formally,
\begin{equation}\label{eq:classifier_specificit_definition}
\mbox{specificity}_{i} = \frac{1}{K} \sum \limits_{k=1}^{K} a_{ik},
\end{equation}
where, $a_{ik} = 1$ if classifier $i$ \textcolor{black}{correctly} classified the $k$th data sample of the system, and $a_{ik} = 0$ otherwise.

\textcolor{black}{The \textit{sensitivity} of a classifier (in statistics, also called the true-positive rate) denotes the percentage of counterfeit data samples that it correctly identified as such.} Formally,
\begin{equation}\label{eq:classifier_sensitivity_definition}
\mbox{sensitivity}_{i} = \frac{1}{M} \sum \limits_{j=1}^{M} (1-m_{ij}).
\end{equation}



Using the solution qualities, $r_m$ and $r_c$, the model and classifier populations are improved. In principle, any population-based optimization method can be used.

\subsection{Case studies}\label{sec:case_studies}

In the following, we examine the ability of \textit{Turing Learning} to infer the behavioral rules of swarming \textit{agents}. The swarm is assumed to be homogeneous; it comprises a set of identical agents of known capabilities. The identification task thus reduces to inferring the behavior of a single agent. The agents are robots, either simulated or physical. The agents have inputs (corresponding to sensor reading values) and outputs (corresponding to motor commands). The input and output values
are not known. However, the agents are observed and their motion trajectories are recorded. The trajectories are provided to \textit{Turing Learning} using a reactive control architecture~\citep{Brooks1991:artif_int}. Evidence indicates that reactive behavioral rules are sufficient to produce a range of complex collective behaviors in both groups of natural and artificial agents~\citep{Braitenberg1984,ArkinBehBasedRob1986,Camazine03}. Note that although reactive architectures are conceptually simple, learning their parameters is not trivial if the agent's inputs are not available, as is the case in our problem setup. In fact, as shown in Section~\ref{sec:metric-based_EA}, a conventional (metric-based) system identification method fails in this respect.


\subsubsection{\todo{Agents}}\label{sec:problem_formulation}

\begin{figure}
	\centering
	\includegraphics[width=2.5in]{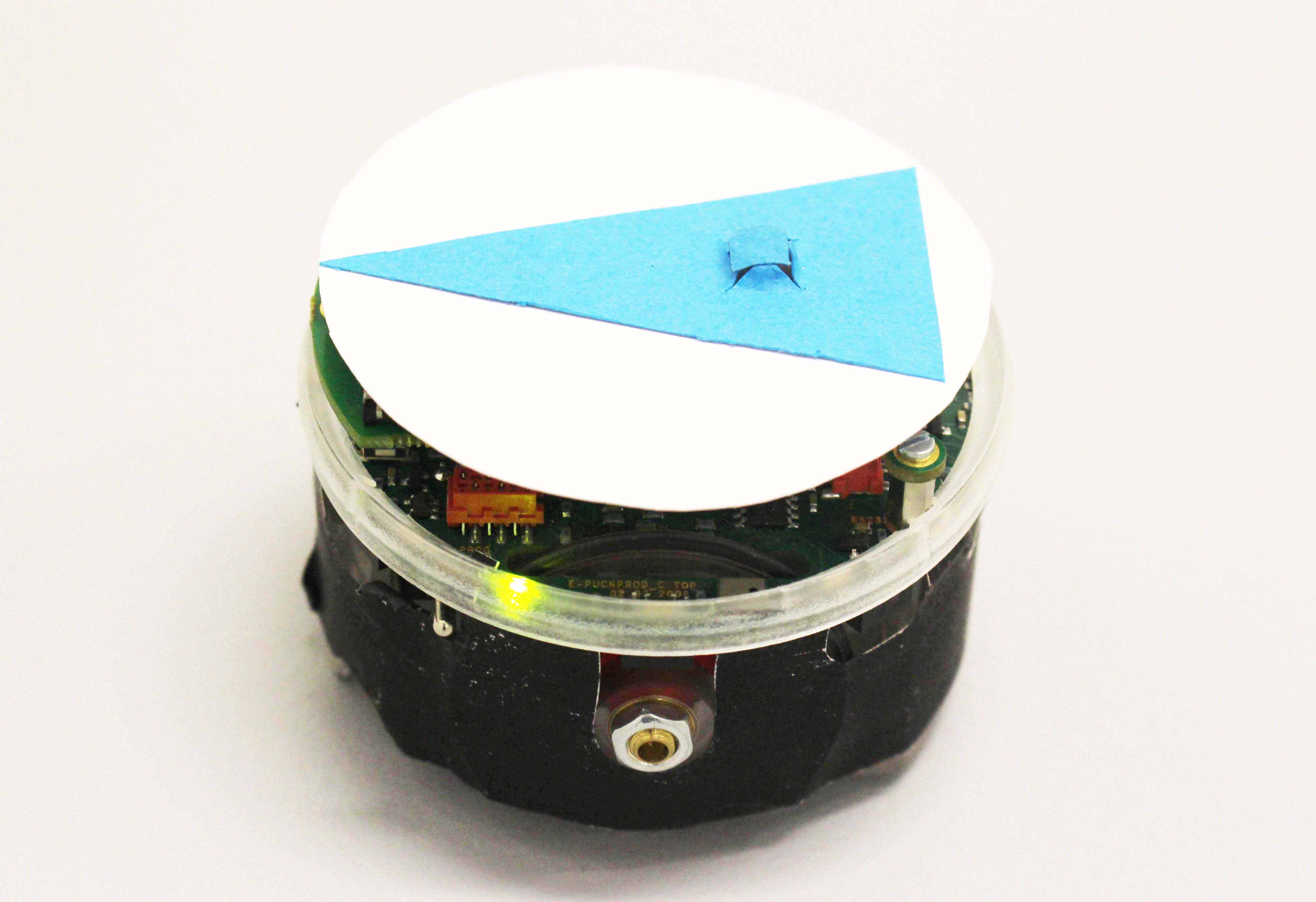}  
	\caption{An e-puck robot fitted with a black `skirt' and a top marker for motion tracking.}
	\label{fig:e-puck_body}
\end{figure}

The \todo{agents move} in a two-dimensional, continuous space. \todo{They are differential-wheeled robots. The speed of each wheel can be independently set to $\left[-1,1\right]$, where $-1$ and $1$ correspond to the wheel rotating backwards and forwards, respectively, with maximum speed. Fig.~\ref{fig:e-puck_body} shows the agent platform, the e-puck~\citep{e-puck}, which is used in the experiments.}

Each agent is equipped with a line-of-sight sensor that \todo{detects the type of item in front of it}. We assume that there are $n$ types (e.g., background, other agent, object~\citep{Gauci2014_ijrr, Melvin2014_aamas}). The state of the sensor is denoted by $I\in\{0,1,\ldots,n-1\}$.


\textcolor{black}{Each agent implements a reactive behavior by mapping} the input ($I$) onto the outputs, that is, a pair of predefined speeds for the left and right wheels,
$(v_{\ell I}, v_{rI})$, $v_{\ell I}, v_{rI} \in \left[-1,1\right]$.
Given $n$ sensor states, \todo{the mapping} can be represented using $2n$ system parameters, which we denote as:
\begin{equation}\label{controller:form}
\mathbf{p} = (v_{\ell 0}, v_{r0}, v_{\ell1}, v_{r1}, \cdots, v_{\ell (n-1)}, v_{r (n-1)}).
\end{equation}

Using $\mathbf{p}$, any reactive behavior for the above agent can be expressed. In the following, we \todo{consider} two example behaviors \todo{in detail}. 

\textbf{Aggregation:}\label{sec:aggregation_behavior} 
In this behavior, the sensor is binary, that is, $n=2$. It gives a reading of $I=1$ if there is an agent in the line of sight, and $I=0$ otherwise. The environment is free of obstacles. The objective of the agents is to aggregate into a single compact cluster as fast as possible. Further details, including a validation with 40 physical e-puck robots, are reported in~\citep{Gauci2014_ijrr}. 

The aggregation controller was found by performing a grid search over the space of possible controllers~\citep{Gauci2014_ijrr}. The controller exhibiting the highest performance was:
\begin{equation}\label{eq:aggregation_optimal_controller}
\mathbf{p} = \left(-0.7, -1.0, 1.0, -1.0\right). 
\end{equation}

\captionsetup[subfigure]{labelformat=empty}  
\begin{figure}
	\centering
	\subfloat[initial configuration]{  
		\includegraphics[width = 1.1 in]{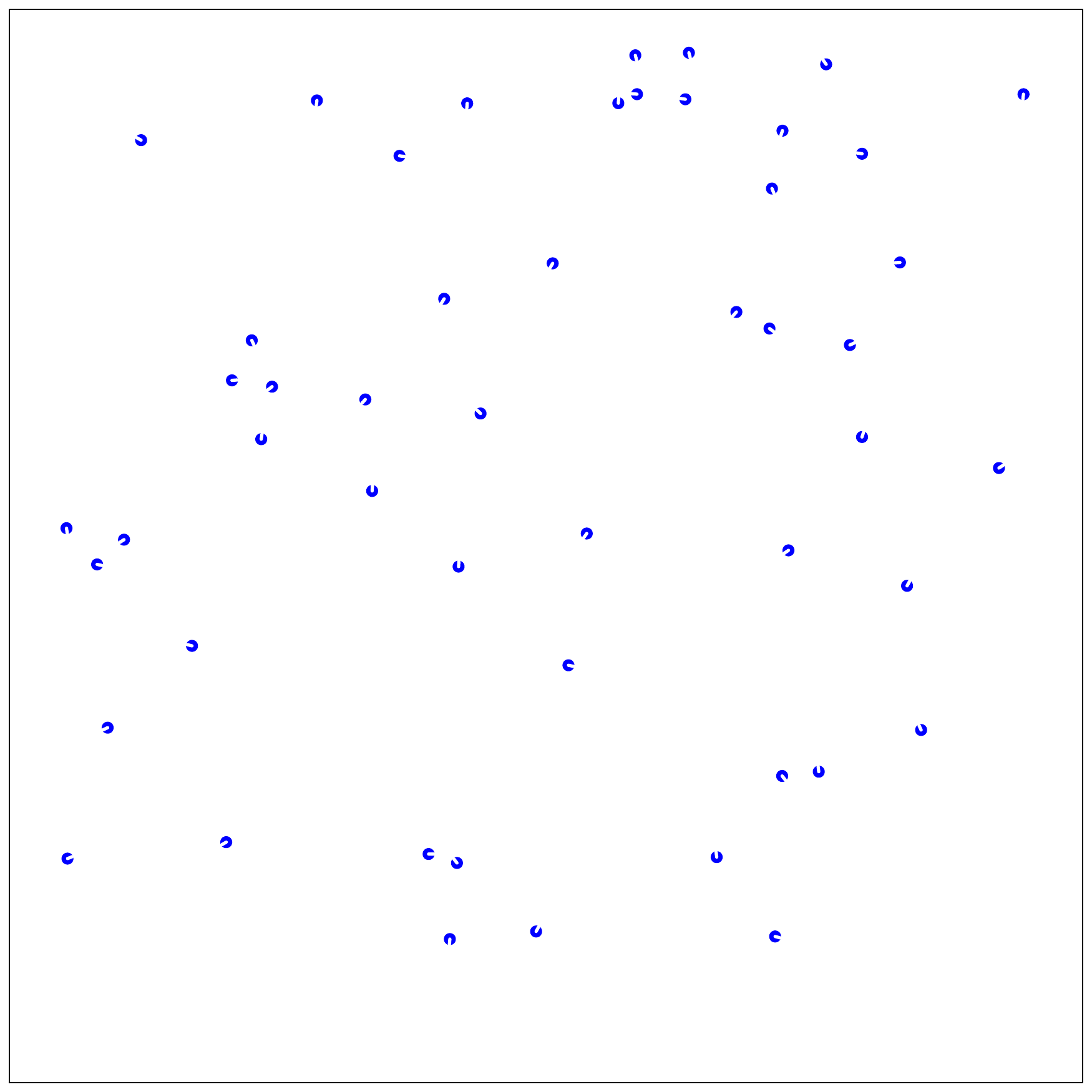}  
	}
	\subfloat[after $60$ $\unit{s}$]{
		\includegraphics[width = 1.1 in]{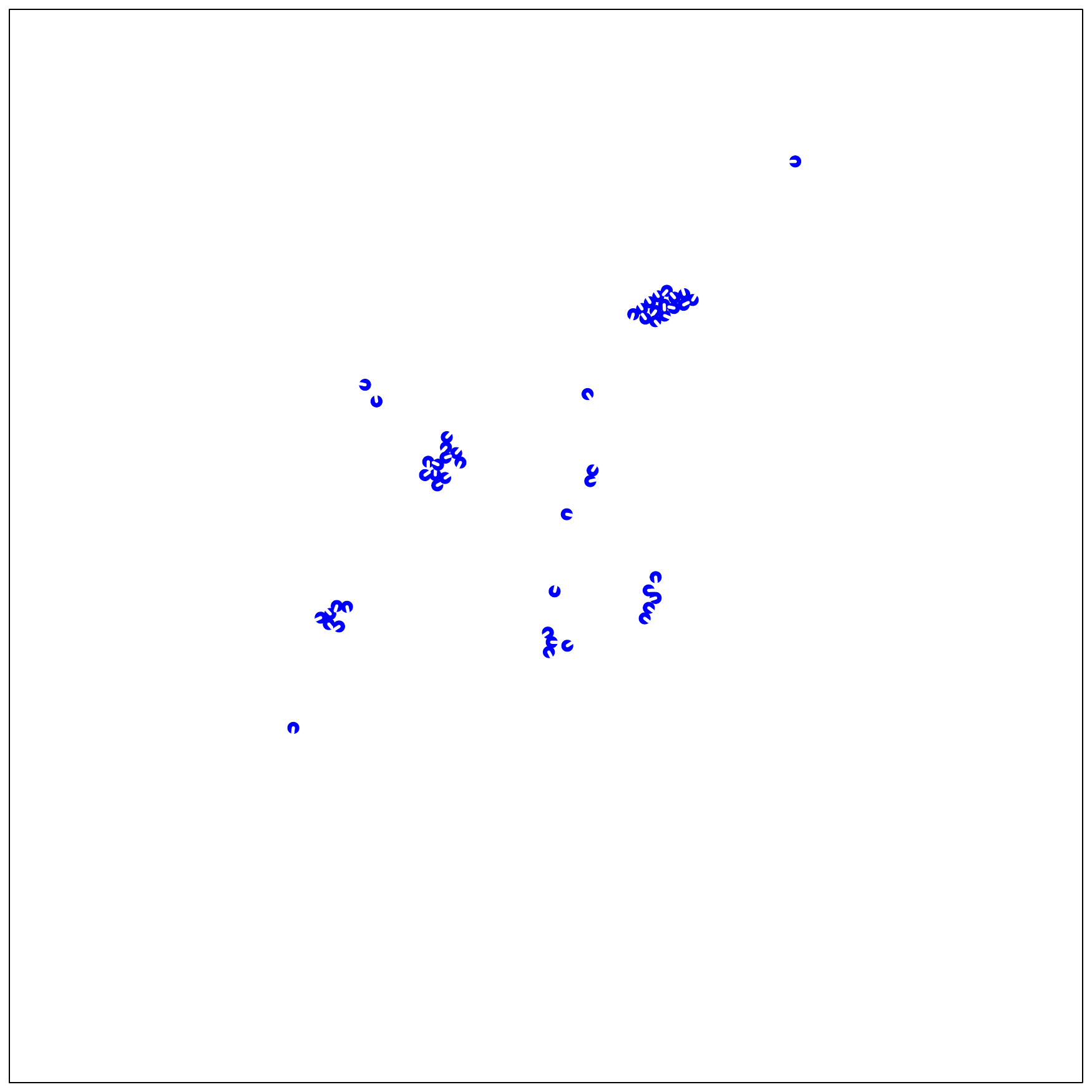}
	}
	\subfloat[after $180$ $\unit{s}$]{
		\includegraphics[width = 1.1 in]{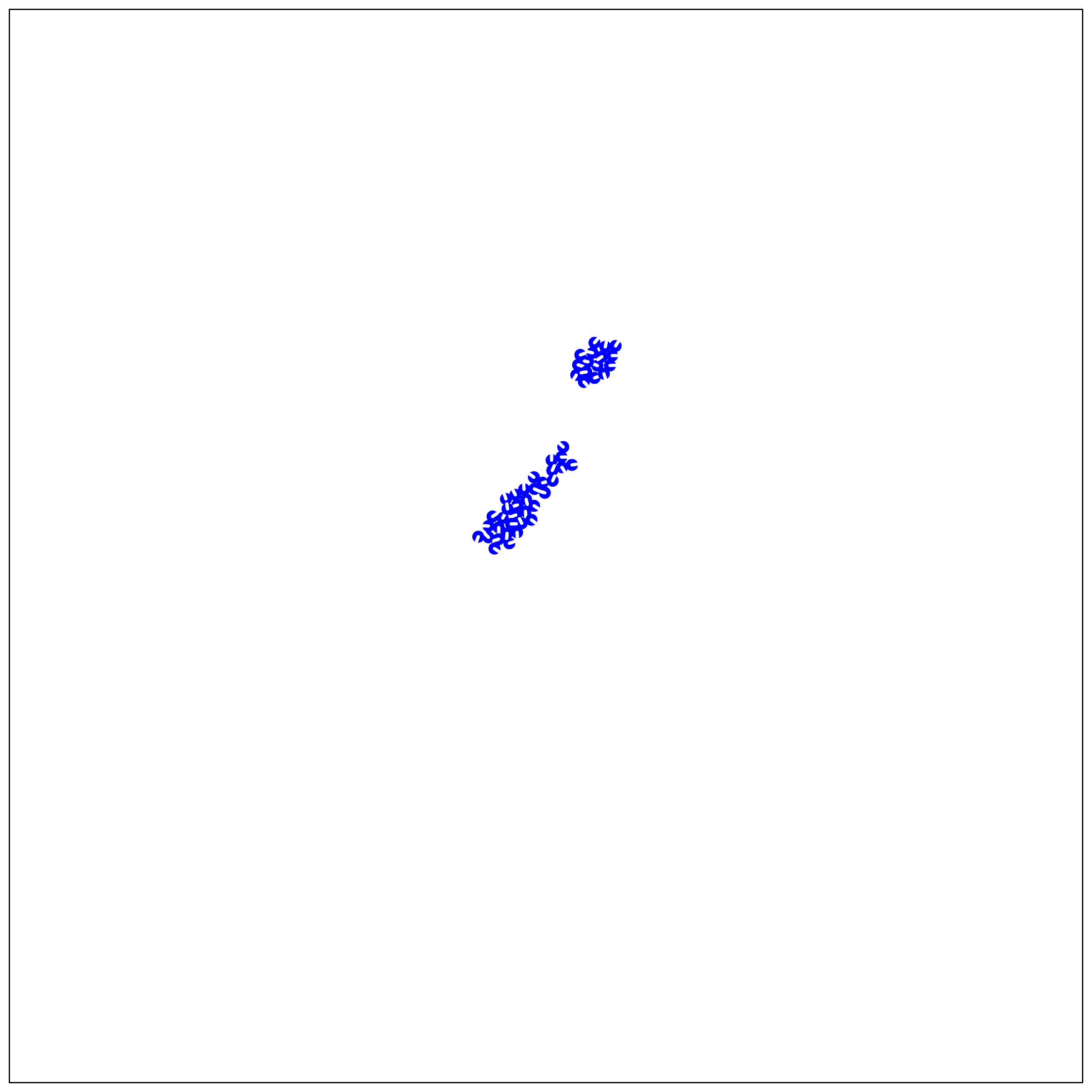}
	}
	\subfloat[after $300$ $\unit{s}$]{
		\includegraphics[width = 1.1 in]{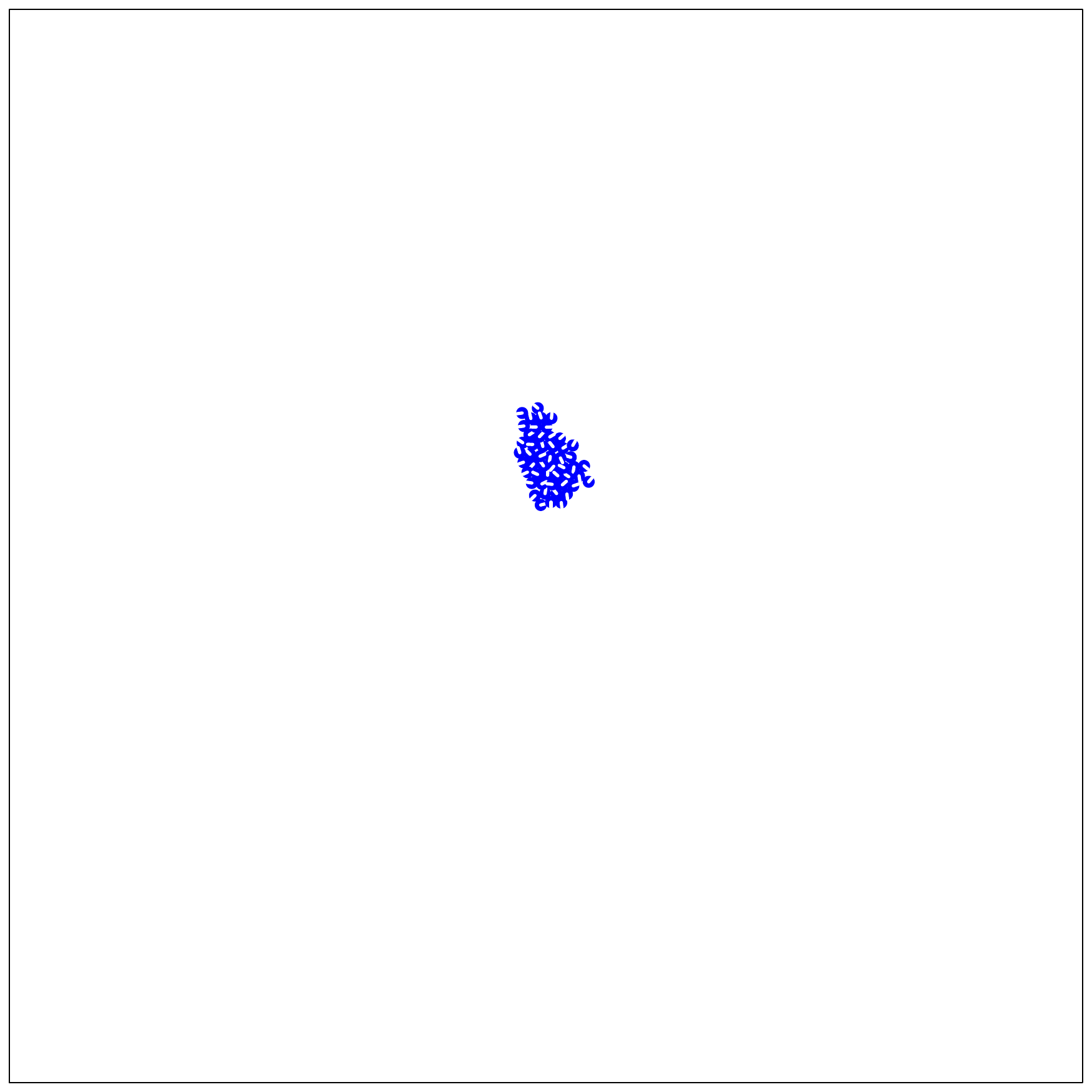}
	}
	\caption{Snapshots of the aggregation behavior of $50$ agents in simulation. }
	\label{fig:aggregation_snapshoot}
\end{figure}

When $I=0$, an agent moves backwards along a clockwise circular trajectory ($v_{\ell0} = -0.7$ and $v_{r0} = -1.0$). When $I=1$, an agent rotates clockwise on the spot with maximum angular speed ($v_{\ell1} = 1.0$ and $v_{r1} = -1.0$). Note that, rather counterintuitively, an agent never moves forward, regardless of $I$. With this controller, an agent provably aggregates with another agent or with a quasi-static cluster of agents~\citep{Gauci2014_ijrr}. Fig.~\ref{fig:aggregation_snapshoot} shows snapshots from a simulation trial with $50$ agents.

\textbf{Object Clustering:}  
In this behavior, the sensor is ternary, that is, $n=3$. It gives a reading of $I=2$ if there is an agent in the line of sight, $I=1$ if there is an object in the line of sight, and $I=0$ otherwise. 
The objective of the agents is to arrange the objects into a single compact cluster as fast as possible. Details of this behavior, including a validation using 5 physical e-puck robots and 20 cylindrical objects, are presented in~\citep{Melvin2014_aamas}.

The controller's parameters, found using an evolutionary algorithm~\citep{Melvin2014_aamas}, are:
\begin{equation}\label{eq:clustering_optimal_controller}
\mathbf{p} = \left( 0.5, 1.0, 1.0, 0.5, 0.1, 0.5 \right).
\end{equation} 

\captionsetup[subfigure]{labelformat=empty}  
\begin{figure}
	\centering
	\subfloat[initial configuration]
	{
		\includegraphics[width = 1.1 in]{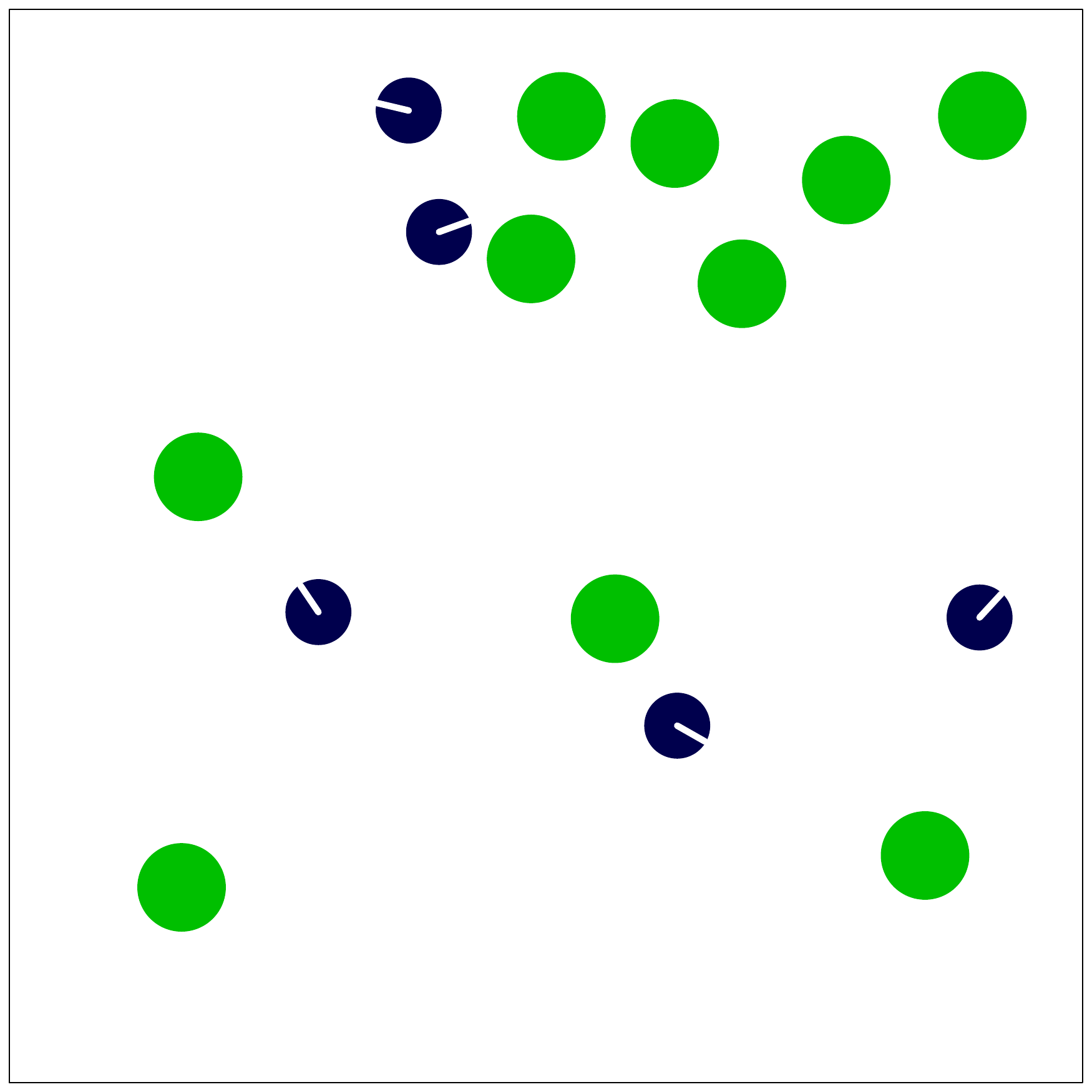}  
	}
	\subfloat[after $20$ $\unit{s}$]{
		\includegraphics[width = 1.1 in]{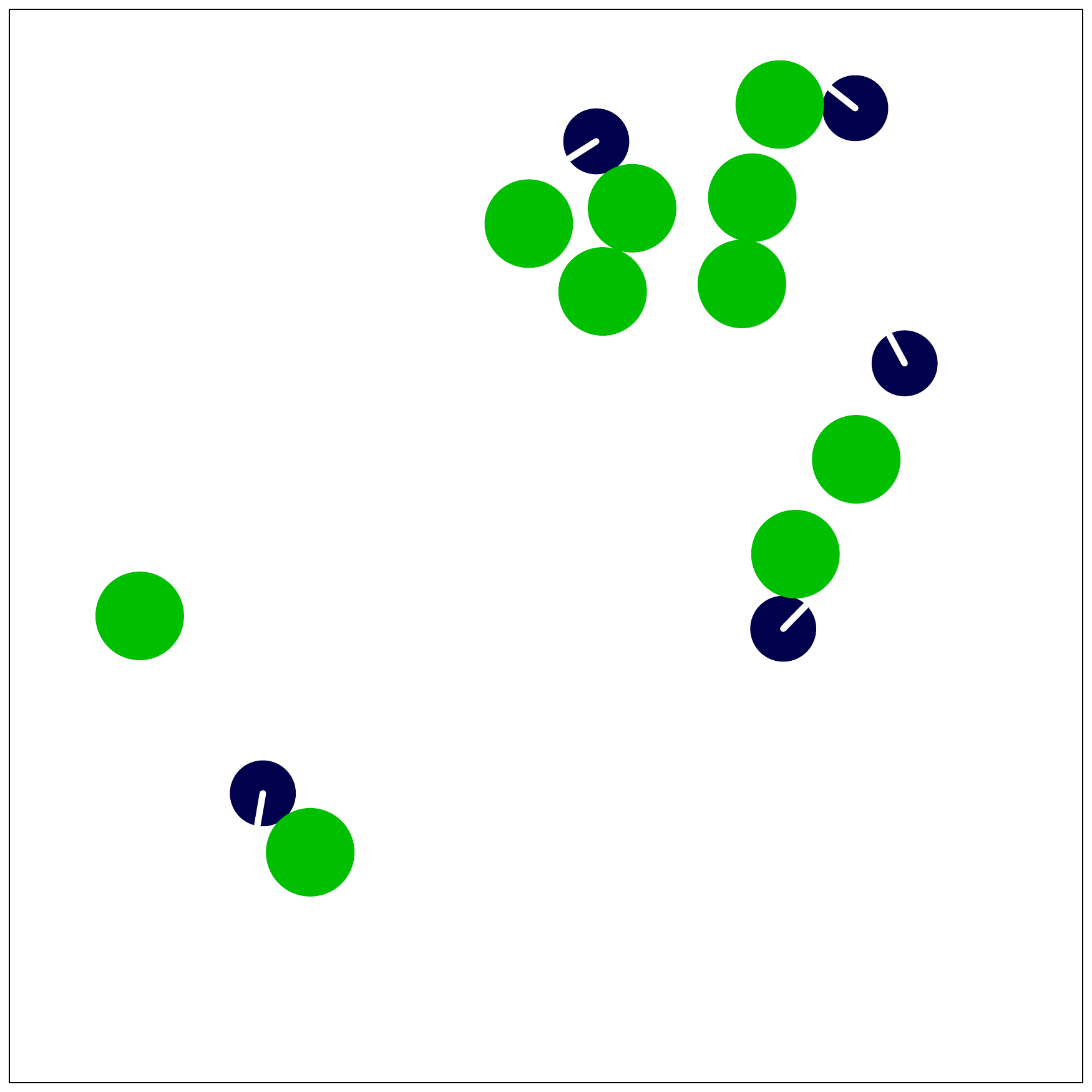}
	}
	\subfloat[after $40$ $\unit{s}$]{
		\includegraphics[width = 1.1 in]{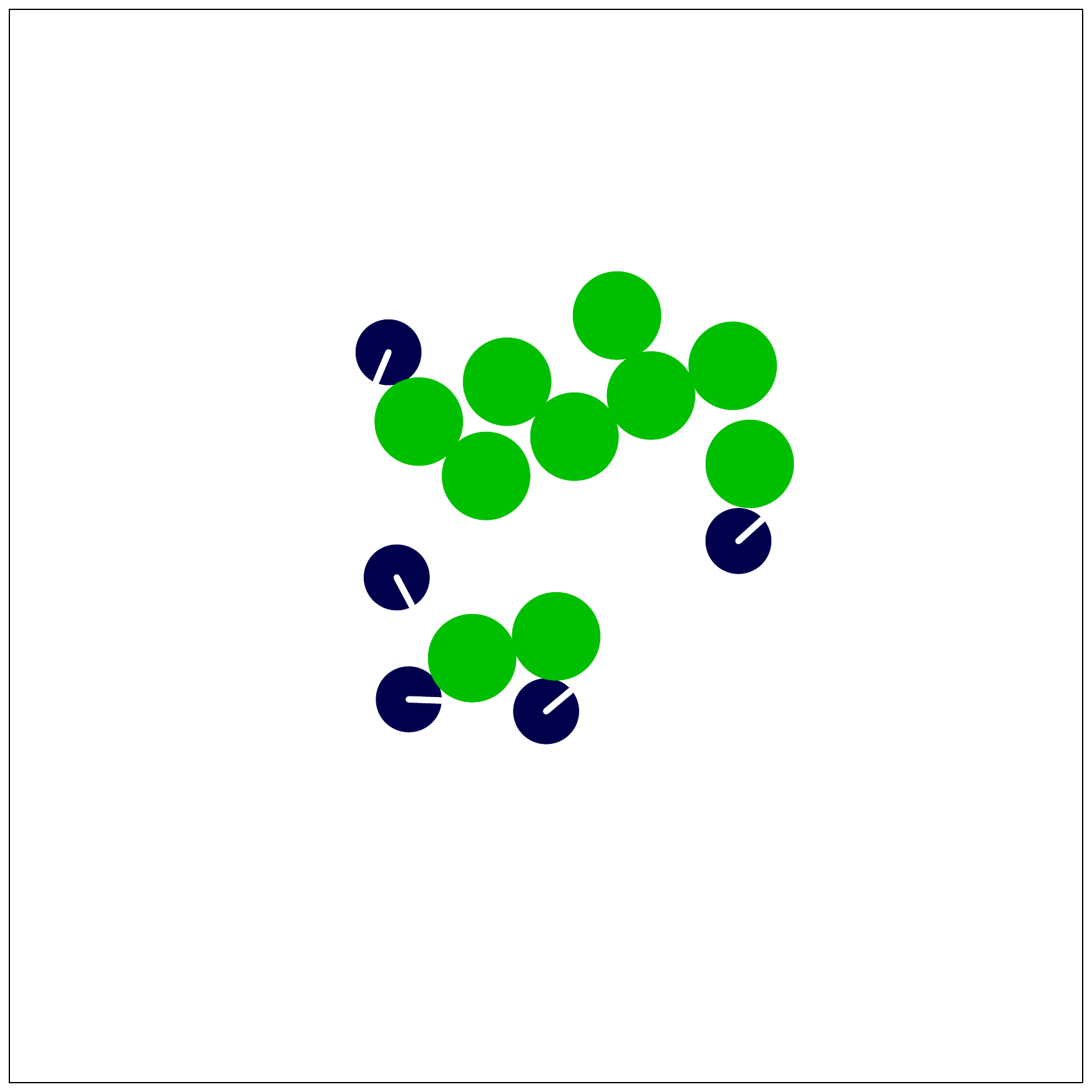}
	}
	\subfloat[after $60$ $\unit{s}$]{
		\includegraphics[width = 1.1 in]{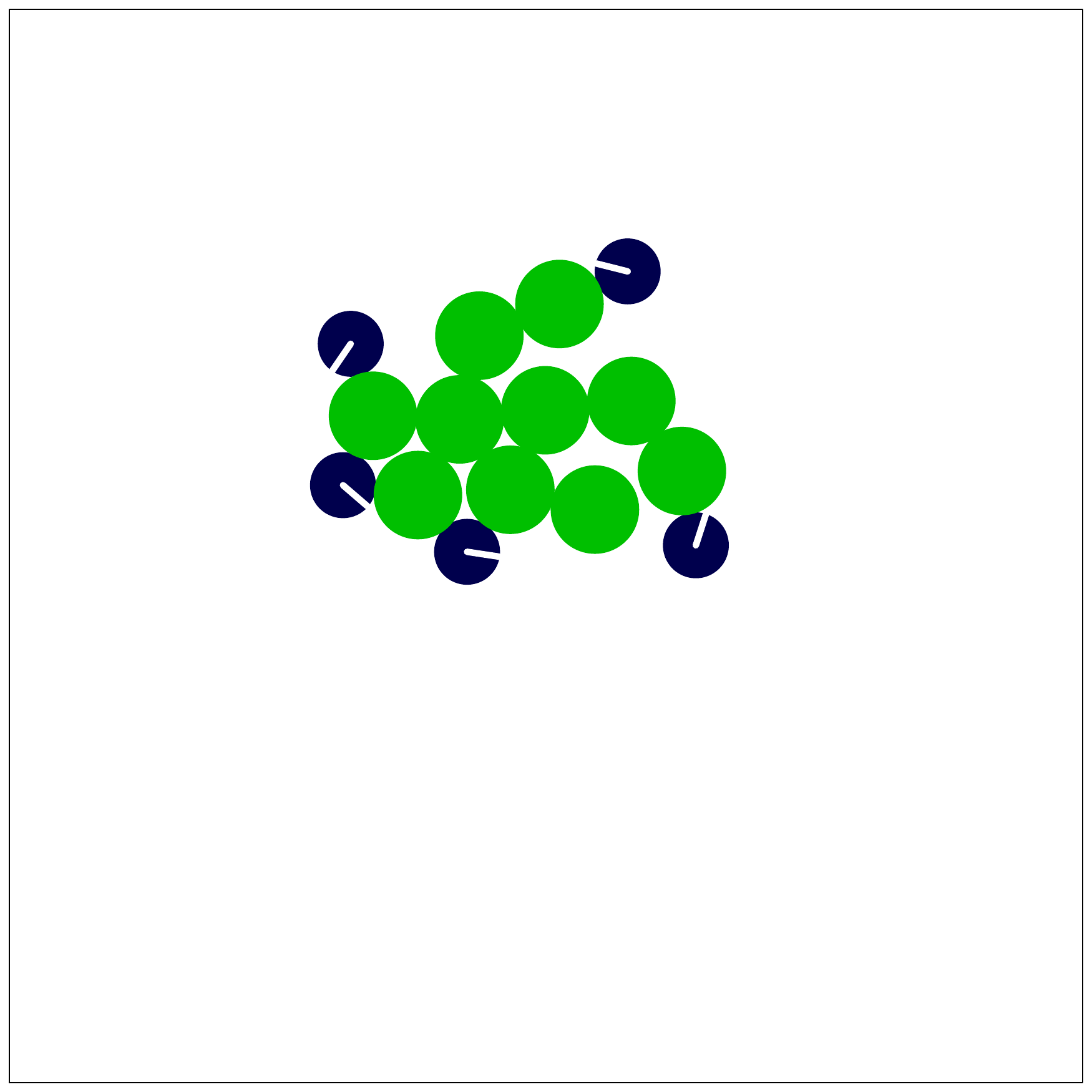}
	}\\
	\caption{Snapshots of the object clustering behavior in simulation. There are $5$ agents (dark blue) and 10 objects (green).}
	\label{fig:clustering_snapshoot}
\end{figure}

When $I=0$ and $I=2$, the agent moves forward along a counterclockwise circular trajectory, but with different linear and angular speeds. When $I=1$, it moves forward along a clockwise circular trajectory. Fig.~\ref{fig:clustering_snapshoot} shows snapshots from a simulation trial with $5$ agents and $10$ objects.

\subsubsection{Models and replicas} 



\textcolor{black}{We assume the availability of \textit{replicas}, which must have the potential to produce data samples that---to an external observer (classifier)---are indistinguishable from those of the agent. In our case, the replicas have the same morphology as the agent, including identical line-of-sight sensors and differential drive mechanisms.\footnote{In Section~\ref{sec:evolving_control_and_morphology}, we show that this assumption can be relaxed by also inferring some aspect of the agent's morphology.}} 

\textcolor{black}{The replicas execute behavioral rules defined by the model. We adopt two model representations: gray box and black box.} In both cases, note that the classifiers, which determine the quality of the models, have no knowledge about the \textcolor{black}{agent/model} representation or the \textcolor{black}{agent/model} inputs.
\begin{itemize}
\item \textcolor{black}{In a gray box representation, the agent's control system structure is assumed to be known. In other words, the model and the agent share the same control system structure, as defined in~Eq.~\eqref{controller:form}. This representation reduces the complexity of the identification process, in the sense that only the parameters of Eq.~\eqref{controller:form} need to be inferred. Additionally, this allows for an objective evaluation of how well the identification process performs, because one can compare the inferred parameters directly with the ground truth.} 
\item \textcolor{black}{In a black box representation, the agent's control system structure is assumed to be unknown, and the model has to be represented in a general way. In particular, we use a control system structure with memory, in the form of a neural network with recurrent connections (see Section~\ref{sec:evolve_neural_network_model}).}
\end{itemize}
 

\textcolor{black}{The replicas can be mixed into a group of agents or separated from them.} By default, we consider the situation that one or multiple replicas are mixed into a group of agents. The case of studying groups of agents and groups of replicas in isolation is investigated in Section~\ref{sec:separating_replicas_agents}.


\subsubsection{Classifiers}


The classifiers need to discriminate between data samples originating from the agents and ones originating from the replicas. We use the term \textit{individual} to refer to either the agent or a replica executing a model.

\todo{A data sample comes from the motion trajectory of an individual \finaltodo{observed} for the duration of a trial. \todo{We assume that it is possible to track both the individual's position and orientation. The sample comprises the} linear speed ($s$) and angular speed ($\omega$).\footnote{We define the linear speed to be positive when the angle between the individual's orientation and its direction of motion is smaller than $\unit[\pi / 2]{rad}$, and negative otherwise.} \textcolor{black}{Full details (e.g., trial duration) are provided in Sections 4.2 and 5.3 for the cases of simulation and physical experiments respectively.}}

\todo{The classifier is represented} as an Elman neural network~\citep{Elman1990}. The network has $i=2$ inputs ($s$ and $\omega$), $h=5$ hidden neurons and one output neuron. 
Each neuron of the hidden and output layers has a bias. The network thus has a total of $(i+1) h + h^2 + (h+1) = 46$ parameters, which all assume values in $\mathbb{R}$. 
The activation function used in the hidden and the output neurons is the logistic sigmoid function, which has the range $\left(0,1\right)$ and is defined as: 
\begin{equation}\label{equ:logistic_sigmoid}
\textrm{sig}\,(x) = \frac{1}{1+e^{-x}}, \quad\forall x \in \mathbb{R}.
\end{equation}
\textcolor{black}{The data sample consists of a time series, which is fed sequentially into the classifier neural network.} The final value of the output neuron is used to make the judgment: \todo{model, if its value is less than $0.5$, and agent otherwise. 
The network's} memory (hidden neurons) is reset after each judgment.



\subsubsection{Optimization algorithm}\label{sec:optimization_algorithm}

The optimization of models and classifiers is realized using an evolutionary algorithm. We use a ($\mu+\lambda$) evolution strategy with self-adaptive mutation strengths~\citep{Eiben2003} to optimize either population. As a consequence, the optimization consists of two processes, one for the model population, and another for the classifier population. The two processes synchronize whenever the \todo{solution qualities} described in Section~\ref{sec:turing_learning} are computed. 
The implementation of the evolutionary algorithm is detailed in~\citep{Li-etal2013:proc_gecco}.

For the remainder of this paper, we adopt terminology used in evolutionary computing, and refer to the quality of solutions as their \textit{fitness} and to iteration cycles as \textit{generations}. \textcolor{black}{Note that in coevolutionary algorithms, each population's fitness depends on the performance of the other populations, and is hence referred to as the \textit{subjective} fitness. By contrast, the fitness measure as used in conventional evolutionary algorithms is referred to as the \textit{objective} fitness.}

\subsubsection{Termination criterion}

The algorithm stops after running for a fixed number of iterations. 

\section{Simulation experiments}\label{sec:results_simulation}
 
In this section, we present the simulation experiments for the two case studies. \textcolor{black}{ Sections~\ref{sec:simulation_platform} and~\ref{sec:simulation_setup} describe the simulation platform and setups. Sections \ref{sec:analysis_evolved_models} and \ref{sec:analysis_of_evolved_classifiers_simulation},  respectively, analyze the inferred models and classifiers. Section \ref{sec:metric-based_EA} compares \textit{Turing Learning} with a metric-based identification method and mathematically analyzes this latter method. Section~\ref{sec:generality_turing_learning} presents further results of testing the generality of \textit{Turing Learning} through exploring different scenarios, which include: (i) simultaneously inferring the control of the agent and an aspect of its morphology; (ii) using artificial neural networks as a model representation, thereby removing the assumption of a known agent control system structure; (iii) separating the replicas and the agents, thereby allowing for a potentially simpler experimental setup; and (iv) inferring arbitrary reactive behaviors.}

\subsection{Simulation platform}\label{sec:simulation_platform}

We use the open-source Enki library~\citep{Enki}, which models the kinematics and dynamics of rigid objects, and handles collisions. Enki has a built-in 2D model of the e-puck. The robot is represented as a disk of diameter $\unit[7.0]{cm}$ and mass $\unit[150]{g}$. The inter-wheel distance is $\unit[5.1]{cm}$. The speed of each wheel can be set independently. Enki induces noise on each wheel speed by multiplying the set value by a number in the range $(0.95, 1.05)$ chosen randomly with uniform distribution. The maximum speed of the e-puck is $\unit[12.8]{\textrm{cm/s}}$, forwards or backwards. The line-of-sight sensor is simulated by casting a ray from the e-puck's front and checking the first item with which it intersects (if any). The range of this sensor is unlimited in simulation. 

In the object clustering case study, we model objects as disks of diameter $\unit[10]{cm}$ with mass $\unit[35]{g}$ and a coefficient of static friction with the ground of $0.58$, which makes it movable by a single e-puck.

The robot's control cycle is updated every $\unit[0.1]{s}$, and the physics is updated every $\unit[0.01]{s}$.

\subsection{Simulation setups}\label{sec:simulation_setup}

In all simulations, we use an unbounded environment. For the aggregation case study, we use groups of $11$ individuals---$10$ agents and $1$ replica that executes a model. The initial positions of individuals are generated randomly in a square region of sides $\unit[331.66]{cm}$, following a uniform distribution (average area per individual = $\unit[10000]{cm^2}$). For the object clustering case study, we use groups of $5$ individuals---$4$ agents and $1$ replica that executes a model---and $10$ cylindrical objects. The initial positions of individuals and objects are generated randomly in a square region of sides $\unit[100]{cm}$, following a uniform distribution (average area per object = $\unit[1000]{cm^2}$). In both case studies, individual starting orientations are chosen randomly in $[-\pi,\pi)$ with uniform distribution. 

We performed 30 \todo{runs of \textit{Turing Learning}} for each case study. Each run lasted 1000 generations. The model and classifier populations each consisted of $100$ solutions ($\mu = 50$,  $\lambda = 50$). In each trial, classifiers observed individuals for $\unit[10]{s}$ at $\unit[0.1]{s}$ intervals ($100$ data points). \textcolor{black}{In both setups, the total number of samples for the agents in each generation was equal to $n_t \times n_a$, where $n_t$ is the number of trials performed (one per model) and $n_a$ is the number of agents in each trial.} 

\subsection{Analysis of inferred models}\label{sec:analysis_evolved_models}

In order to objectively measure the quality of the models obtained through~\textit{Turing Learning}, we define two metrics. Given a \todo{candidate model (candidate controller)} $\mathbf{x}$ and the \todo{agent (original controller)} $\mathbf{p}$, where $\mathbf{x}\in\mathbb{R}^{2n}$ and $\mathbf{p}\in[-1,1]^{2n}$, we define the absolute error (AE) in a particular parameter $i\in\{1,2,\dots,2n\}$ as: 
\begin{equation}\label{eq:AE}
\mathrm{AE}_i = |x_i-p_i|. 
\end{equation}

We define the mean absolute error (MAE) over all parameters as: 
\begin{equation}\label{eq:MAE}
\mathrm{MAE} = \frac{1}{2n}\sum_{i=1}^{2n} \mathrm{AE}_i.
\end{equation}

\begin{figure}[!t]
	\centering
		\subfloat[(a) Aggregation \label{fig:model_parameters_box_aggregation}]{%
			\includegraphics[width=2.25 in]{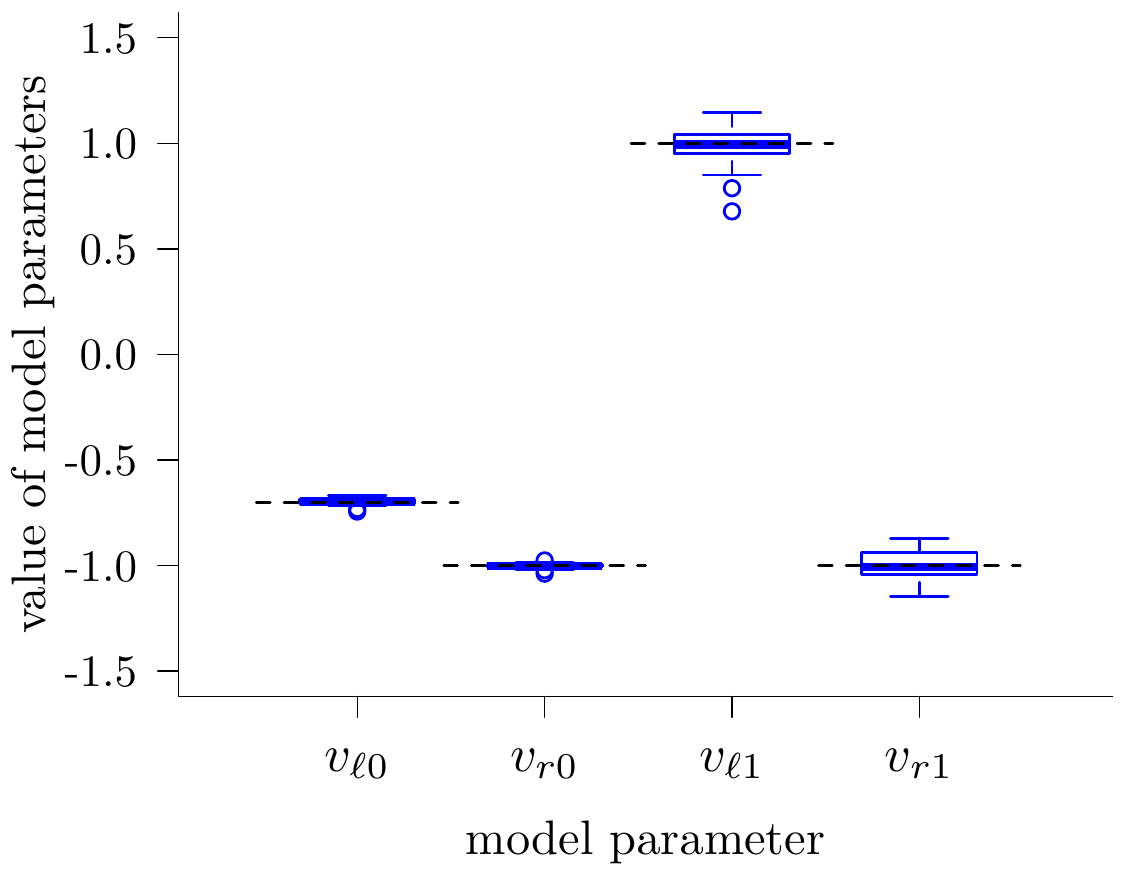}
		}
		\subfloat[(b) Object Clustering\label{fig:model_parameters_box_clustering}]{%
			\includegraphics[width=2.25 in]{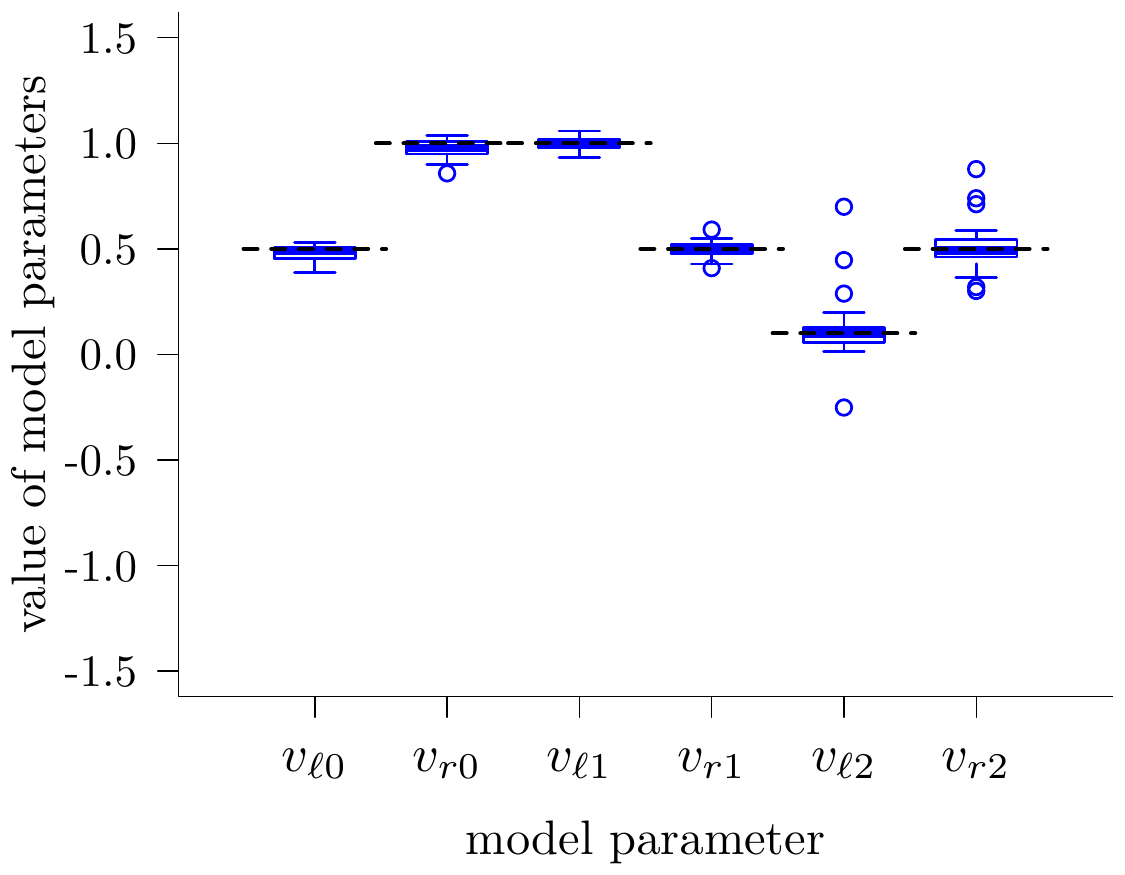}
		}
		\caption{Model parameters \textit{Turing Learning} inferred from swarms of simulated agents performing (a) aggregation and (b) object clustering. Each box corresponds to the models with the highest subjective fitness in the $1000$th generation of 30 runs. The dashed black lines correspond to the values of the parameters that the system is expected to learn (i.e., those of the agent).\label{fig:model_parameters_box}}
\end{figure}
%

Fig.~\ref{fig:model_parameters_box} shows a box plot\footnote{The box plots presented here are all as follows. The line inside the box represents the median of the data. The edges of the box represent the lower and the upper quartiles of the data, whereas the whiskers represent the lowest and the highest data points that are within $1.5$ times the range from the lower and the upper quartiles, respectively. Circles represent outliers.\label{fn:boxplot}} \todo{of the parameters of the inferred models with the highest \textcolor{black}{subjective} fitness value} in the final generation. 
It can be seen that~\textit{Turing Learning} identifies the parameters for both behaviors with good accuracy (dashed black lines represent the ground truth, that is, the parameters of the observed swarming agents). In the case of aggregation, the means (standard deviations) of the AEs in the parameters are (from left to right in Fig.~\subref*{fig:model_parameters_box_aggregation}): $0.01$ ($0.01$), $0.01$ ($0.01$), $0.07$ ($0.07$), and $0.06$ ($0.04$). In the case of object clustering, these values are as follows: $0.03$ ($0.03$), $0.04$ ($0.03$), $0.02$ ($0.02$), $0.03$ ($0.03$), $0.08$ ($0.13$), and $0.08$ ($0.09$).

\begin{figure}[!t]%
	\centering
		\subfloat[(a) Aggregation \label{fig:model_parameters_convergence_aggregation}]{%
			\includegraphics[width=2.25 in]{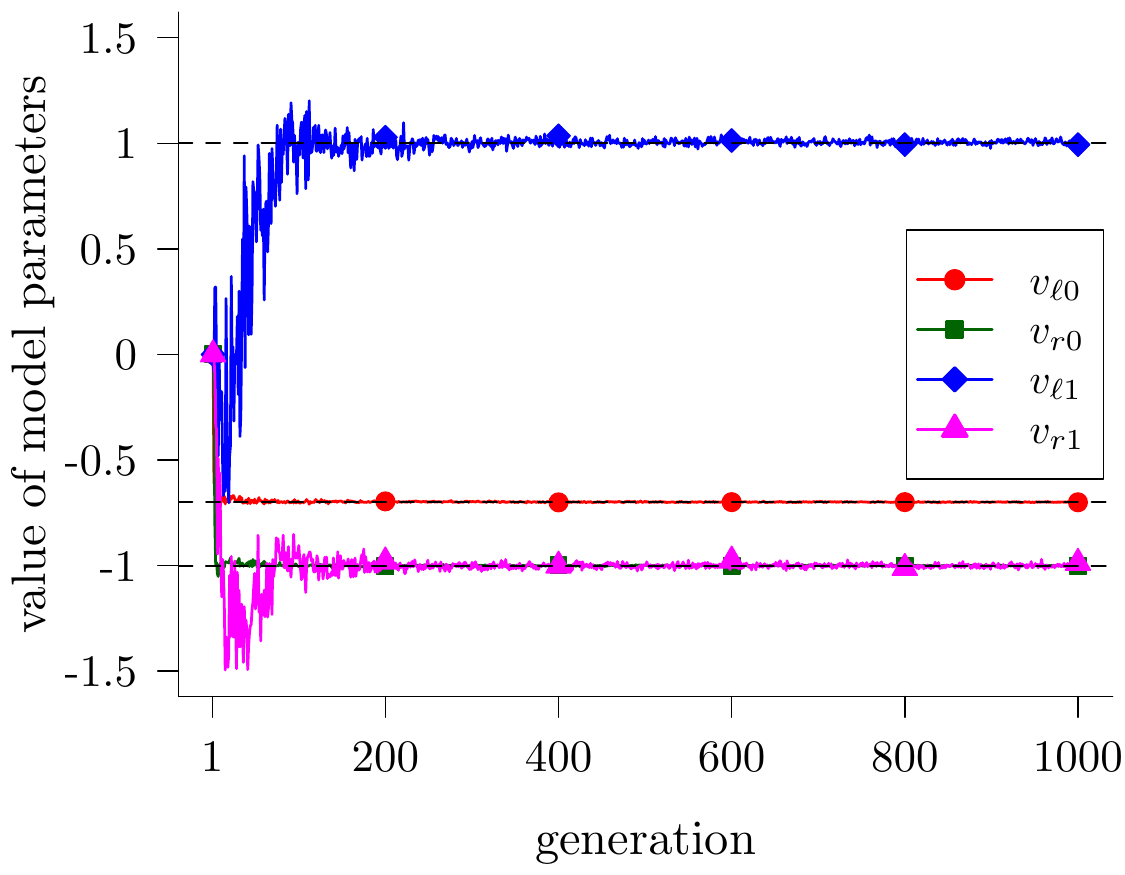}
		}
		\subfloat[(b) Object Clustering\label{fig:model_parameters_convergence_clustering}]{%
			\includegraphics[width=2.25 in]{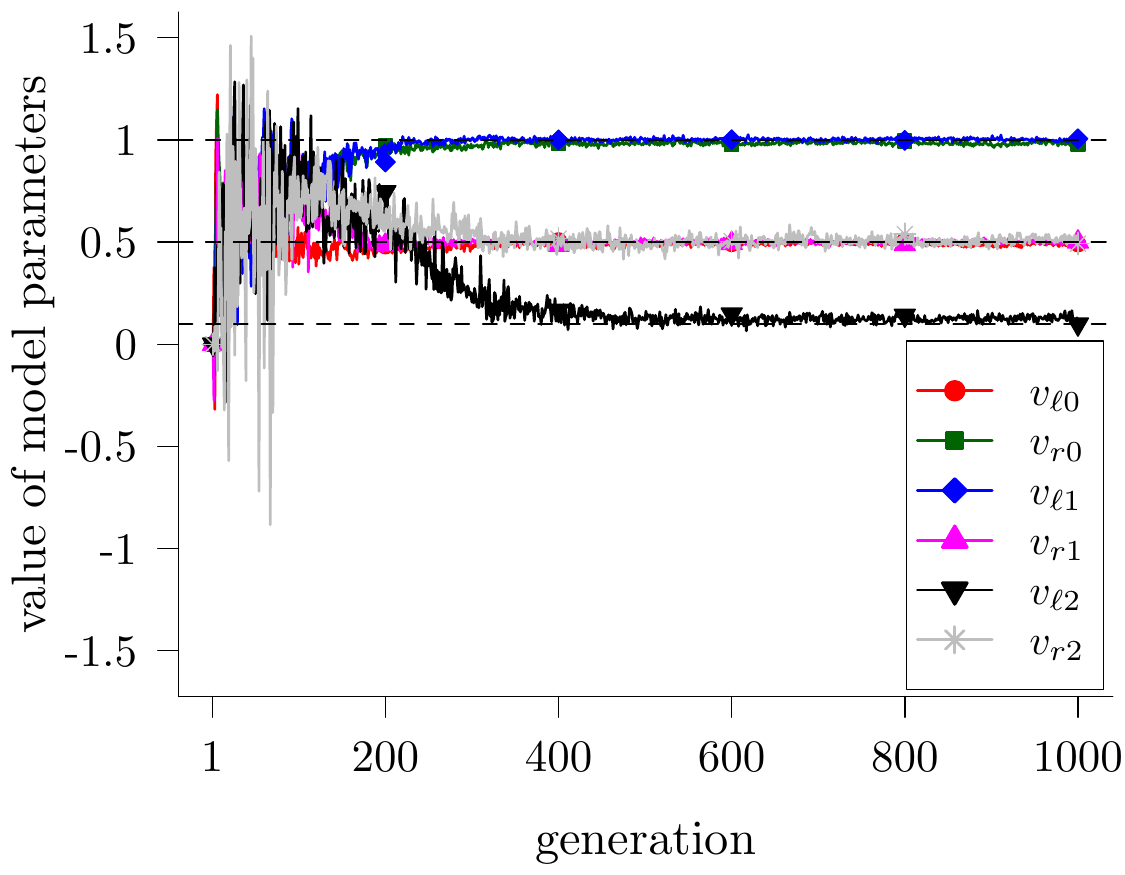}
		}
		\caption{Evolutionary \todo{dynamics of} model parameters for the (a) aggregation and (b) object clustering \todo{case studies}. Curves represent median \todo{parameter values of the models with the highest subjective fitness} across 30 \todo{runs of \textit{Turing Learning}}. Dashed black lines indicate \todo{the ground truth}. \label{fig:model_parameters_convergence}}
\end{figure}

We also investigate the evolutionary dynamics. Fig.~\ref{fig:model_parameters_convergence} shows how the model parameters converge over generations. In the aggregation case study (see Fig.~\subref*{fig:model_parameters_convergence_aggregation}), the parameters corresponding to $I=0$ are learned first. After around $50$ generations, both $v_{\ell0}$ and $v_{r0}$ closely approximate their true values ($-0.7$ and $-1.0$). For $I=1$, it takes about $200$ generations for both $v_{\ell1}$ and $v_{r1}$ to converge. A likely reason for this effect is that an agent spends a larger proportion of its time seeing nothing ($I=0$) than seeing other agents ($I=1$)---simulations revealed these percentages to be $91.2\%$ and $8.8\%$ respectively (mean values over $100$ trials). 

In the object clustering case study (see Fig.~\subref*{fig:model_parameters_convergence_clustering}), the parameters corresponding to $I=0$ and $I=1$ are learned faster than the parameters corresponding to $I=2$. After about $200$ generations, $v_{\ell0}$, $v_{r0}$, $v_{\ell1}$ and $v_{r1}$ start to converge; however it takes about $400$ generations for $v_{\ell2}$ and $v_{r2}$ to approximate their true values. Note that an agent spends the highest proportion of its time seeing nothing ($I=0$), followed by seeing objects ($I=1$) and seeing other agents ($I=2$)---simulations revealed these proportions to be $53.2\%$, $34.2\%$ and $12.6\%$ respectively (mean values over $100$ trials).

\begin{figure}[!t]
	\centering
		\subfloat[(a) Aggregation \label{fig:model_validation_aggregation_simulation}]{%
			\includegraphics[width=2.25 in]{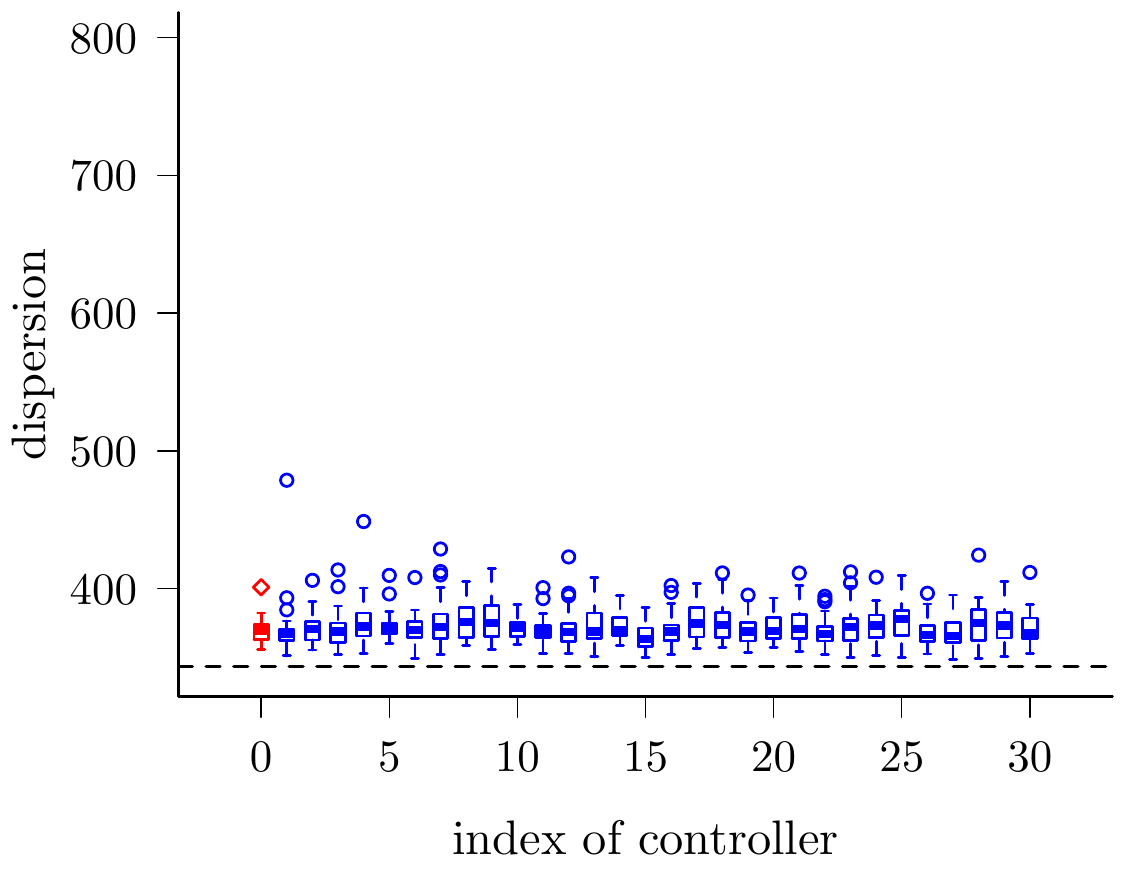}
		}
		\subfloat[(b) Object Clustering\label{fig:model_validation_clustering_simulation}]{%
			\includegraphics[width=2.25 in]{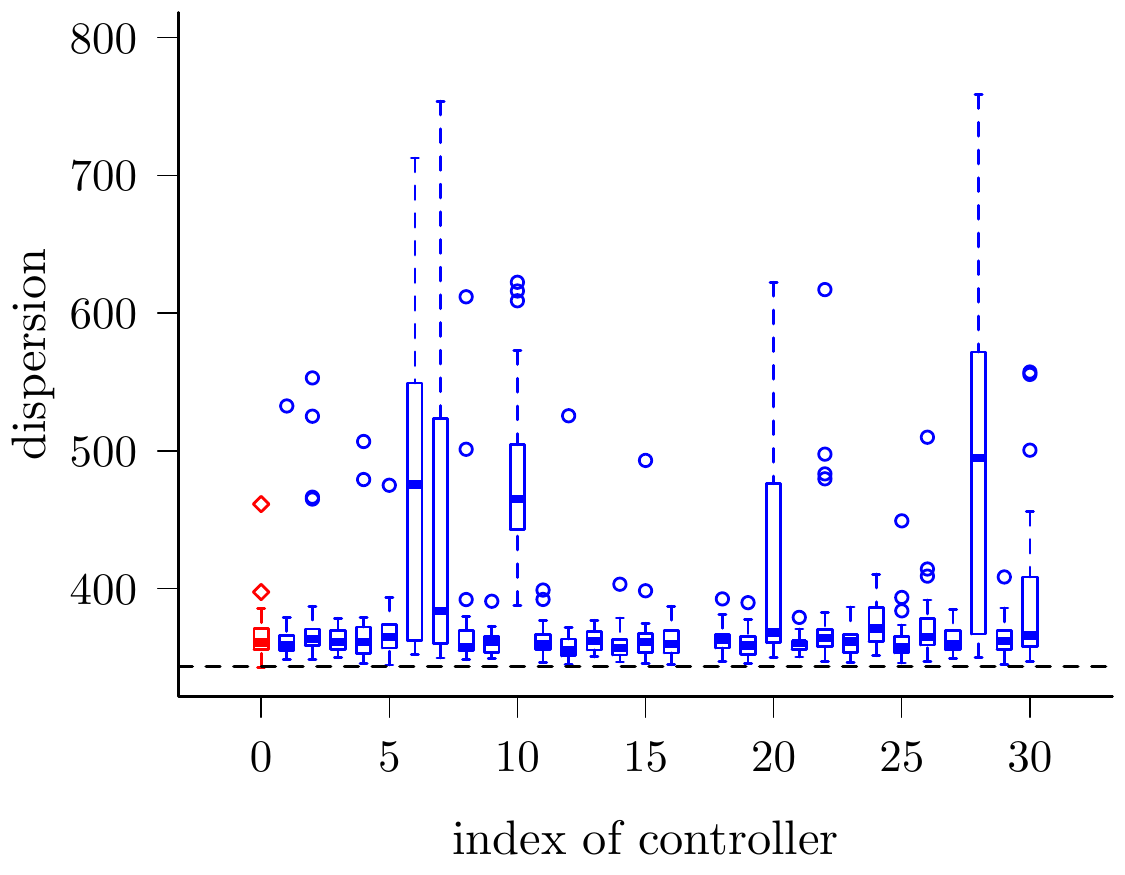}
		}
		\caption{(a) Dispersion of $50$ simulated agents (red box) or replicas (blue boxes), executing one of the $30$ inferred models in the aggregation case study. (b) Dispersion of $50$ objects when using a swarm of $25$ simulated agents (red box) or replicas (blue boxes), executing one of the $30$ inferred models in the object clustering case study. In both (a) and (b), boxes show the distributions obtained after $\unit[400]{s}$ over $30$ trials. The models are from the $1000$th generation. The dashed black lines indicate the minimum dispersion that $50$ individuals/objects can possibly achieve~\citep{Graham1990}. See Section~\ref{sec:analysis_evolved_models} for details.\label{fig:model_validation_simulation}}
\end{figure}

Although the \todo{inferred} models approximate the agents well in terms of parameters, it \finaltodo{is not uncommon} in swarm systems that small changes in individual behavior lead to vastly different emergent behaviors, especially when using large numbers of agents~\citep{Paul2010}. For this reason, we evaluate the quality of the emergent behaviors that the models give rise to. In the case of aggregation, we measure the dispersion of the swarm after some elapsed time as defined in~\citep{Gauci2014_ijrr}\footnote{The measure of dispersion is based on the robots'/objects' distances from their centroid. For a formal definition, see Eq. (5) of \citep{Gauci2014_ijrr}, Eq. (2) of~\citep{Melvin2014_aamas} and~\citep{Graham1990}.}. For each of the $30$ models with the highest subjective fitness in the final generation, we performed $30$ trials with $50$ \todo{replicas executing} the model. For comparison, we also performed $30$ trials using the \todo{agent} (see Eq.~\eqref{eq:aggregation_optimal_controller}). The set of initial configurations was the same for the replicas and the agents. Fig.~\subref*{fig:model_validation_aggregation_simulation} shows the dispersion of agents and replicas after $\unit[400]{s}$. All models led to aggregation. We performed a statistical test\footnote{Throughout this paper, the statistical test used is a two-sided Mann-Whitney test with a $5\%$ significance level.} on the final dispersion of the individuals between the agents and replicas for each model. There is no statistically significant difference in \textcolor{black}{$30$ out of $30$ cases (tests with Bonferroni correction).} 

In the case of object clustering, we use the dispersion of the objects after $\unit[400]{s}$ as a measure of the emergent behavior. \todo{We performed $30$ trials with $25$ individuals and $50$ objects for the agent and each model.}
The results are shown in Fig.~\subref*{fig:model_validation_clustering_simulation}. In the final dispersion of \todo{objects by the agent or any of the models (replicas)}, there is no statistically significant difference in \textcolor{black}{$26$ out of $30$ cases (tests with Bonferroni correction).}

\subsection{Analysis of \todo{generated} classifiers}\label{sec:analysis_of_evolved_classifiers_simulation}

The primary outcome of the \textit{Turing Learning} method (and of any system identification method) is the model, which has been discussed in the previous section. However, the \todo{generated} classifiers can also be considered as a useful by-product. For instance they could be used to detect abnormal agents in a swarm. We now analyze the performance of the classifiers. For the remainder of this paper, we consider only the aggregation case study.

To assess the performance of the classifiers, we measure the percentage of correct judgments over agents and a wide range of models. The models are uniformly distributed \textcolor{black}{over the four-dimensional parameter space of the agents (see Eqs.~\eqref{controller:form} and~\eqref{eq:aggregation_optimal_controller}); each element takes a value in $[-1,1]$}. To keep the analysis of classifiers within a reasonable computation time, we discretize this space using $11$ settings per parameter; \textcolor{black}{thus,  each parameter is chosen from the set $\{-1.0, -0.8, ..., 0.8, 1.0\}$.} This discretized space is a grid consisting of $|\mathcal{X}|=11^4=14641$ points (i.e., models). The model is executed by a replica mixed into a group of $10$ agents (as in the \todo{\textit{Turing Learning} runs}). The motion data are fed to each classifier, which makes $10$ judgments per individual. If the classifier consistently judges the individual as a model (i.e., not an agent) in $10$ out of $10$ trials, it outputs a ``model'' decision. Otherwise, it outputs ``agent''. This conservative approach is used to minimize the risk of false-positive detection of abnormal behavior. \textcolor{black}{The classifier's performance (i.e., decision accuracy) is computed by combining the percentage of correct judgments about models ($50\%$ weight) with the percentage of correct judgments about agents ($50\%$ weight), analogous to the solution quality definition in Eq.~\eqref{eq:classifier_fitness_definition}.} 

\begin{figure}[!t]
	\centering
	\includegraphics[width=3.0 in]{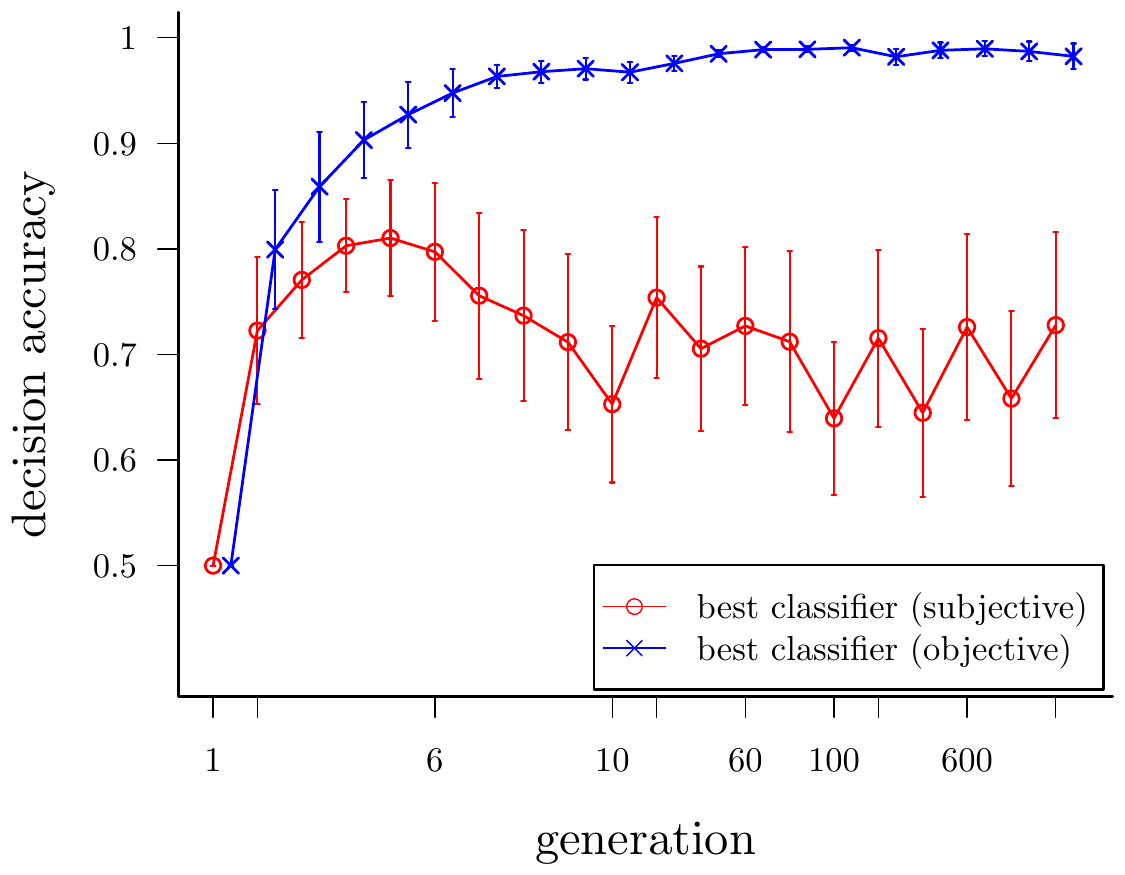}
	\caption{Average decision accuracy of the best classifiers over $1000$ generations (nonlinear scale) in $30$ runs of \textit{Turing Learning}. The error bars show standard deviations. See text for details.}
	\label{fig:classifier_decision_accuracy_simulation}
\end{figure}

We performed $10$ trials using a set of initial configurations common to all classifiers. Fig.~\ref{fig:classifier_decision_accuracy_simulation} shows the average decision accuracy of the classifier with the highest subjective fitness during the evolution (\textit{best classifier (subjective)}) in $30$ runs of \textit{Turing Learning}. The accuracy of the classifier increases in the first $5$ generations, then drops, and fluctuates within range $62\%$--$80\%$.
For a comparison, we also plot the highest decision accuracy that a single classifier achieved during the post-evaluation for each generation. This classifier is referred to \textit{best classifier (objective)}. Interestingly, the accuracy of the \textit{best classifier (objective)}
increases almost monotonically, reaching a level above $95\%$. \textcolor{black}{To select the \textit{best classifier (objective)}, all the classifiers were post-evaluated using the aforementioned $14641$ models.} 

At first sight, it seems counterintuitive that the \textit{best classifier (subjective)} has a low decision accuracy. This phenomenon, however, can be explained when considering the model population. We have shown in the previous section (see Fig.~\subref*{fig:model_parameters_convergence_aggregation}) that the models converge rapidly at the beginning of the coevolutions. As a result, when classifiers are evaluated in later generations, the trials are likely to include models very similar to each other. \todo{Classifiers that become overspecialized to this small set of models (the ones dominating the later generations) have a higher chance of being selected during the evolutionary process.} These classifiers may however have a low performance when evaluated across the entire model space.

\textcolor{black}{
Note that our analysis does not exclude the potential existence of models for which the performance of the classifiers degenerates substantially. As reported by~\citep{nguyen2015deep}, well-trained classifiers, which in their case are represented by deep neural networks, can be easily fooled. For instance, the classifiers may label a random-looking image as a guitar with high confidence. However, in this degenerate case, the image was obtained through evolutionary learning, while the classifiers remained static. By contrast, in \textit{Turing Learning},  the classifiers are coevolving with the models, and hence have the opportunity to adapt to such a situation.}
\subsection{A metric-based system identification method: mathematical analysis and comparison with \textit{Turing Learning}}\label{sec:metric-based_EA}

In order to compare \textit{Turing Learning} against a metric-based method, we employ the commonly used least-square approach.
The objective is to minimize the differences between the observed outputs of the agents and of the models, respectively. Two outputs are considered---an individual's linear and angular speed. Both outputs are considered over the whole duration of a trial. 
Formally,
\begin{equation}\label{eq:square_error_metric}
		\todo{e_{m} = \sum \limits_{i=1}^{n_a} \sum \limits_{t=1}^{T} \left\{ (s_m^{(t)} - s_i^{(t)})^2 + (\omega_m^{(t)} - \omega_i^{(t)})^2 \right\},}
\end{equation}
where $s_m^{(t)}$ and $s_i^{(t)}$ are the linear speed of the model and of agent $i$, respectively, at time step $t$; $\omega_m^{(t)}$ and $\omega_i^{(t)}$ are the angular speed of the model and of agent $i$, respectively, at time step $t$; $n_a$ is the number of agents in the group; $T$ is the total number of time steps in a trial. 

\begin{figure}[!t]
	\centering
		\subfloat[(a) Aggregation \label{fig:model_parameters_box_aggregation_evolution}]{%
			\includegraphics[width=2.25 in]{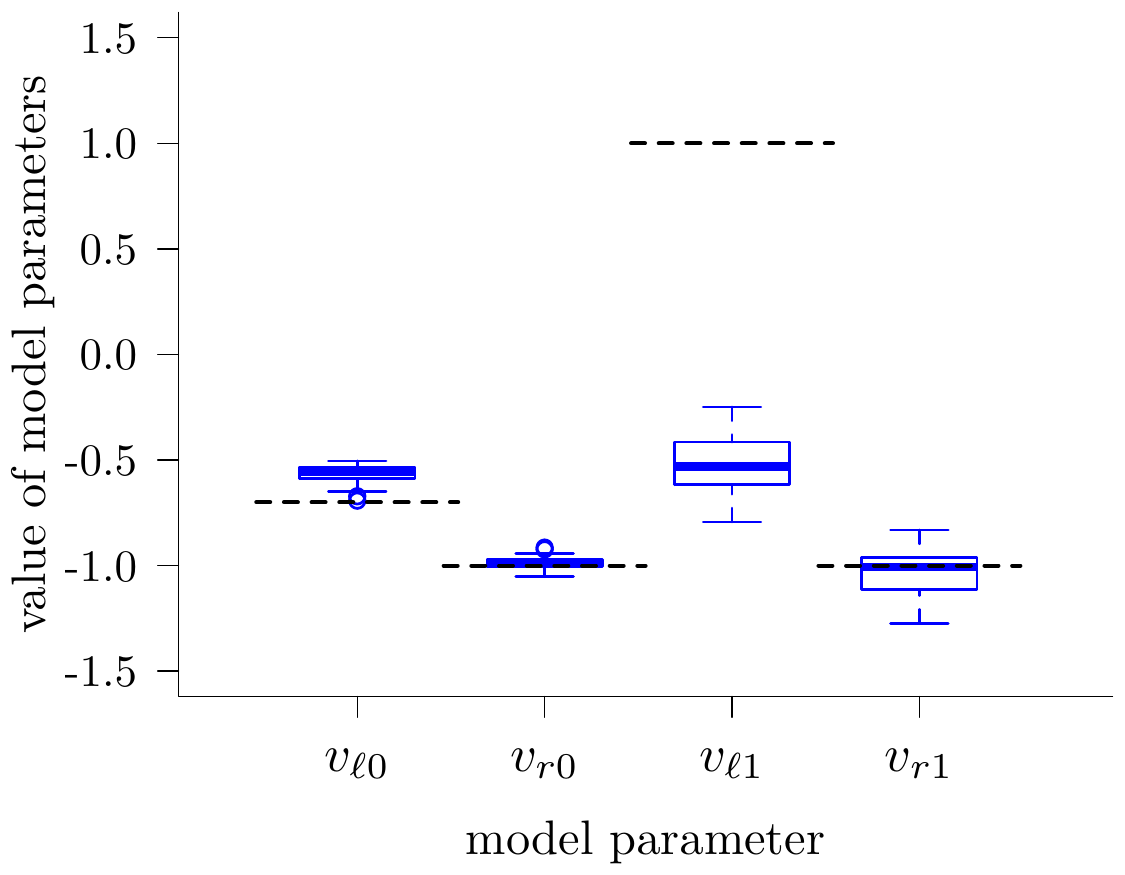}
		}
		\subfloat[(b) Object Clustering\label{fig:model_parameters_box_clustering_evolution}]{%
			\includegraphics[width=2.25 in]{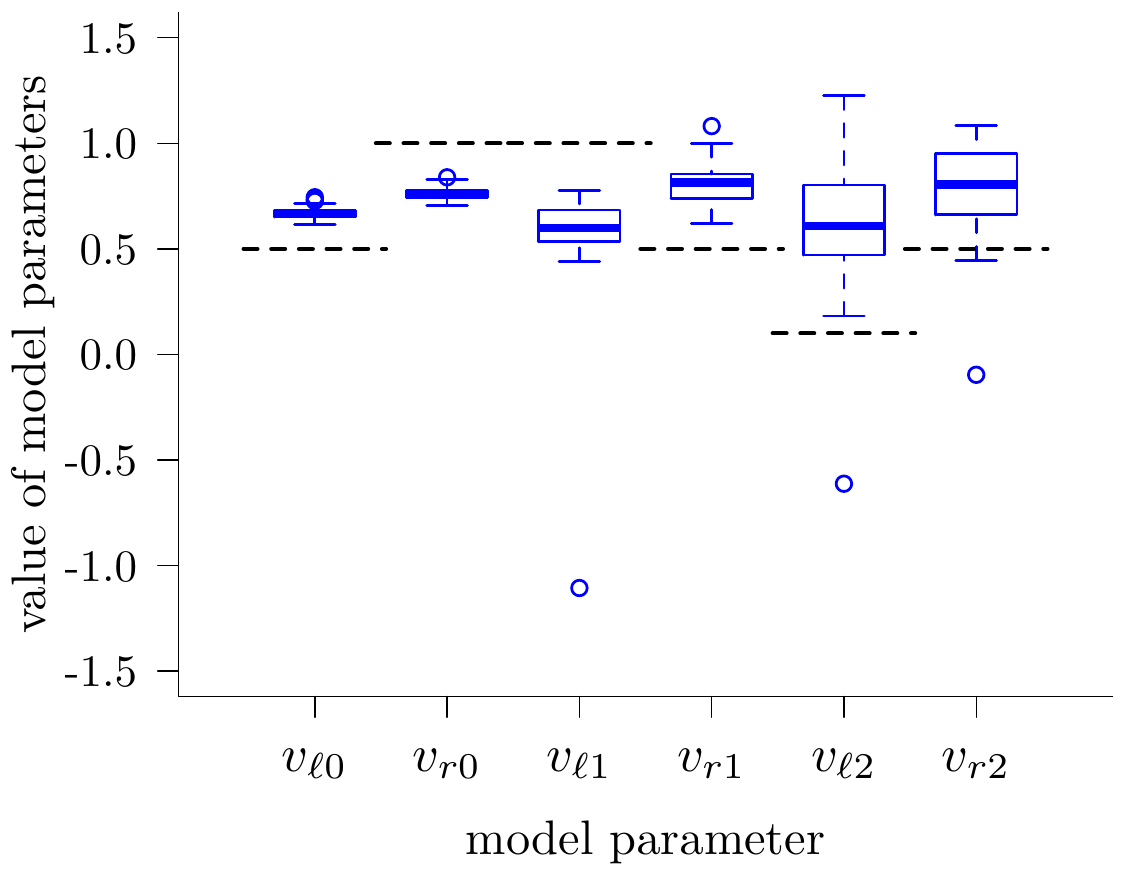}
		}
	\caption{\label{fig:model_parameters_box_evolution}\todo{Model parameters a metric-based evolutionary method inferred from swarms of simulated agents performing (a) aggregation and (b) object clustering}. Each box corresponds to the models with the highest fitness in the $1000$th generation of 30 runs. The dashed black lines correspond to the values of the parameters that the system is expected to learn (i.e., those of the agent).}
\end{figure}


\subsubsection{Mathematical analysis}

We begin our analysis by first analyzing an abstract version of the problem.
\begin{theorem}
Consider two binary random variables $X$ and $Y$. Variable $X$ takes value $x_1$ with probability $p$, and value $x_2$, otherwise.  Variable $Y$ takes value $y_1$ with the same probability $p$, and value $y_2$, otherwise. 
Variables $X$ and $Y$ are assumed to be independent of each other. Assuming $y_1$ and $y_2$ are given, then the metric $D=\mathbb{E}\{(X-Y)^2\}$ has a global minimum at $X^*$ with $x_1^*=x_2^*=\mathbb{E}\{Y\}$.
If $p\in(0,1)$, the solution is unique.
\end{theorem}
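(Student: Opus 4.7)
The plan is to reduce the problem to a simple sum-of-squares minimization by exploiting independence and completing the square. First I would expand
\begin{equation*}
D \;=\; \mathbb{E}\{(X-Y)^2\} \;=\; \mathbb{E}\{X^2\} - 2\,\mathbb{E}\{X\}\,\mathbb{E}\{Y\} + \mathbb{E}\{Y^2\},
\end{equation*}
where the cross-term factorizes because $X$ and $Y$ are independent. Writing $\mu_Y := \mathbb{E}\{Y\} = p y_1 + (1-p) y_2$, both $\mu_Y$ and $\mathbb{E}\{Y^2\}$ are fixed constants (since $y_1$, $y_2$ and $p$ are given), so $D$ depends on $X$ only through $x_1$ and $x_2$.

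Next I would substitute $\mathbb{E}\{X\} = p x_1 + (1-p) x_2$ and $\mathbb{E}\{X^2\} = p x_1^2 + (1-p) x_2^2$, group the $x_1$ and $x_2$ terms, and complete the square. The key identity is
\begin{equation*}
D(x_1,x_2) \;=\; p\,(x_1 - \mu_Y)^2 \;+\; (1-p)\,(x_2 - \mu_Y)^2 \;+\; \mathrm{Var}(Y),
\end{equation*}
which follows from a direct algebraic rearrangement (adding and subtracting $p\mu_Y^2$ and $(1-p)\mu_Y^2$, and recognising $\mathbb{E}\{Y^2\} - \mu_Y^2 = \mathrm{Var}(Y)$).

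From this form the conclusion is immediate: since $p \geq 0$ and $1-p \geq 0$, $D$ is bounded below by $\mathrm{Var}(Y)$, with equality attained iff each squared term vanishes at its prefactor's support. For $p \in (0,1)$, both coefficients are strictly positive, forcing $x_1 = \mu_Y$ and $x_2 = \mu_Y$ simultaneously, which establishes both existence and uniqueness of the global minimiser $X^*$ with $x_1^* = x_2^* = \mathbb{E}\{Y\}$. I would also briefly note that in the degenerate cases $p \in \{0,1\}$ one of the two parameters becomes irrelevant, so uniqueness genuinely requires the hypothesis $p \in (0,1)$.

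There is no real obstacle here; the only thing to be careful about is keeping the bookkeeping clean during the completion of the square, and explicitly invoking independence to factor $\mathbb{E}\{XY\}$, since that is what allows the objective to decouple into separate quadratics in $x_1$ and $x_2$. An alternative (equivalent) route would be to compute the gradient and Hessian of $D(x_1,x_2)$ directly: the stationary-point equations $2p(x_1 - \mu_Y) = 0$ and $2(1-p)(x_2 - \mu_Y) = 0$ yield the same minimiser, and the Hessian $\mathrm{diag}(2p,\,2(1-p))$ is positive definite precisely when $p \in (0,1)$, certifying a unique global minimum by strict convexity.
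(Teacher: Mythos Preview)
Your proof is correct. The paper takes a slightly different, more computational route: it writes out $D$ as the four-term sum $p^2(x_1-y_1)^2 + p(1-p)(x_1-y_2)^2 + (1-p)p(x_2-y_1)^2 + (1-p)^2(x_2-y_2)^2$ (i.e.\ expanding over the joint distribution of $(X,Y)$ rather than first factoring $\mathbb{E}\{XY\}$ via independence), then sets $\partial D/\partial x_1 = \partial D/\partial x_2 = 0$ and checks that the second partials are positive for $p\in(0,1)$. This is essentially the gradient/Hessian alternative you sketch at the end, just starting from a different expansion. Your primary argument---using independence to decouple $D$ into $\mathbb{E}\{X^2\}-2\mu_Y\mathbb{E}\{X\}+\mathbb{E}\{Y^2\}$ and then completing the square to obtain $D = p(x_1-\mu_Y)^2 + (1-p)(x_2-\mu_Y)^2 + \mathrm{Var}(Y)$---is cleaner: it makes the global minimum, its uniqueness for $p\in(0,1)$, and even the minimum value $\mathrm{Var}(Y)$ immediately visible without any calculus, and it makes transparent why the degenerate cases $p\in\{0,1\}$ lose uniqueness. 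The paper's approach, on the other hand, generalises more mechanically to the $n$-ary and time-varying extensions treated in the subsequent corollaries, where the joint-case expansion is reused directly.
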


\begin{proof}
The probability of (i) both $x_1$ and $y_1$ being observed is $p^2$; (ii) both $x_1$ and $y_2$ being observed is $p (1 - p)$; (iii) both $x_2$ and $y_1$ being observed is $(1 - p) p$; (iv) both $x_2$ and $y_2$ being observed is $(1-p)^2$. The expected error value, $D$, is then given as
\begin{equation}
    D = p^2\left(x_1 - y_1\right)^2 + p (1 - p)\left(x_1 - y_2\right)^2 + (1-p) p \left(x_2 - y_1\right)^2 + (1-p)^2\left(x_2 - y_2\right)^2.
    \label{eq:expected_error}
\end{equation}

To find the minimum expected error value, we set the partial derivatives w.r.t. $x_1$ and $x_2$ to $0$. For $x_1$, we have:
\begin{equation}
    \frac{\partial D}{\partial x_1} = 2p^{2}\left(x_1 - y_1\right) + 2 p (1 - p) \left(x_1 - y_2\right) = 0,
    \label{eq:expected_error_derivative}
\end{equation}
from which we obtain $x^*_1 = p y_1 + (1-p) y_2=\mathbb{E}\{Y\}$. Similarly, setting $\frac{\partial D}{\partial x_2} = 0$ we obtain $x^*_2 = p y_1 + (1-p) y_2=\mathbb{E}\{Y\}$. Note that at these values of $x_1$ and $x_2$, the second-order partial derivatives are both positive (assuming $p\in(0,1)$). Therefore, the (global) minimum of $D$ is at this stationary point.
\end{proof}

\begin{corollary}
If $p\in (0,1)$ and $y_1\ne y_2$, then $X^*\ne Y$. 
\end{corollary}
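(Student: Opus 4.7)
The plan is to invoke Theorem~1 directly to pin down $X^*$, and then argue that under the extra hypotheses $p\in(0,1)$ and $y_1\neq y_2$ the common value $x_1^*=x_2^*=\mathbb{E}\{Y\}$ cannot coincide with either support point of $Y$, so $X^*$ and $Y$ differ as random variables (in fact, with probability $1$).

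\textbf{Main steps.} First, I would restate Theorem~1 to recall that the unique global minimizer satisfies
\[
x_1^* \;=\; x_2^* \;=\; \mathbb{E}\{Y\} \;=\; p\,y_1 + (1-p)\,y_2 .
\]
Thus $X^*$ is a degenerate (constant) random variable, whereas $Y$ is non-degenerate because $p\in(0,1)$ and $y_1\neq y_2$. Second, I would check the two candidate coincidences. Suppose, for contradiction, that $\mathbb{E}\{Y\} = y_1$. Then $p y_1 + (1-p) y_2 = y_1$ gives $(1-p)(y_2 - y_1)=0$, forcing either $p=1$ or $y_1=y_2$, both ruled out by hypothesis. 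The symmetric argument shows $\mathbb{E}\{Y\}\neq y_2$. Hence the constant value taken by $X^*$ lies outside the support $\{y_1,y_2\}$ of $Y$, so $X^*\neq Y$ with probability $1$, and in particular $X^*\neq Y$ as random variables.

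\textbf{Anticipated obstacle.} There is no substantive technical difficulty; the corollary is essentially a one-line consequence of Theorem~1 together with the observation that a non-degenerate two-point distribution cannot equal its own mean. The only thing to be careful about is the interpretation of the inequality $X^*\neq Y$: since both are random variables, I would make explicit that the statement means they are not equal in distribution (equivalently, not equal almost surely), and verify this by exhibiting the disjoint supports $\{\mathbb{E}\{Y\}\}$ and $\{y_1,y_2\}$. This framing also makes clear the intended message for the paper, namely that the least-squares optimum is a constant predictor that can never reproduce the bimodal behavior of $Y$, motivating the need for a metric-free identification approach such as \emph{Turing Learning}.
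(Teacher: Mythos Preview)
Your proposal is correct and follows essentially the same approach as the paper: invoke Theorem~1 to obtain $x_1^*=x_2^*$, then argue this is incompatible with $y_1\neq y_2$. The paper's own proof is even terser---it simply notes that $x_1^*=x_2^*$ while $y_1\neq y_2$ immediately yields $X^*\neq Y$, without separately verifying that $\mathbb{E}\{Y\}\notin\{y_1,y_2\}$.
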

\begin{proof}
As $p\in (0,1)$, the only global minimum exists at $X^*$. As $x^*_1 = x^*_2$ and $y_1\ne y_2$, it follows that $X^*\ne Y$.
\end{proof}

\begin{corollary}
Consider two discrete random variables $X$ and $Y$ with values $x_1$, $x_2, \ldots, x_n$, and $y_1$, $y_2, \ldots, y_n$ respectively, $n>1$. 
Variable $X$ takes value $x_i$ with probability $p_i$ and variable $Y$ takes value $y_i$ with the same probability $p_i$, $i=1,2,\dots,n$, where $\sum\limits_{i=1}^{n} p_i = 1$ and $\exists i,j:y_i\ne y_j$. Variables $X$ and $Y$ are assumed to be independent of each other. Metric $D$ has a global minimum at
$X^*\ne Y$ with $x_1^*= x_2^* = \ldots = x_n^*=\mathbb{E}\{Y\}$. 
If all $p_i\in(0,1)$, then $X^*$ is unique.
\end{corollary}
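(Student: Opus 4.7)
The plan is to imitate the proof of the theorem, replacing the four-term expansion by a general double sum. Using independence of $X$ and $Y$, the metric can be written as
\begin{equation}
D = \mathbb{E}\{(X-Y)^2\} = \sum_{i=1}^{n}\sum_{j=1}^{n} p_i p_j (x_i - y_j)^2.
\end{equation}
The variables to optimize over are $x_1,\dots,x_n$; the $p_i$ and $y_j$ are fixed parameters. I would first verify that $D$ is a smooth, coercive, strictly convex quadratic in $(x_1,\dots,x_n)$ when all $p_i>0$, so that a global minimum exists and is characterized by the first-order conditions.

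Next I would compute, for each $k$, the partial derivative
\begin{equation}
\frac{\partial D}{\partial x_k} = 2 p_k \sum_{j=1}^{n} p_j (x_k - y_j) = 2 p_k \bigl(x_k - \mathbb{E}\{Y\}\bigr),
\end{equation}
using $\sum_j p_j = 1$. Setting this to zero and using $p_k>0$ yields $x_k^* = \mathbb{E}\{Y\}$ for every $k$, which is exactly the claimed stationary point. The Hessian is the diagonal matrix $\mathrm{diag}(2p_1,\dots,2p_n)$, which is positive definite when every $p_k\in(0,1)$; this simultaneously confirms that the stationary point is the global minimum and that it is the unique minimizer.

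Finally, to conclude $X^*\ne Y$, I would observe that at the optimum $X^*$ is a degenerate (constant) random variable taking the single value $\mathbb{E}\{Y\}$, whereas by hypothesis there exist $i,j$ with $y_i\ne y_j$, so $Y$ assumes at least two distinct values with positive probability. Hence $X^*$ and $Y$ cannot have the same distribution, giving $X^*\ne Y$. I do not expect any real obstacle: the argument is a direct generalization of the binary case, and the only point requiring any care is interpreting ``$X^*\ne Y$'' at the level of distributions rather than realizations, and noting that the hypothesis $\exists\, i,j:y_i\ne y_j$ (together with all $p_i>0$) is precisely what prevents $Y$ from also being degenerate.
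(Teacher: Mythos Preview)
Your proposal is correct and follows exactly the approach the paper indicates: the paper omits the details but states that the proof ``can be obtained by examining the first and second derivatives of a generalized version'' of the binary expansion, with $n^2$ rather than $2^2$ terms, which is precisely your double-sum expansion, componentwise first-order condition $x_k^*=\mathbb{E}\{Y\}$, and diagonal Hessian check. The only minor refinement worth noting is that the conclusion $X^*\ne Y$ already follows from $\exists\, i,j:y_i\ne y_j$ alone (since not all $y_k$ can equal the constant $\mathbb{E}\{Y\}$), without needing every $p_i>0$ for that part; the positivity assumption is required only for uniqueness.
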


\begin{proof}
This proof, which is omitted here, can be obtained by examining the first and second derivatives of a generalized version of Eq.~\eqref{eq:expected_error}. Rather than four ($2^2$) cases, there are $n^2$ cases to be considered.
\end{proof}

\begin{corollary}
Consider a sequence of pairs of binary random variables ($X_t$, $Y_t$), $t=1,\ldots,T$. Variable $X_t$ takes value $x_1$ with probability $p_t$, and value $x_2$, otherwise. Variable $Y_t$ takes value $y_1$ with the same probability $p_t$, and value $y_2\ne y_1$, otherwise. For all $t$, variables $X_t$ and $Y_t$ are assumed to be independent of each other. If all $p_t\in(0,1)$, then the metric $D=\mathbb{E}\left\{\sum_{t=1}^T(X_t-Y_t)^2\right\}$ has one global minimum at $X^*\ne Y$.
\end{corollary}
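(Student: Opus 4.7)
The plan is to reduce the corollary to Theorem~1 via linearity of expectation and then compute the minimizer explicitly to verify $X^{*}\neq Y$. By linearity, $D=\sum_{t=1}^{T} D_t$ where $D_t=\mathbb{E}\{(X_t-Y_t)^2\}$. Each $D_t$ depends only on $x_1$ and $x_2$ (the values $y_1,y_2$ of $Y_t$ are fixed), and by the same four-case expansion used in the proof of Theorem~1 with $p$ replaced by $p_t$, each $D_t$ is a separable quadratic in $x_1$ and $x_2$ with no $x_1 x_2$ cross term. Summing preserves this separability, so $D$ is itself a strictly convex quadratic whose Hessian is the diagonal matrix $2\,\operatorname{diag}\!\left(\sum_{t} p_t,\;\sum_{t}(1-p_t)\right)$; both diagonal entries are strictly positive because every $p_t\in(0,1)$. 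Strict convexity then guarantees a unique global minimum, located at the unique stationary point of $D$.

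Next I would compute the stationary point by setting $\partial D/\partial x_1=0$ and $\partial D/\partial x_2=0$. Each of these equations is a weighted sum of the per-$t$ first-order conditions from Theorem~1, so the solution takes the weighted-average form
\begin{equation*}
x_1^{*}=\frac{\sum_{t=1}^{T} p_t\,\mathbb{E}\{Y_t\}}{\sum_{t=1}^{T} p_t},\qquad x_2^{*}=\frac{\sum_{t=1}^{T}(1-p_t)\,\mathbb{E}\{Y_t\}}{\sum_{t=1}^{T}(1-p_t)}.
\end{equation*}
Substituting $\mathbb{E}\{Y_t\}=y_1+(1-p_t)(y_2-y_1)$ and simplifying yields $x_1^{*}-y_1=(y_2-y_1)\sum_{t} p_t(1-p_t)\big/\sum_{t} p_t$. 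Since $y_2\neq y_1$ and $p_t(1-p_t)>0$ for every $t$, this quantity is nonzero; hence $x_1^{*}\neq y_1$, so the optimal $X^{*}$ cannot have the distribution of $Y$.

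The main obstacle I anticipate is conceptual rather than computational: the shared parameters $x_1,x_2$ couple the $T$ time slices, so Theorem~1 cannot be invoked term by term because the individual per-$t$ minimizers $\mathbb{E}\{Y_t\}=p_t y_1+(1-p_t)y_2$ generally differ across $t$. The observation that unlocks the argument is that a sum of separable-convex quadratics in a common pair of variables remains separable-convex, which collapses the optimization to a single two-dimensional quadratic whose Hessian is explicit. Once this structural point is in place, the rest is precisely the first- and second-order analysis of Theorem~1, generalized from one weight $p$ to the collection $\{p_t\}_{t=1}^{T}$, together with the short algebraic check that $x_1^{*}\neq y_1$ whenever $y_1\neq y_2$ and each $p_t\in(0,1)$.
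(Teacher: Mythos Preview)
Your proposal is correct and follows essentially the same route as the paper: both sum the per-$t$ first-order conditions, solve for $x_1^{*}$, and conclude $x_1^{*}\neq y_1$ from $\sum_t p_t(1-p_t)>0$ when $y_1\neq y_2$. Your weighted-average expression $x_1^{*}=\sum_t p_t\,\mathbb{E}\{Y_t\}\big/\sum_t p_t$ is just a repackaging of the paper's formula $x_1^{*}=\frac{\sum_t p_t^2}{\sum_t p_t}\,y_1+\frac{\sum_t p_t(1-p_t)}{\sum_t p_t}\,y_2$, and your explicit Hessian/strict-convexity argument is a slightly cleaner way to secure uniqueness than the paper's second-derivative check (which it states for $T=1$ and leaves implicit for general $T$).
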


\begin{proof}
The case $T=1$ has already been considered (Theorem 1 and Corollary 1). For the case $T=2$, we extend Eq.~\eqref{eq:expected_error_derivative} to take into account $p_1$ and $p_2$, and obtain
\begin{equation}
x_1(p_1^2+p_1-p_1^2+p_2^2+p_2-p_2^2) = y_1(p_1^2+p_2^2)+y_2(p_1-p_1^2+p_2-p_2^2).
\end{equation}
This can be rewritten as:
\begin{equation}
x_1=\frac{p_1^2+p_2^2}{p_1+p_2}y_1+\frac{p_1(1-p_1)+p_2(1-p_2)}{p_1+p_2}y_2.
\label{eq:two_probabilities}
\end{equation}
As $y_2\ne y_1$, $x_1$ can only be equal to $y_1$ if $p_1^2+p_2^2 = p_1 + p_2$, which is equivalent to $p_1(1-p_1)+p_2(1-p_2) = 0$. This is however not possible for any $p_1,p_2\in(0,1)$. Therefore, $X^*\ne Y$.\footnote{Note that in the case of $p_1=p_2$, Eq.~\eqref{eq:two_probabilities} simplifies to $x^*_1 = py_1+(1-p)y_2$, as already shown by Theorem 1. For $p_1\ne p_2$, it can be shown that $x_1^*$ and $x_2^*$ are not necessarily equal to $\mathbb{E}\{Y\}$.}

For the general case, $T\ge 1$, the following equation can be obtained (proof omitted).
\begin{equation}
x_1=\frac{\sum_{t=1}^{T}p_t^2}{\sum_{t=1}^{T}p_t}y_1+\frac{\sum_{t=1}^{T}p_t(1-p_t)}{\sum_{t=1}^{T}p_t}y_2.
\end{equation}
The same argument applies---$x^*_1$ cannot be equal to $y_1$. Therefore, $X^*\ne Y$.
\end{proof}

\textbf{Implications for our scenario:} The metric-based approach considered in this paper is unable to infer the correct behavior of the agent. In particular, the model that is globally optimal w.r.t. the expected value for the error function defined by Eq.~\eqref{eq:square_error_metric} is different from the agent. This observation follows from Corollary 1 for the aggregation case study (two sensor states), and from Corollary 2 for the object clustering case study (three sensor states).
It exploits the fact that the error function is of the same structure as the metric in Corollary 3---a sum of square error terms. The summation over time is not of concern---as was shown in Corollary 3, the distributions of sensor reading values (inputs) of the agent and of the model do not need to be stationary. However, we need to assume that for any control cycle, the actual inputs of agents and models are not correlated with each other. Note that the sum in Eq.~\eqref{eq:square_error_metric} comprises two square error terms: one for the linear speed of the agent, and the other for the angular speed. As our simulated agents employ a differential drive with unconstrained motor speeds, the linear and angular speeds are decoupled. In other words, the linear and angular speeds can be chosen independently of each other, and optimized separately. This means that Eq.~\eqref{eq:square_error_metric} can be thought of as two separate error functions: one pertaining to the linear speeds, and the other to the angular speeds.

\subsubsection{Comparison with \textit{Turing Learning}}

To verify whether the theoretical result (and its assumptions) holds in practice, we used an evolutionary algorithm with a single population of models. The algorithm was identical to the \finaltodo{model optimization} sub-algorithm in \textit{Turing Learning} except for the fitness calculation, where the metric of Eq.~\ref{eq:square_error_metric} was employed. We performed 30 evolutionary runs for each case study. Each evolutionary run lasted $1000$ generations. The \finaltodo{simulation setup and} number of fitness evaluations for the models \finaltodo{were} kept the same as in \textit{Turing Learning}.

Fig.~\subref*{fig:model_parameters_box_aggregation_evolution} shows the parameter distribution of the evolved models with highest fitness in the last generation over $30$ runs. The distributions of the evolved parameters corresponding to $I=0$ and $I=1$ are similar. This phenomenon can be explained as follows. \todo{In the identification problem that we consider, the method has
no knowledge of the input, that is, whether the agent perceives another agent ($I=1$) or not ($I=0$). \textcolor{black}{This is consistent with \textit{Turing Learning} as the classifiers that are used to optimize the models also do not have any knowledge of the inputs.} The metric-based algorithm \todo{seems to} construct controllers that do not respond differently to either input, but work as good as it gets on average, that is, for the particular distribution of inputs, $0$ and $1$. For the left wheel speed both parameters are approximately $-0.54$. This is almost identical to the weighted mean ($-0.7*0.912 + 1.0*0.088=-0.5504$), which takes into account that parameter $v_{\ell0}=-0.7$ is observed around 91.2\% of the time, whereas parameter $v_{\ell1}=1$ is observed around 8.8\% of the time (see also Section~\ref{sec:analysis_evolved_models}).}
The parameters related to $I=1$ evolved well as the agent's parameters are identical regardless of the input \todo{($v_{r0}=v_{r1}=-1.0$)}. For both $I=0$ and $I=1$, the evolved parameters show good agreement with Theorem~1. As the
model and the agents are only observed for $\unit[10]{s}$ in
the simulation trials, the probabilities of seeing
a $0$ or a $1$ are nearly constant throughout
the trial. Hence, this scenario approximates very
well the conditions of Theorem 1, and the effects
of non-stationary probabilities on the optimal point (Corollary 3) are minimal. Similar results were found when inferring the object clustering behavior (see Fig.~\subref*{fig:model_parameters_box_clustering_evolution}). 

By comparing Figs.~\ref{fig:model_parameters_box} and~\ref{fig:model_parameters_box_evolution}, one can see that
\textit{Turing Learning} outperforms the metric-based evolutionary algorithm in terms of model accuracy in both case studies. As argued before, due to the unpredictable \finaltodo{interactions in swarms} the traditional metric-based method is \finaltodo{not suited for inferring the behaviors}. 

\subsection{Generality of \textit{Turing Learning}}\label{sec:generality_turing_learning}

\textcolor{black}{In the following, we present four orthogonal studies testing the generality of \textit{Turing Learning}. The experimental setup in each section is identical to that described previously (see Section~\ref{sec:simulation_setup}), except for the modifications discussed within each section.}

\subsubsection{Simultaneously inferring control and morphology}\label{sec:evolving_control_and_morphology}

\begin{figure}[!t]%
	\centering
	\hspace*{\fill}
		\subfloat[(a) \label{fig:Angle_I=0}]{%
			\includegraphics[width=1.6 in]{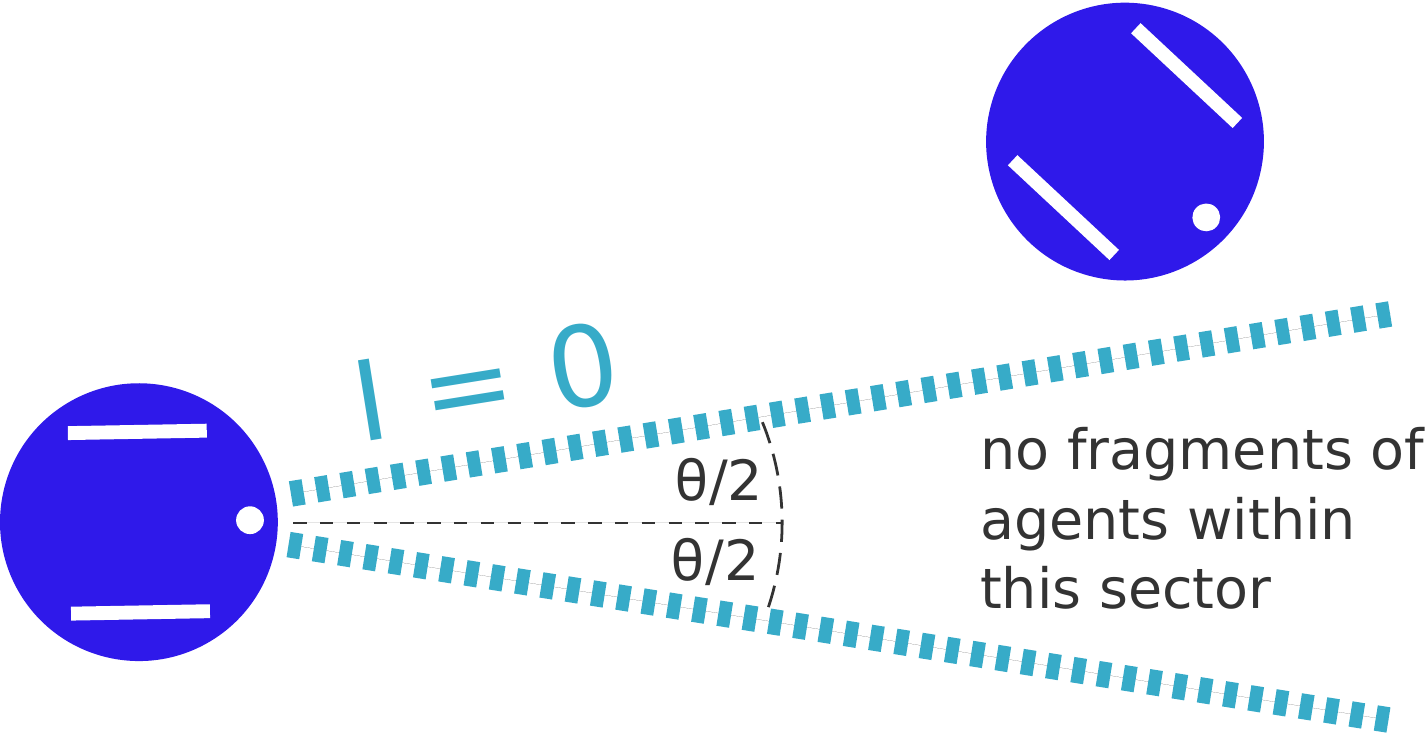}
		}\hfill
		\subfloat[(b) \label{fig:Angle_I=1}]{%
			\includegraphics[width=1.6 in]{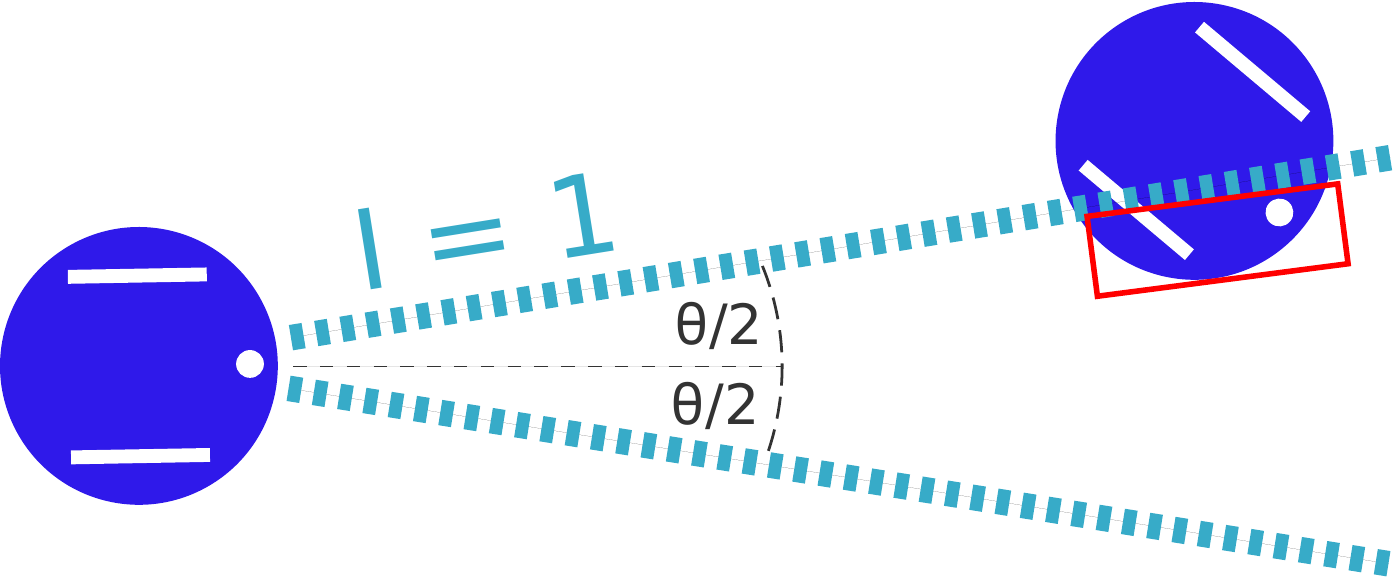}
		}%
	\hspace*{\fill}
		\caption{A diagram showing the angle of view of the agent's sensor investigated in Section~\ref{sec:evolving_control_and_morphology}.}
		\label{fig:Angle_I}
\end{figure}%

In the previous sections, we assumed that we fully knew the agents' morphology, and only their behavior (controller) was to be identified. We now present a variation where one aspect of the morphology is also unknown. The replica, in addition to the four controller parameters, takes a parameter $\theta\in\left[0,2\pi\right]\unit{rad}$, which determines the horizontal field of view of its sensor, as shown in Fig.~\ref{fig:Angle_I} (the sensor is still binary). Note that the agents' line-of-sight sensors of the previous sections can be considered as sensors with \todo{a field} of view of $\unit[0]{rad}$.

The models now have five parameters. As before, we let \todo{\textit{Turing Learning}} run in an unbounded search space (i.e., now, $\mathbb{R}^5$). However, as $\theta$ is necessarily bounded, before a model is executed on a replica, the parameter corresponding to $\theta$ is mapped to the range $(0, 2\pi)$ using an appropriately scaled logistic sigmoid function. 
The controller parameters are directly passed to the replica. In this setup, the classifiers observe the individuals for $\unit[100]{s}$ in each trial (preliminary results indicated that this setup required a longer observation time). 

\begin{figure}[!t]%
	\centering
		\subfloat[(a) \label{fig:model_parameters_box_aggregation_0degree}]{
			\includegraphics[width=2.25 in]{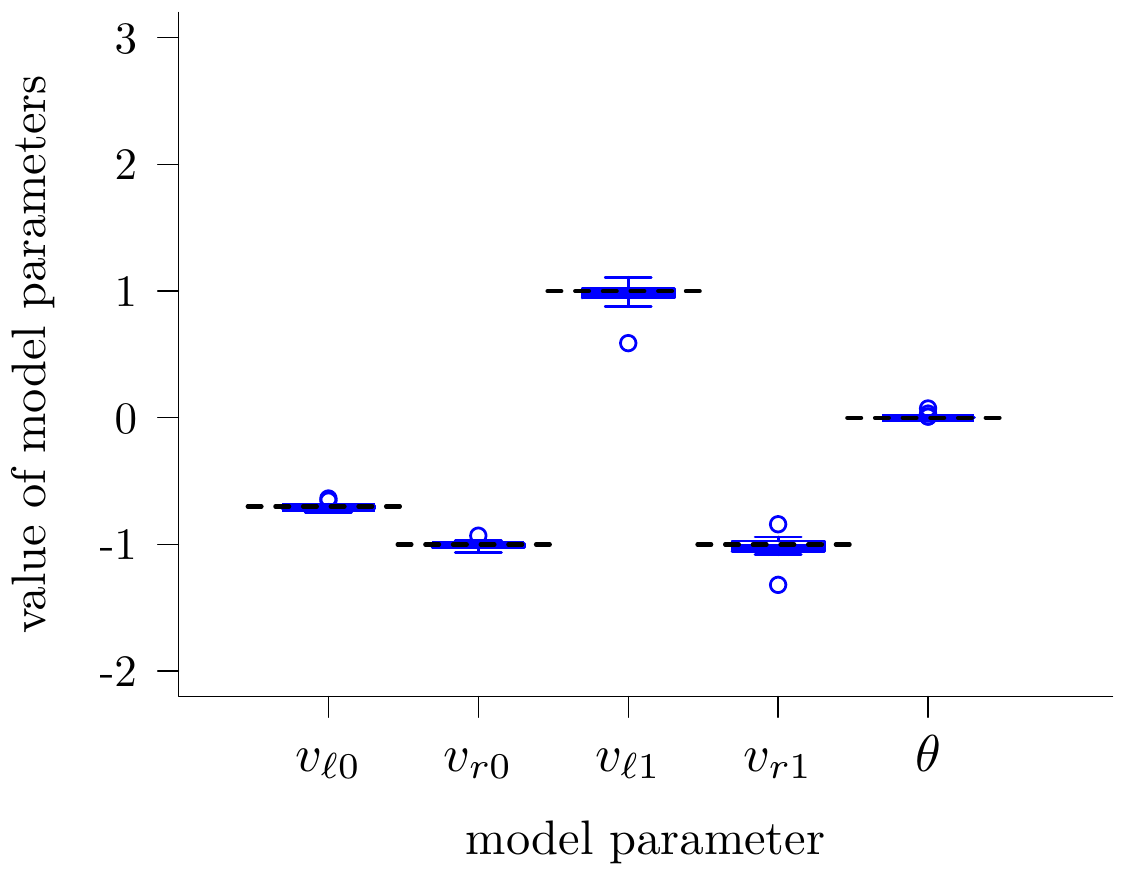}
		}%
		\subfloat[(b) \label{fig:model_parameters_box_aggregation_60degree}]{
			\includegraphics[width=2.25 in]{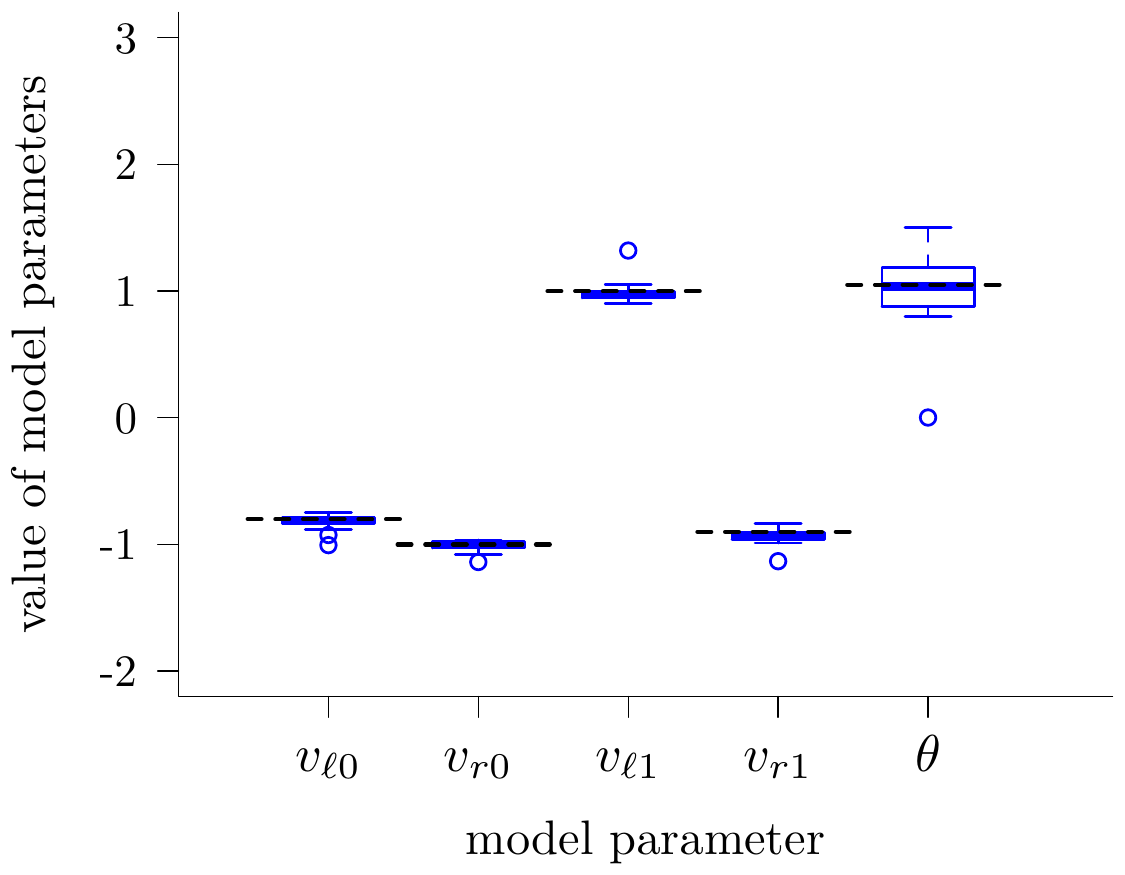}%
		}%
		\caption{\todo{\textit{Turing Learning} simultaneously inferring control and morphological parameters (field of view). The agents' field of view is} (a) $\unit[0]{rad}$ and (b) $\unit[\pi/3]{rad}$. Boxes show distributions for the models with the highest subjective fitness in the $1000$th generation over $30$ runs. Dashed black lines indicate \todo{the ground truth}.}
		\label{fig:model_parameters_box_aggregation_angle}
\end{figure}%

Fig.~\subref*{fig:model_parameters_box_aggregation_0degree} shows the parameters of the subjectively best models in the last ($1000$th) generations of $30$ runs. The means (standard deviations) of the AEs in each model parameter are as follows: $0.02$ ($0.01$), $0.02$ ($0.02$), $0.05$ ($0.07$), $0.06$ ($0.06$), and $0.01$ ($0.01$). All parameters including $\theta$ are still learned with high accuracy.

The case where the true value of $\theta$ is $\unit[0]{rad}$ is an edge case, because given an arbitrarily small $\epsilon>0$, the logistic sigmoid function maps an \todo{unbounded} domain of values onto $(0,\epsilon)$. 
This makes it simpler for \todo{\textit{Turing Learning} to infer} this parameter. For this reason, we also consider another scenario where the agents' angle of view is $\unit[\pi/3]{rad}$ rather than $\unit[0]{rad}$. The controller parameters for achieving aggregation in this case are different from those in Eq.~\eqref{eq:aggregation_optimal_controller}. They were found by rerunning a grid search with the modified sensor. Fig.~\subref*{fig:model_parameters_box_aggregation_60degree} shows the results from $30$ runs with this setup. The means (standard deviations) of the AEs in each parameter are as follows: $0.04$ ($0.04$), $0.03$ ($0.03$), $0.05$ ($0.06$), $0.05$ ($0.05$), and $0.20$ ($0.19$). The controller parameters are still learned with good accuracy. The accuracy in the angle of view is noticeably lower, but still reasonable.

\subsubsection{\textcolor{black}{Inferring behavior without assuming a known control system structure}}\label{sec:evolve_neural_network_model}
\captionsetup[subfigure]{labelformat=empty}  
\begin{figure}[!t]
	\centering
	\subfloat[(a) One hidden neuron]{  
		\includegraphics[width = 1.49 in]{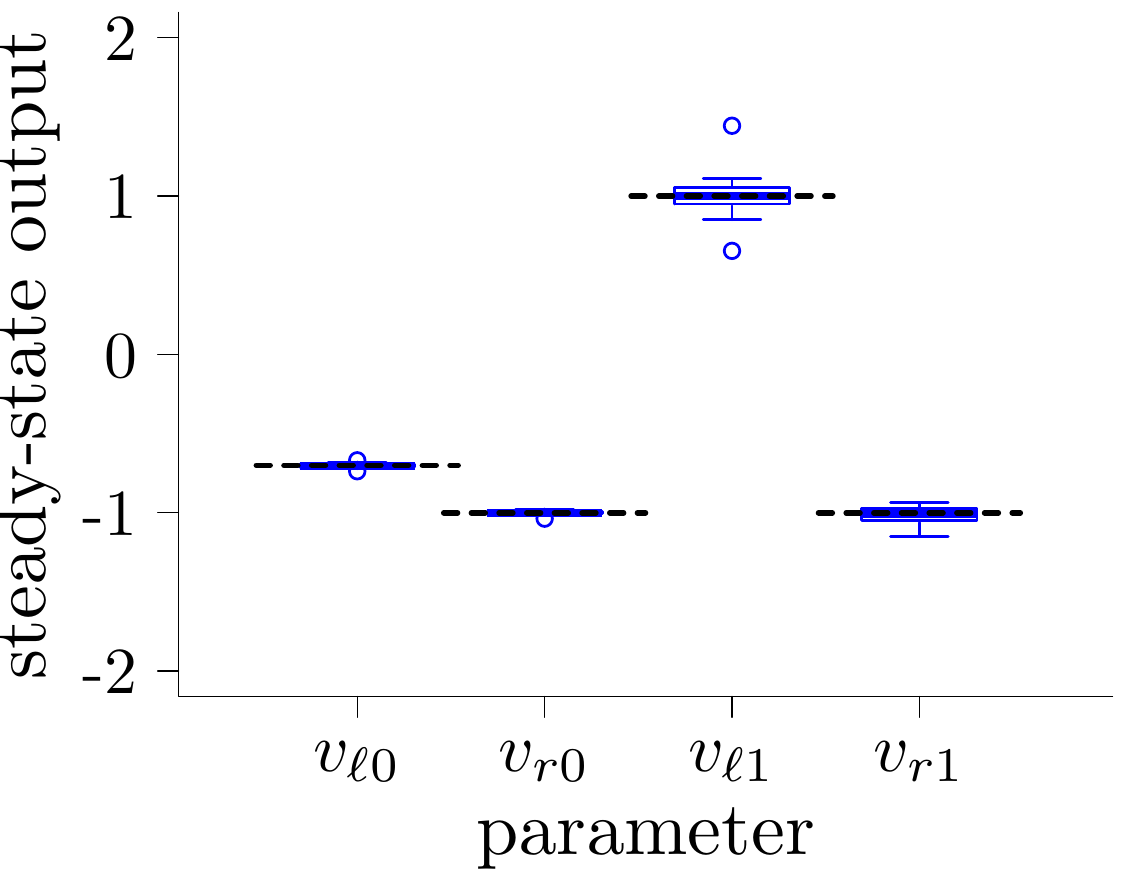}  
	}
	\subfloat[(b) Three hidden neurons]{
		\includegraphics[width = 1.49 in]{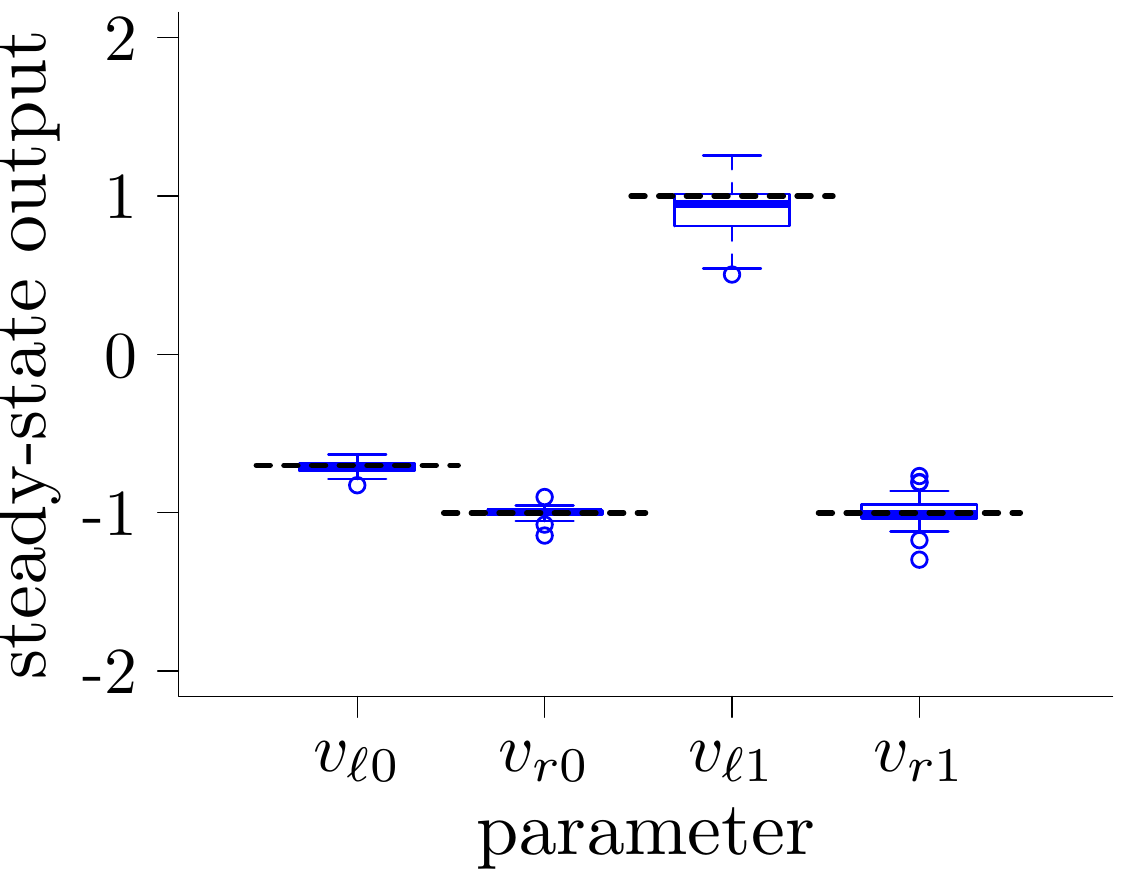}
	}
	\subfloat[(c) Five hidden neurons]{
		\includegraphics[width = 1.49 in]{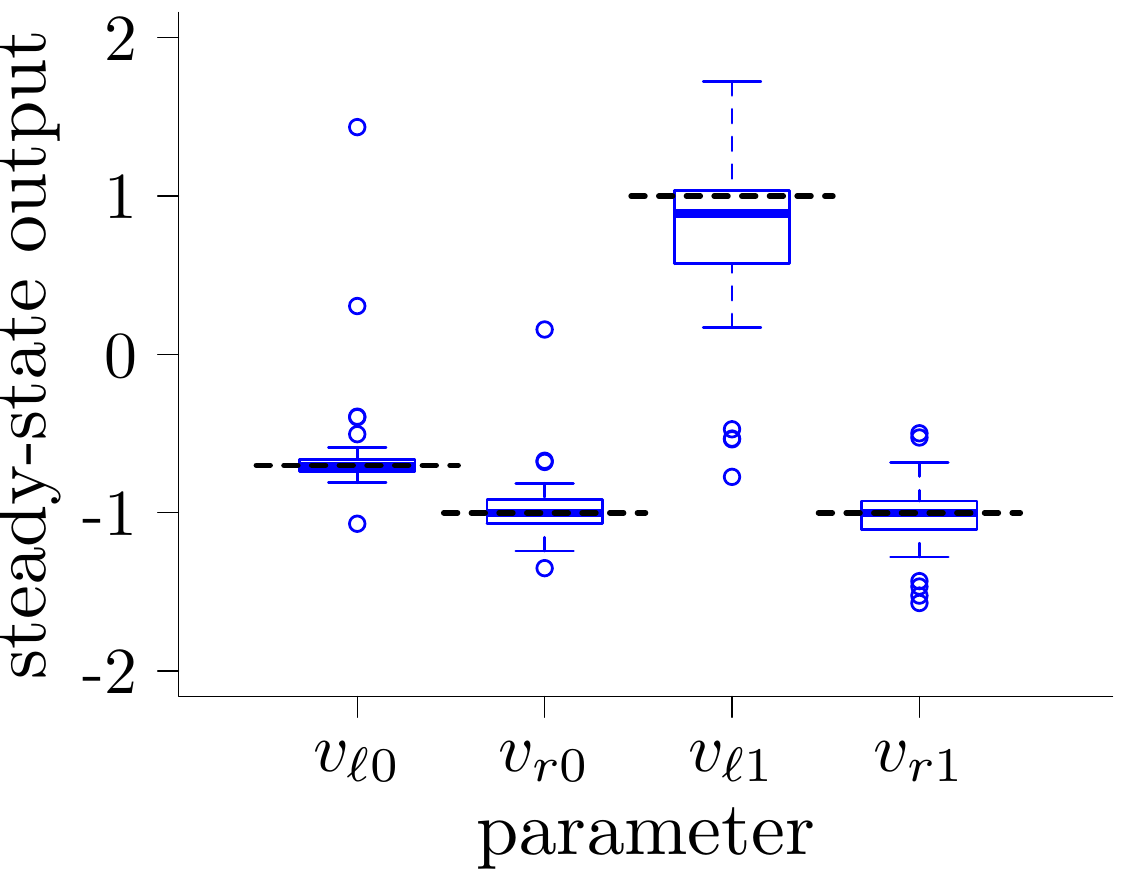}
	}
	\caption{\textit{Turing Learning} can infer an agent's behavior without assuming its control system structure to be known. These plots show the steady-state outputs (in the $20$th time step) of the inferred neural networks with the highest subjective fitness in the $1000$th generation of 30 simulation runs. Two outliers in (c) are not shown.}
	\label{fig:steady_output_RNN}
\end{figure}

\captionsetup[subfigure]{labelformat=empty}  
\begin{figure}[!t]
	\centering
	\subfloat[(a) One hidden neuron]{  
		\includegraphics[width = 1.49 in]{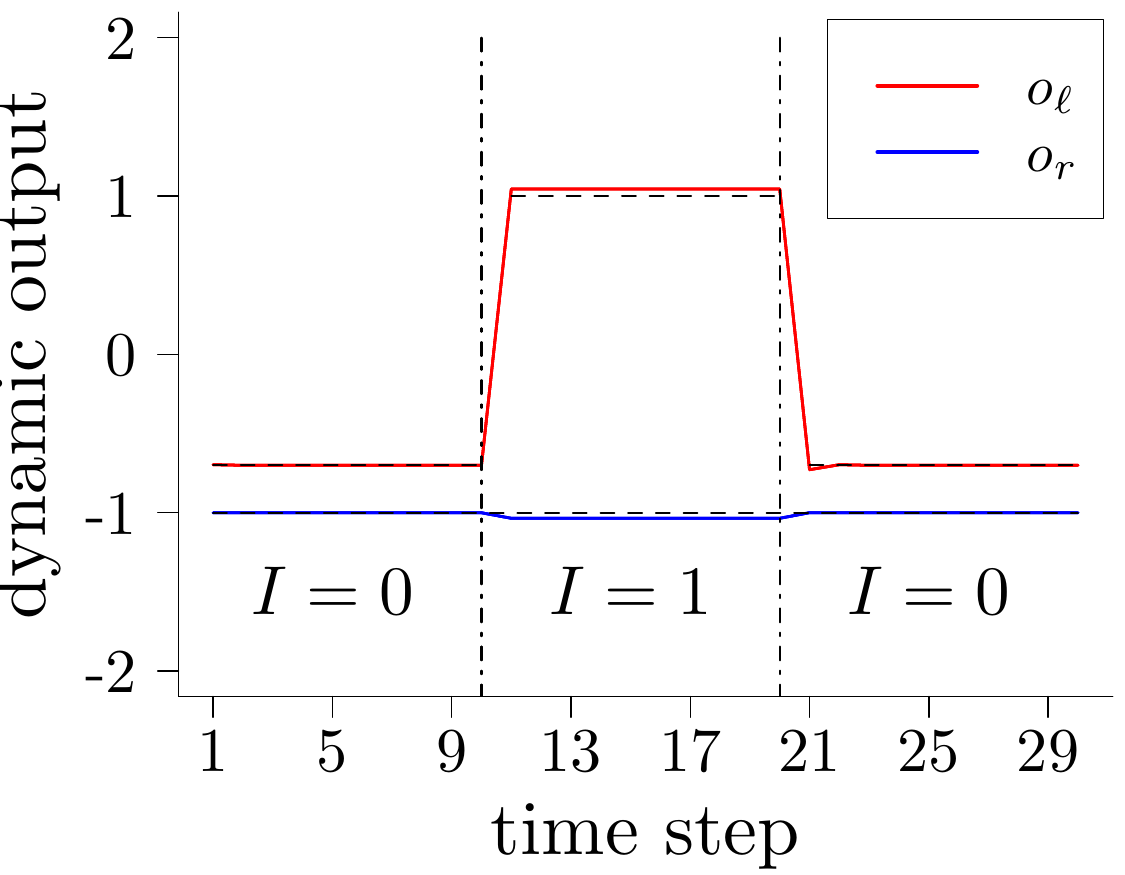}  
	}
	\subfloat[(b) Three hidden neurons]{
		\includegraphics[width = 1.49 in]{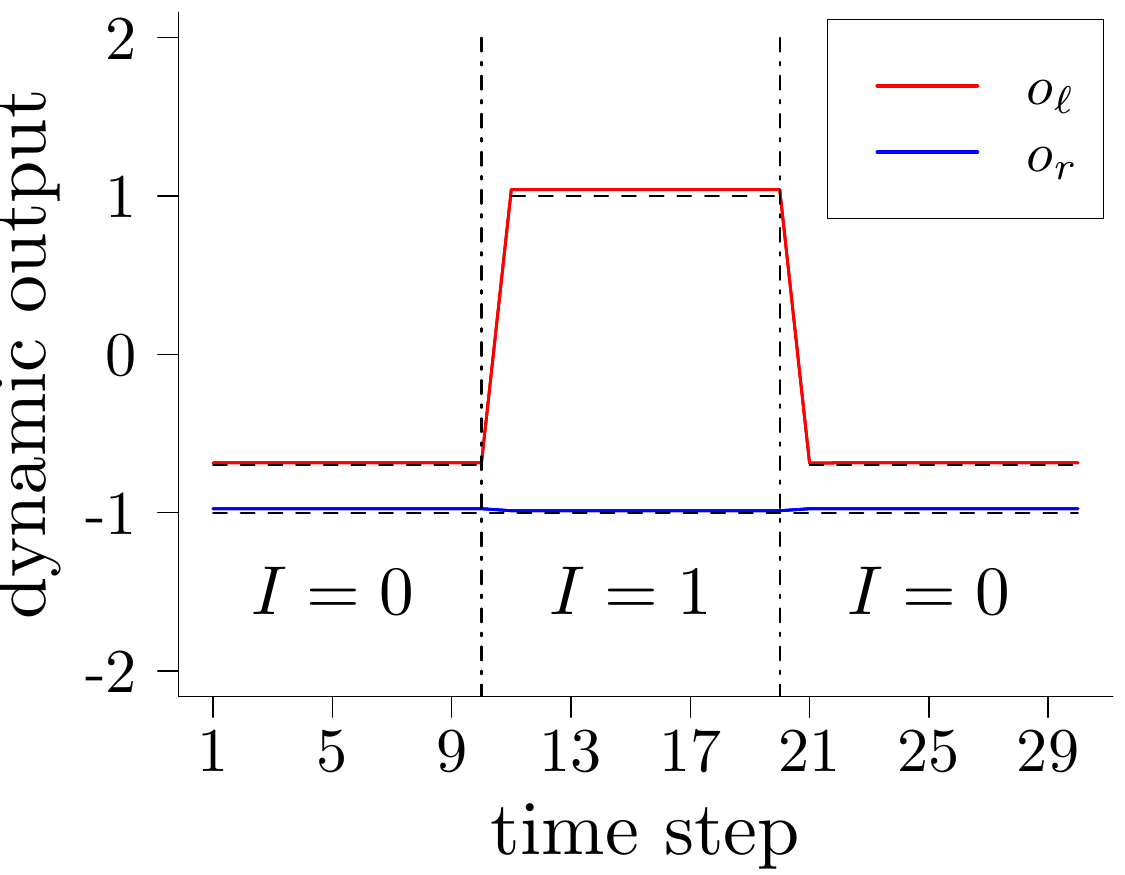}
	}
	\subfloat[(c) Five hidden neurons]{
		\includegraphics[width = 1.49 in]{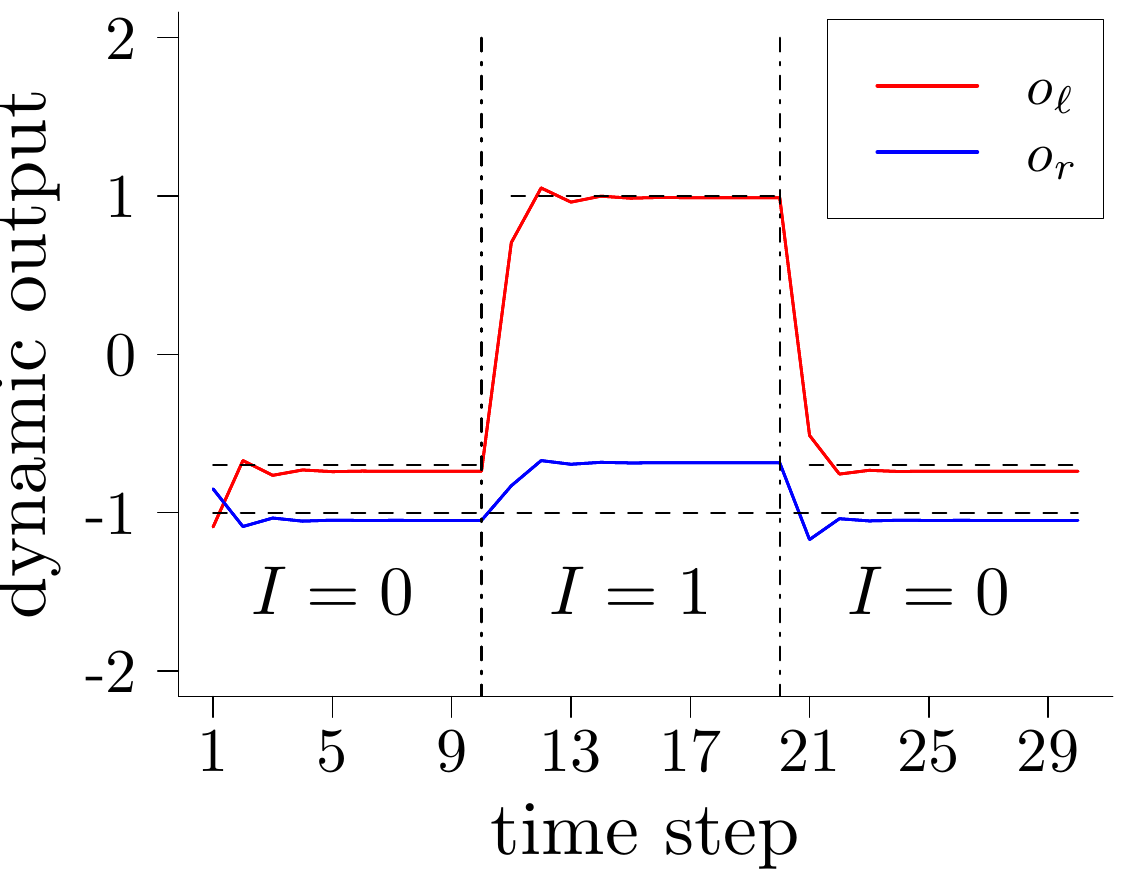}
	}
	\caption{Dynamic outputs of the inferred neural network with median performance. The network's input in each case was $I=0$ (time steps 1--10), $I=1$ (time steps 11-20) and $I = 0$ (time steps 21--30). See text for details.}
	\label{fig:dynamic_output_RNN}
\end{figure}

\textcolor{black}{In the previous sections, we assumed the agent's control system structure to be known and only inferred its parameters. To further investigate the generality of \textit{Turing Learning}, we now represent the model in a more general form, namely a (recurrent) Elman neural network~\citep{Elman1990}. The network inputs and outputs are identical to those used for our reactive models. In other words, the Elman network has one input ($I$) and two outputs representing the left and right wheel speed of the robot. A bias is connected to the input and hidden layers of the network, respectively. 
We consider three network structures with one, three, and five hidden neurons, which correspond, respectively, to $7$, $23$ and $47$ weights to be optimized. Except for a different number of parameters to be optimized, the experimental setup is equivalent in all aspects to that of Section~\ref{sec:simulation_setup}.}

\textcolor{black}{We first analyze the steady-state behavior of the inferred network models. To obtain their steady-state outputs, we fed them with a constant input ($I=0$ or $I=1$ depending on the parameters) for $20$ time steps. Fig.~\ref{fig:steady_output_RNN} shows the outputs in the final time step of the inferred models with the highest subjective fitness in the last generation in $30$ runs for the three cases. In all cases, the parameters of the swarming agent can be inferred correctly with reasonable accuracy. More hidden neurons lead to worse results, probably due to the larger search space.} 

\textcolor{black}{We now analyze the dynamic behavior of the inferred network models. Fig.~\ref{fig:dynamic_output_RNN} shows the dynamic output of $1$ of the $30$ neural networks. The chosen neural network is the one exhibiting the median performance according to metric $\sum \limits_{i=1}^{4} \sum \limits_{t=1}^{20} (o_{it} - p_{i})^2$, where $p_i$ denotes the $i$th parameter in Eq.~\eqref{eq:aggregation_optimal_controller}, and $o_{it}$ denotes the output of the neural network in the $t$th time step corresponding to the $i$th parameter in Eq.~\eqref{eq:aggregation_optimal_controller}.  
The inferred networks react to the inputs rapidly and maintain a steady-state output (with little disturbance). The results show that \textit{Turing Learning} can infer the behavior without assuming the agent's control system structure to be known.}

\subsubsection{\todo{Separating the replicas and the agents}}
\label{sec:separating_replicas_agents}

\todo{In our two case studies, the replica was mixed into \textcolor{black}{a} group of agents. \textcolor{black}{In the context of animal behavior studies, a robot replica may be introduced into a group of animals and recognized as a conspecific~\citep{J.Halloy2007, Faria2010}. However, if behaving abnormally, the replica may disrupt the behavior of the swarm~\citep{Bjerknes2013}.} For the same reason, the insertion of a replica that exhibits different behavior or is not recognized as conspecific may disrupt the \todo{behavior of the swarm} and hence the models obtained may be biased.}
In this case, an alternative method would be to isolate the influence of the replica(s). We performed an additional simulation study where agents and replicas were never mixed. Instead, each trial focused on either a group of agents, or of replicas. All replicas in a trial executed the same model.
The group size was identical in both cases. The tracking data of the agents and the replicas from each sample were then fed into the classifiers for making judgments. 

\begin{figure}[!t]
	\centering
	\includegraphics[width=3.0 in]{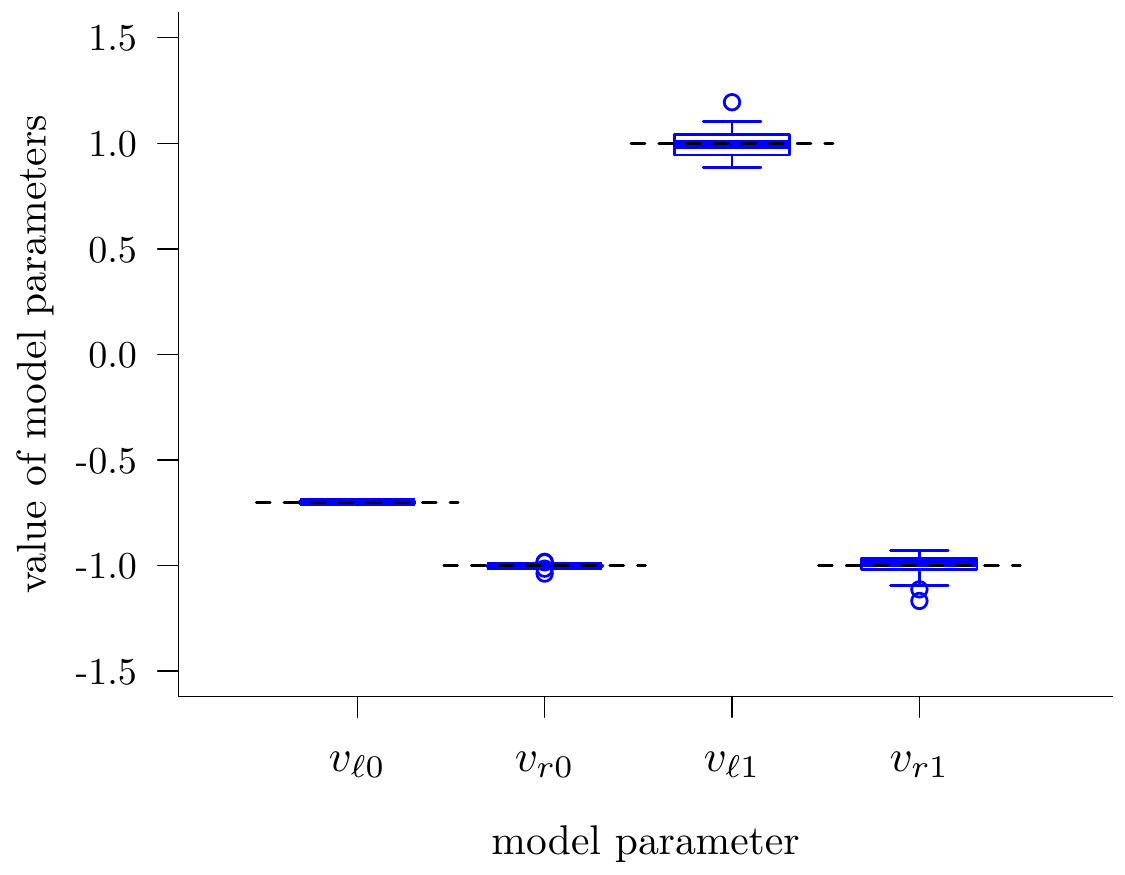}
	\caption{Model parameters inferred by a variant of \textit{Turing Learning} that observes swarms of aggregating agents and swarms of replicas in isolation, thereby avoiding potential bias. Each box corresponds to the models with the highest subjective fitness in the $1000$th generation of 30 simulation runs.\label{fig:model_parameters_box_aggregation_separate}}
\end{figure}

The distribution of the \todo{inferred} model parameters is shown in Fig.~\ref{fig:model_parameters_box_aggregation_separate}. The results show that \textit{Turing Learning} can still identify the model parameters well. There is no significant difference between either approach in the case studies considered in this paper. \todo{The method of separating replicas and agents is recommended if potential biases are suspected.}

\subsubsection{Inferring other reactive behaviors}\label{sec:infer_other_behaviors}

The aggregation controller that agents used in our case study was originally synthesized by searching over \textcolor{black}{the parameter space defined in Eq.~\eqref{controller:form} with $n=2$},
using a metric to assess the swarm's global performance~\citep{Gauci2014_ijrr}. \textcolor{black}{Each of these points produces a global behavior. Some of these behaviors are particularly interesting, such as the circle formation behavior reported in~\citep{Melvin_DARS2014}.}

We now investigate whether \todo{\textit{Turing Learning} can infer} arbitrary controllers in this space, irrespective of the global behaviors they lead to. We generated $1000$ controllers randomly in the \textcolor{black}{parameter space defined in Eq.~\eqref{controller:form}}, with uniform distribution. For each controller, we performed one run, and selected the subjectively best model in the last ($1000$th) generation. 

\begin{figure}[!t]%
	\centering
		\subfloat[(a) \label{fig:MAE_histgram_random_controllers}]{%
			\includegraphics[width=2.3 in]{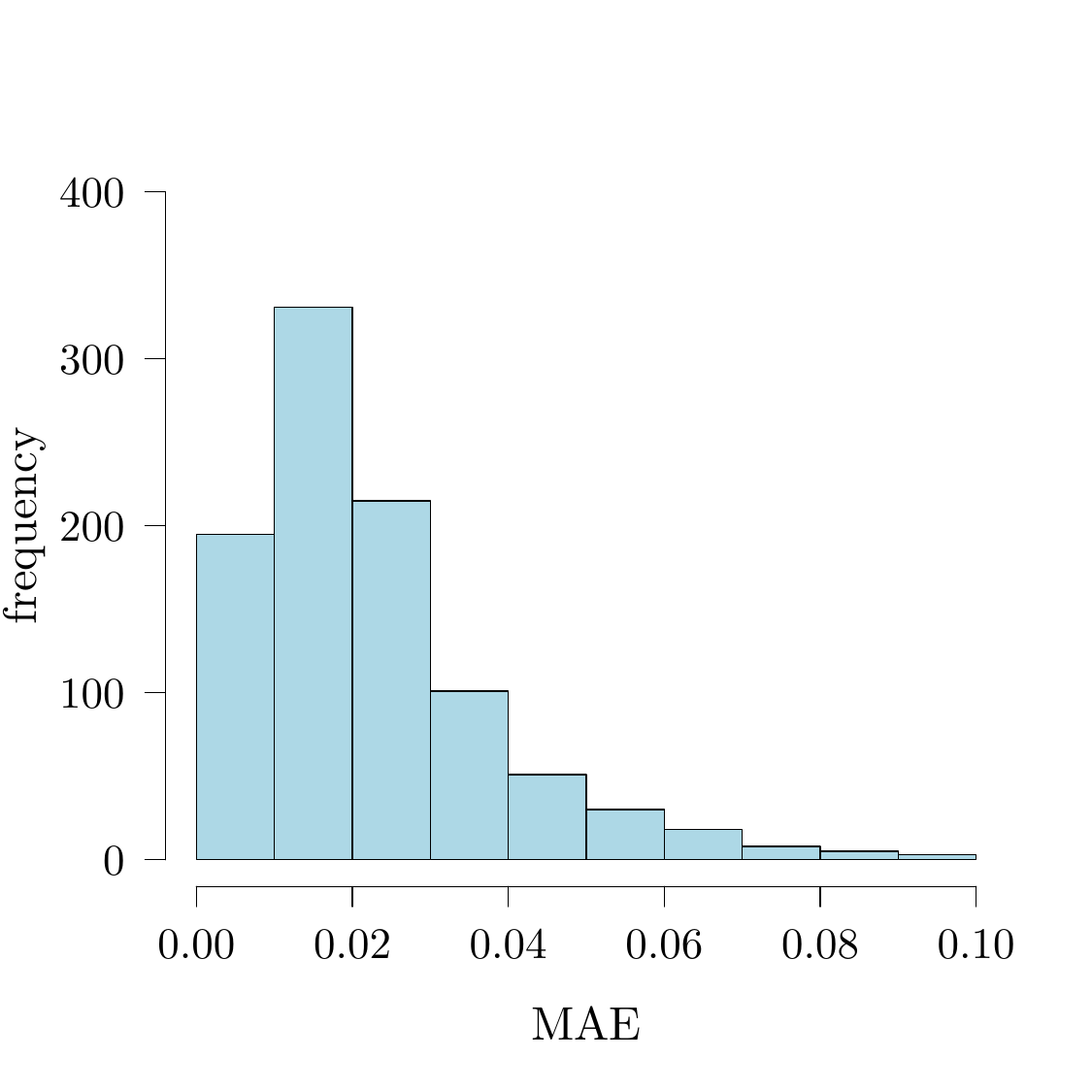}
		}
		\subfloat[(b) \label{fig:AE_box_random_controllers}]{%
			\includegraphics[width=2.3 in]{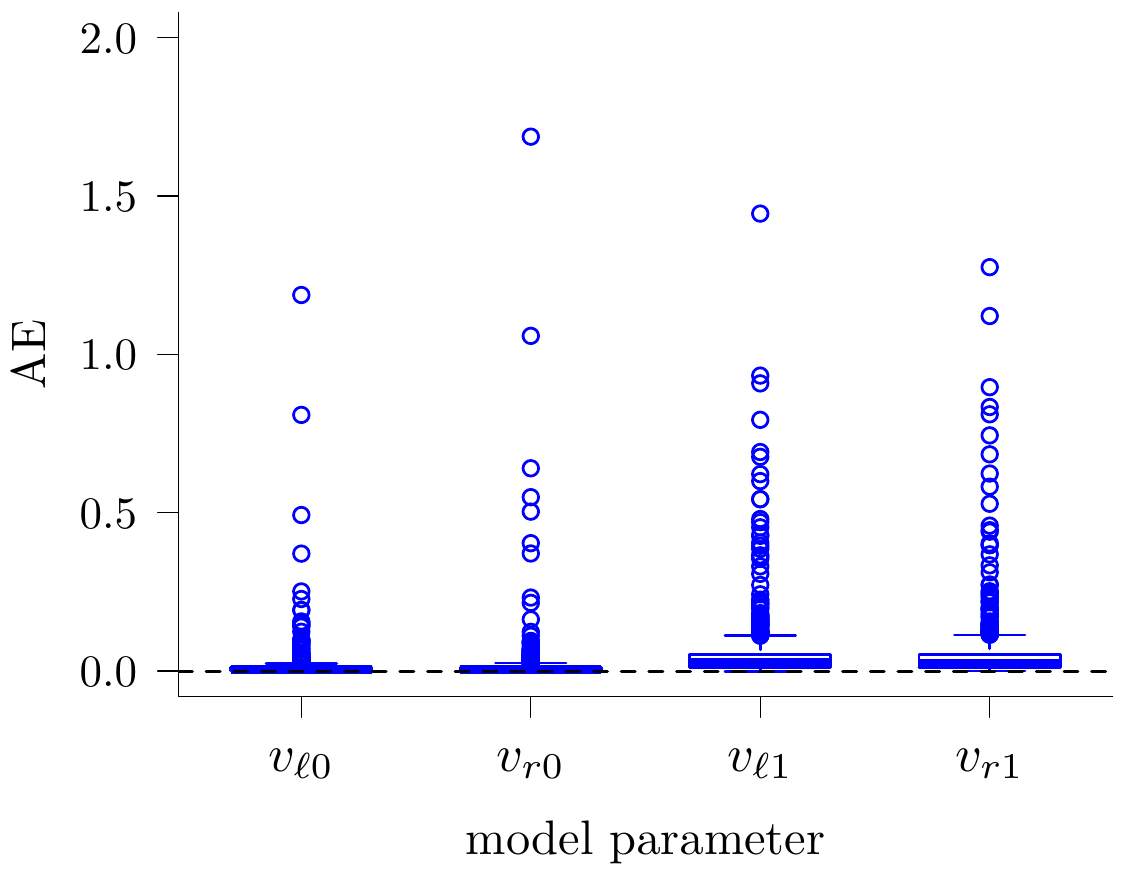}
		}
		\caption{\textit{Turing Learning} inferring the models for 1000 randomly generated agent behaviors. For each behavior, one run of \textit{Turing Learning} was performed and the model with the highest subjective fitness after $1000$ generations was considered. (a) Histogram of the models' MAE (defined in Eq.~\eqref{eq:MAE}; $43$ points that have an MAE larger than $0.1$ are not shown); and (b) AEs (defined in Eq.~\eqref{eq:AE}) for each model parameter.}
		\label{fig:model_parameters_random_controllers}
\end{figure}

Fig.~\subref*{fig:MAE_histgram_random_controllers} shows a histogram of the MAE of the \todo{inferred} models. The distribution has a single mode close to zero, and decays rapidly for increasing values. Over $89\%$ of the $1000$ cases have an error below $0.05$. This suggests that the accuracy of~\textit{Turing Learning} is not highly sensitive to the particular behavior under investigation (i.e., most behaviors are learned equally well). Fig.~\subref*{fig:AE_box_random_controllers} shows the AEs of each model parameter. The means (standard deviations) of the AEs in each parameter are as follows: $0.01$ ($0.05$), $0.02$ ($0.07$), $0.07$ ($0.6$), and $0.05$ ($0.2$). We performed a statistical test on the AEs between the model parameters corresponding to $I=0$ ($v_{\ell0}$ and $v_{r0}$) and $I=1$ ($v_{\ell1}$ and $v_{r1}$). The AEs of the \todo{inferred} $v_{\ell0}$ and  $v_{r0}$ are significantly lower than those of $v_{\ell1}$ and  $v_{r1}$. This is likely due to the reason reported in Section~\ref{sec:analysis_evolved_models}; that is, an agent \finaltodo{is likely to spend} more time seeing nothing ($I=0$) than seeing other agents ($I=1$) in each trial.

\section{Physical experiments}\label{sec:physical_implementation}
In this section, we present a real-world validation of \textit{Turing Learning}. We explain how it can be used to infer the behavior of a swarm of real agents. The agents and replicas are represented by physical robots. We use the same type of robot (e-puck) as in simulation. The agents execute the aggregation behavior described in Section~\ref{sec:aggregation_behavior}. The replicas execute the candidate models. We use two replicas to speed up the \todo{identification} process, as will be explained in Section~\ref{sec:experimental_protocol}.

\subsection{Physical platform}\label{sec:experimental_setup}

\begin{figure}[!t]
    \centering
    \includegraphics[width=3.45in]{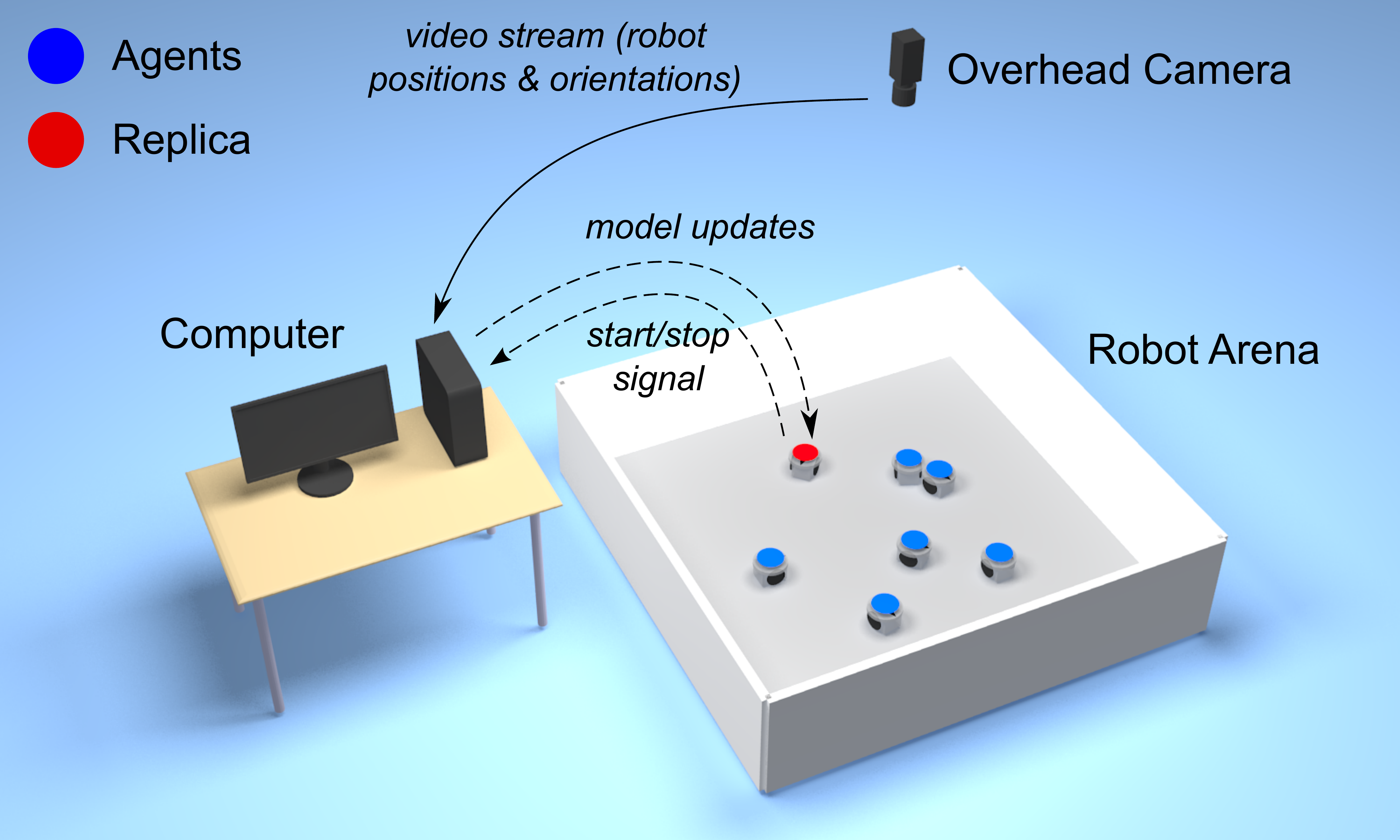}
    \caption{Illustration of the general setup for inferring the behavior of physical agents---e-puck robots (not to scale). The computer runs the \todo{\textit{Turing Learning}} algorithm, which produces models and classifiers. The models are uploaded and executed on the replica. The classifiers run on the computer. They are provided with the agents' and replica's motion data, extracted from the video stream of the overhead camera.}
    \label{fig:physical_system_setup}
\end{figure} 

The physical setup, shown in Fig.~\ref{fig:physical_system_setup}, consists of an arena with robots (representing agents or replicas), a personal computer (PC), and an overhead camera. The PC runs the \todo{\textit{Turing Learning}} algorithm. It communicates with the replicas, providing them models to be executed, but does not exert any control over the agents. The overhead camera supplies the PC with a video stream of the swarm. The PC performs video processing to obtain motion data about individual robots. We now describe the physical platform in more detail.

\subsubsection{Robot arena}

The robot arena is rectangular with sides $\unit[200]{cm} \times \unit[225]{cm}$, and bounded by walls $\unit[50]{cm}$ high. The floor has a light gray color, and the walls are painted white.
\subsubsection{Robot platform and sensor implementations}\label{sec:robot_platform_sensor_implementation}

\begin{figure}[!t]
	\centering
	\includegraphics[width=3.0in]{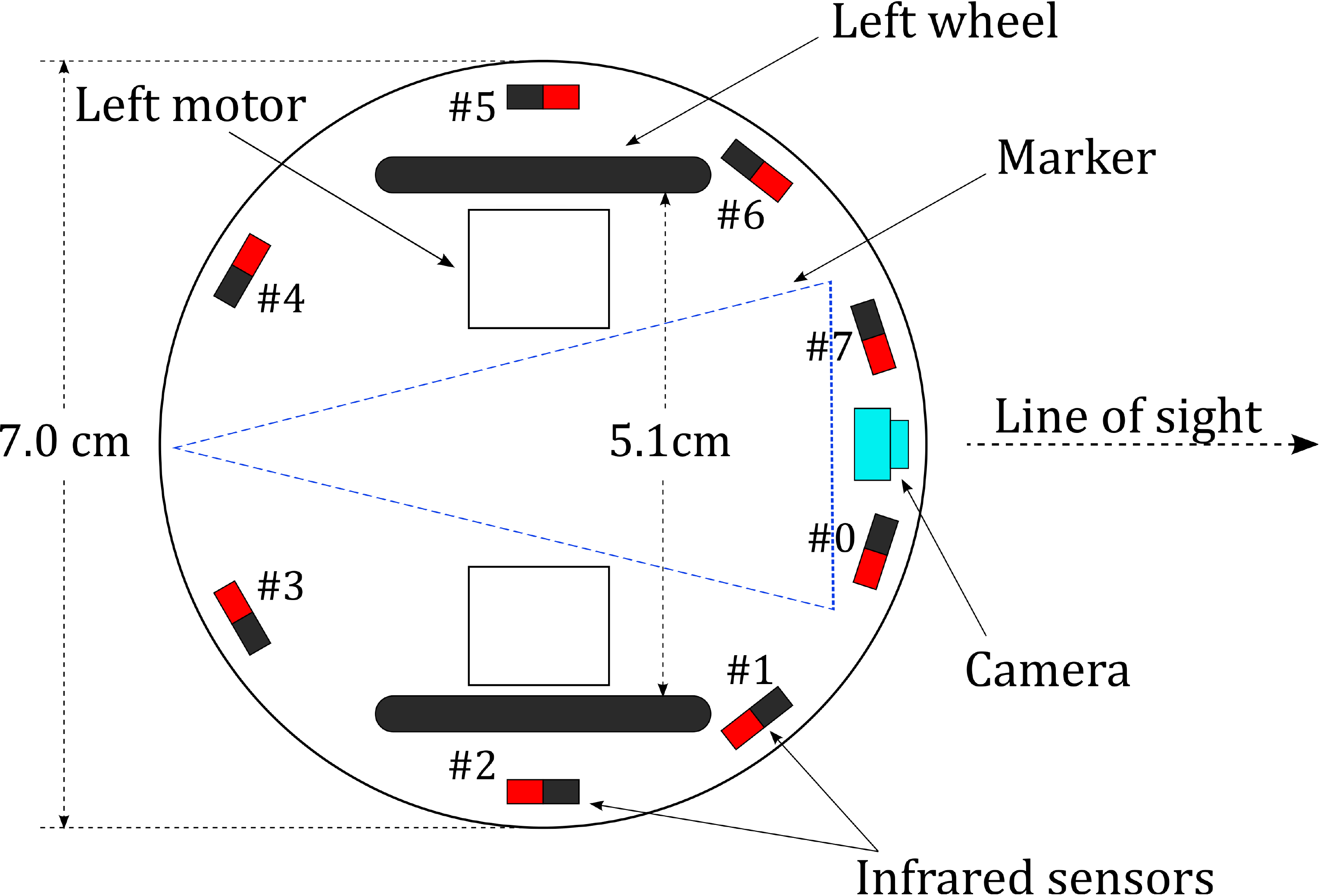}
	\caption{Schematic top view of an e-puck, indicating the locations of its motors, wheels, camera and infrared sensors. Note that the marker is pointing towards the robot's back.}
	\label{fig:e_puck_schematic}
\end{figure}

A schematic top view of the e-puck is shown in Fig.~\ref{fig:e_puck_schematic}. We implement the line-of-sight sensor using the e-puck's directional camera, located at its front. For this purpose, we wrap the robots in black `skirts' (see Fig.~\ref{fig:e-puck_body}) to make them distinguishable against the light-colored arena. While in principle the sensor could be implemented using one pixel, we use a column of pixels from a subsampled image to compensate for misalignment in the camera's vertical orientation. The gray values from these pixels are used to distinguish robots ($I=1$) against the arena ($I=0$). For more details about this sensor realization, see~\citep{Gauci2014_ijrr}.

We also use the e-puck's infrared sensors, in two cases. Firstly, before each trial, the robots disperse themselves within the arena. In this case, they use the infrared sensors to avoid both robots and walls, making the dispersion process more efficient. Secondly, we observe that using only the line-of-sight sensor can lead to robots becoming stuck against the walls of the arena, hindering the identification process. We therefore use the infrared sensors for wall avoidance, but in such a way as to not affect inter-robot interactions\footnote{To do so, the e-pucks determine whether a perceived object is a wall or another robot.}. Details of these two collision avoidance behaviors are provided in the online supplementary materials~\citep{online_supplementary_material_tevc2014}.


%

\subsubsection{Motion capture}
To facilitate motion data extraction, we fit robots with markers on their tops, consisting of a colored isosceles triangle on a circular white background (see Fig.~\ref{fig:e-puck_body}). The triangle's color allows for distinction between robots; we use blue triangles for all agents, and orange and purple triangles for the two replicas. The triangle's shape eases extraction of robots' orientations.

The robots' motion is captured using a camera mounted around $\unit[270]{cm}$ above the arena floor. The camera's frame rate is set to $\unit[10]{fps}$. The video stream is fed to the PC, which performs video processing to extract motion data about individual robots (position and orientation). The video processing software is written using OpenCV~\citep{Gary2008}.

\subsection{\todo{\textit{Turing Learning}} with physical robots}\label{sec:coevolutionary_process}
\finaltodo{Our objective is to} infer the agent's aggregation behavior. \finaltodo{We do not wish to infer} the agent's dispersion behavior, which is periodically executed to distribute already-aggregated agents. \finaltodo{To separate these two behaviors, the}
 robots (agents and replicas) \finaltodo{and the system} are
implicitly synchronized. This is realized by making each robot execute a fixed behavioral loop of constant duration. The PC also executes a fixed behavioral loop, but the timing is determined by the signals received from the replicas. Therefore, the PC is synchronized with the swarm. The PC communicates with the replicas via Bluetooth. At the start of a run, or after a human intervention (see Section~\ref{sec:experimental_protocol}), robots are initially synchronized using an infrared signal from a remote control.

\begin{figure}[!t]
    \centering
    \includegraphics[width=3.0in]{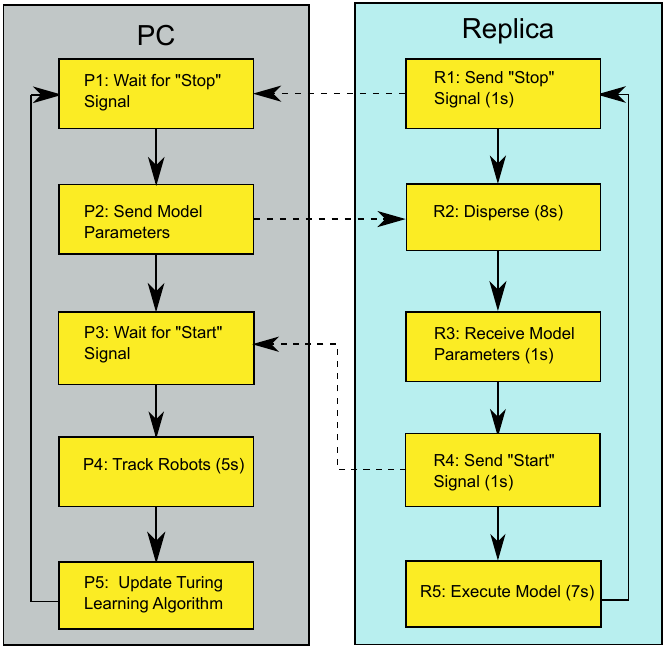}
    \caption{Flow diagram of the programs run by the PC and a replica in the physical experiments. Dashed arrows represent communication between the two units. See Section~\ref{sec:coevolutionary_process} for details. \todo{The PC does not exert any control over the agents.}}
    \label{fig:agent_pc_interation}
\end{figure} 

Fig.~\ref{fig:agent_pc_interation} shows a flow diagram of the programs run by the PC and the replicas, respectively. Dashed arrows indicate communication between the units. 

\todo{The program running on the PC has the following states:}
\begin{itemize}
\renewcommand{\labelitemi}{\scriptsize$\bullet$} 
	\item \textit{P1.} \textit{Wait for ``Stop'' Signal.} The program is paused until ``Stop'' signals are received from both replicas. These signals indicate that a trial has finished.
	
	\item \textit{P2.} \textit{Send Model Parameters.} The PC sends new model parameters to the buffer of each replica. 
	
	\item \textit{P3.} \textit{Wait for ``Start'' Signal.} The program is paused until ``Start'' signals are received from both replicas, indicating that a trial is starting.
	
	\item \textit{P4.} \textit{Track Robots.} The PC waits $\unit[1]{s}$ and then tracks the robots using the overhead camera for $\unit[5]{s}$. The tracking data contain the positions and orientations of the agents and replicas. 
	
	\item \textit{P5.} \textit{Update Turing Learning Algorithm.} The PC uses the motion data from the trial observed in \textit{P4} to update the solution quality (fitness values) of the corresponding two models and all classifiers. Once all models in the current iteration cycle (generation) have been evaluated, the PC also generates new model and classifier populations. The method for calculating the qualities of solutions and the optimization algorithm are described in \todo{Sections~\ref{sec:turing_learning} and~\ref{sec:optimization_algorithm}} respectively. The PC then goes back to~\textit{P1}. 
\end{itemize}

\todo{The program running on the replicas has the following states:}
\begin{itemize}
\renewcommand{\labelitemi}{\scriptsize$\bullet$} 
\item \textit{R1}. \textit{Send ``Stop'' Signal.} After a trial stops, the replica informs the PC by sending a ``Stop'' signal. The replica waits $\unit[1]{s}$ before proceeding with~\textit{R2}, so that all robots remain synchronized. Waiting $\unit[1]{s}$ in other states serves the same purpose. 

\item\textit{R2}. \textit{Disperse.} The replica disperses in the environment, while avoiding collisions with other robots and the walls. This behavior lasts $\unit[8]{s}$.

\item\textit{R3}. \textit{Receive Model Parameters.} The replica reads new model parameters from its buffer (sent earlier by the PC). It waits $\unit[1]{s}$ before proceeding with~\textit{R4}.

\item\textit{R4}. \textit{Send ``Start'' Signal.} The replica sends a start signal to the PC to inform it that a trial is about to start. The replica waits $\unit[1]{s}$ before proceeding with~\textit{R5}.

\item\textit{R5}. \textit{Execute Model.} The replica moves within the swarm according to its model. This behavior lasts $\unit[7]{s}$ (the tracking data corresponds to the middle $\unit[5]{s}$, see~\textit{P4}). The replica then goes back to~\textit{R1}.
\end{itemize}

\todo{The program running on the agents has the same structure as the replica program.} However, in the states analogous to \textit{R1}, \textit{R3}, and \textit{R4}, they simply wait $\unit[1]{s}$ rather than communicate with the PC. In the state corresponding to \textit{R2}, they also execute the \textit{Disperse} behavior. In the state corresponding to \textit{R5}, they execute the agent's aggregation controller, rather than a model.

Each iteration (loop) of the program for the PC, replicas and agents lasts $\unit[18]{s}$. 

\subsection{Experimental setup}\label{sec:experimental_protocol}
As in simulation, we use a population size of $100$ for classifiers ($\mu = 50$,  $\lambda = 50$). However, the model population size is reduced from $100$ to $20$ ($\mu = 10$,  $\lambda = 10$), to shorten the experimentation time. We use $10$ robots: $8$ representing agents executing the original aggregation controller (Eq.~\eqref{eq:aggregation_optimal_controller}), and $2$ representing replicas that execute models. This means that in each trial, $2$ models from the population could be simultaneously evaluated; consequently, each generation consists of $20/2=10$ trials. 

The \todo{\textit{Turing Learning}} algorithm is implemented without any modification to the code used in simulation (except for model population size and observation time in each trial). We still let the model parameters evolve unboundedly (i.e., in $\mathbb{R}^4$). However, as the speed of the physical robots is naturally bounded, we apply the hyperbolic tangent function ($\tanh{x}$) on each model parameter, before sending a model to a replica. This bounds the parameters to $\left(-1,1\right)^4$, with $-1$ and $1$ representing the maximum backwards and forwards wheel speeds, respectively.

The \todo{\textit{Turing Learning}} runs proceed autonomously. In the following cases, however, there is  intervention:
\begin{itemize}
\item The robots have been running continuously for $25$ generations. All batteries are replaced.

\item Hardware failure has occurred on a robot, for example because of a lost battery connection or because the robot has become stuck on the floor. Appropriate action is taken for the affected robot to restore its functionality.

\item A replica has lost its Bluetooth connection with the PC. The connection with both replicas is restarted.

\item A robot indicates a low battery status through its LED after running for only a short time. That robot's battery is changed.
\end{itemize}

After an intervention, the ongoing generation is restarted, to limit the impact on the \textit{identification} process.

We conducted $10$ runs of \todo{\textit{Turing Learning}} using the physical system. Each run lasted $100$ generations, corresponding to $5$ hours (excluding human intervention time). Video recordings of all runs can be found in the online supplementary materials~\citep{online_supplementary_material_tevc2014}.


\subsection{Analysis of \todo{inferred} models}

\begin{figure}[!t]
    \centering
    \includegraphics[width=3.0in]{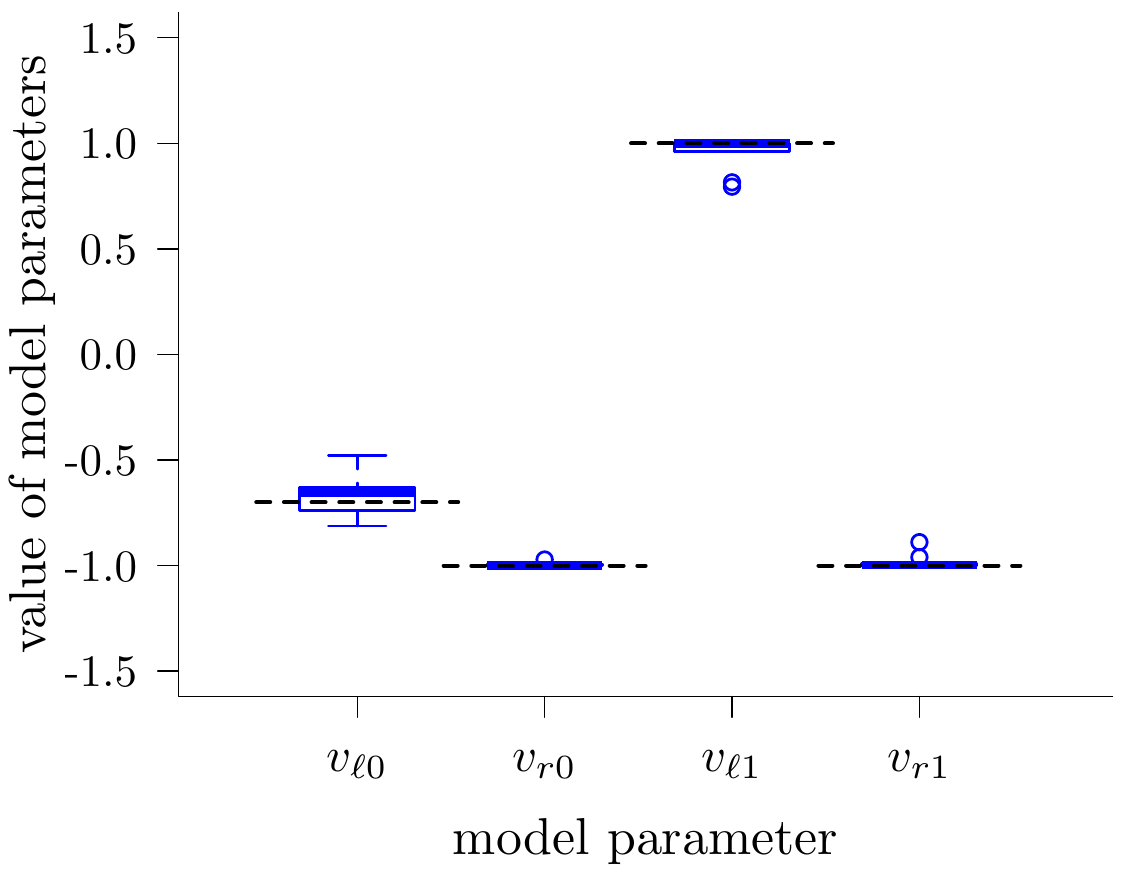}
    \caption{\todo{Model parameters \textit{Turing Learning} inferred from swarms of physical robots performing aggregation. The models are those with} the highest subjective fitness in the $100$th generation of $10$ runs. Dashed black lines indicate the ground truth\todo{, that is, the values of the parameters that the system is expected to learn.}}
    \label{fig:best_model_parameters_physical}
\end{figure}

We first investigate the quality of the models obtained. To select the `best' model from each run, we post-evaluated all models of the final generation $5$ times using all classifiers of that generation. The parameters of these models are shown in Fig.~\ref{fig:best_model_parameters_physical}. The means (standard deviations) of the AEs in each parameter are as follows: $0.08$ ($0.06$), $0.01$ ($0.01$), $0.05$ ($0.08$), and $0.02$ ($0.04$).

\begin{figure}[!t]%
	\centering
		\subfloat[(a) physical coevolutions\label{fig:model_parameters_convergence_compare_physical}]{%
			\includegraphics[width=2.25 in]{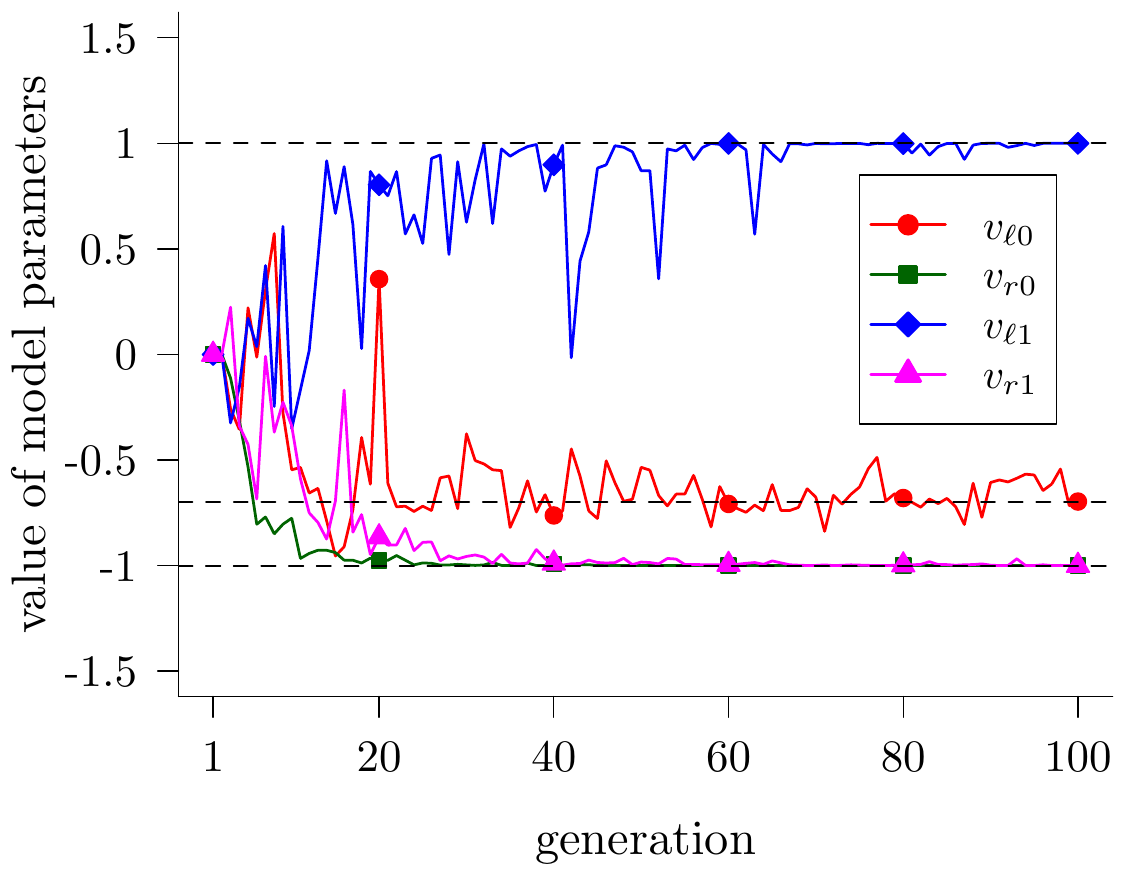} 
		}
		\subfloat[(b) simulated coevolutions\label{fig:model_parameters_convergence_compare_simulation}]{%
			\includegraphics[width=2.25 in]{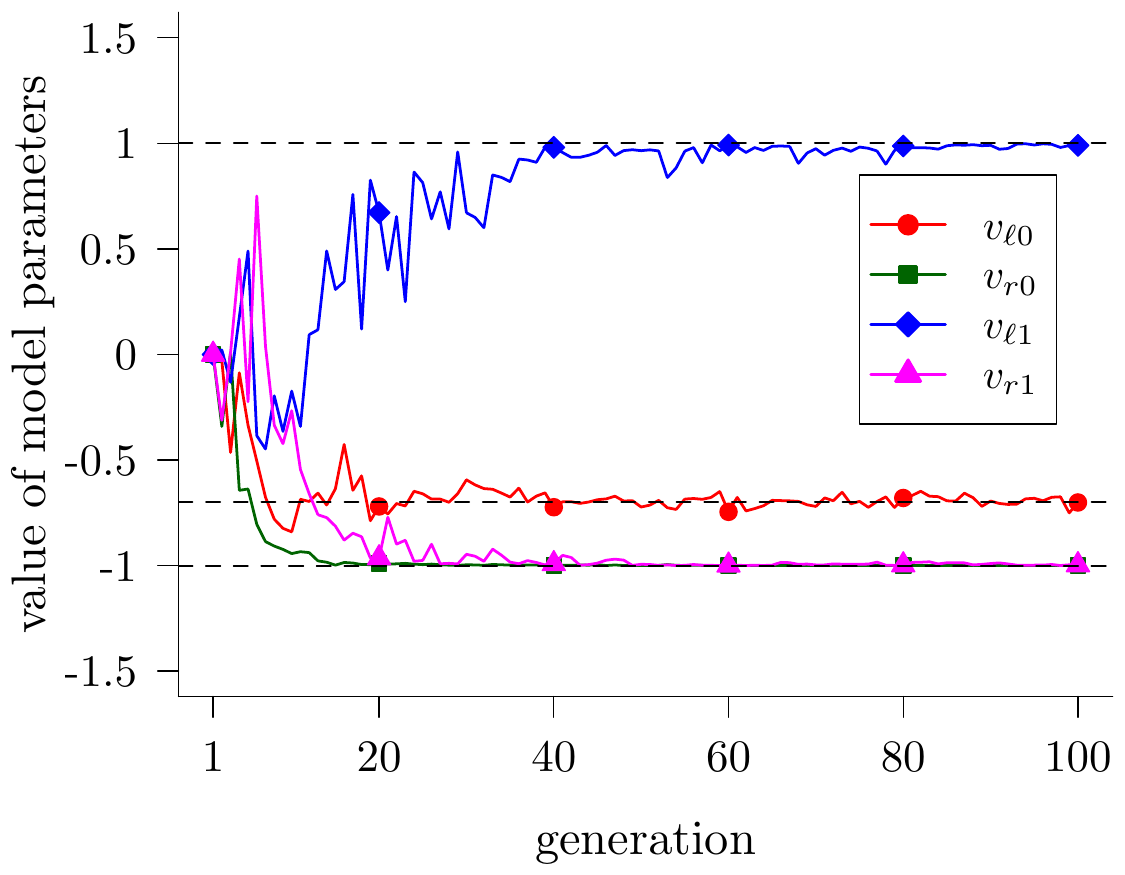}
		}
		\caption{Evolutionary dynamics of model parameters in (a) $10$ physical and (b) $10$ simulated runs of \textit{Turing Learning} (in both cases, equivalent setups were used). Curves represent median parameter values of the models with the highest subjective fitness across the $10$ runs. Dashed black lines indicate the ground truth.}
		\label{fig:model_parameters_convergence_compare}
\end{figure}

\begin{figure}[!t]%
	\centering
		\subfloat[(a) physical coevolutions\label{fig:MAE_simulation}]{%
			\includegraphics[width=2.3 in]{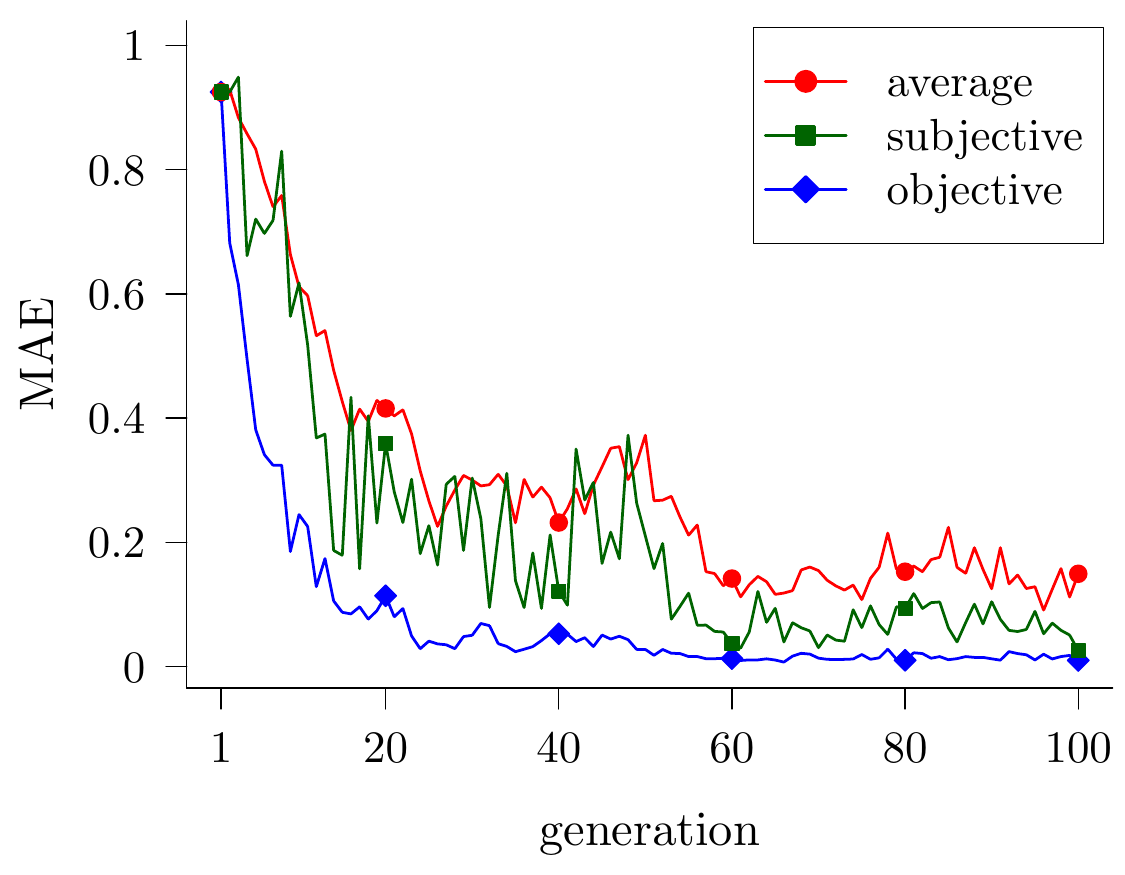} 
		}
		\subfloat[(b) simulated coevolutions\label{fig:MAE_physical}]{%
			\includegraphics[width=2.3 in]{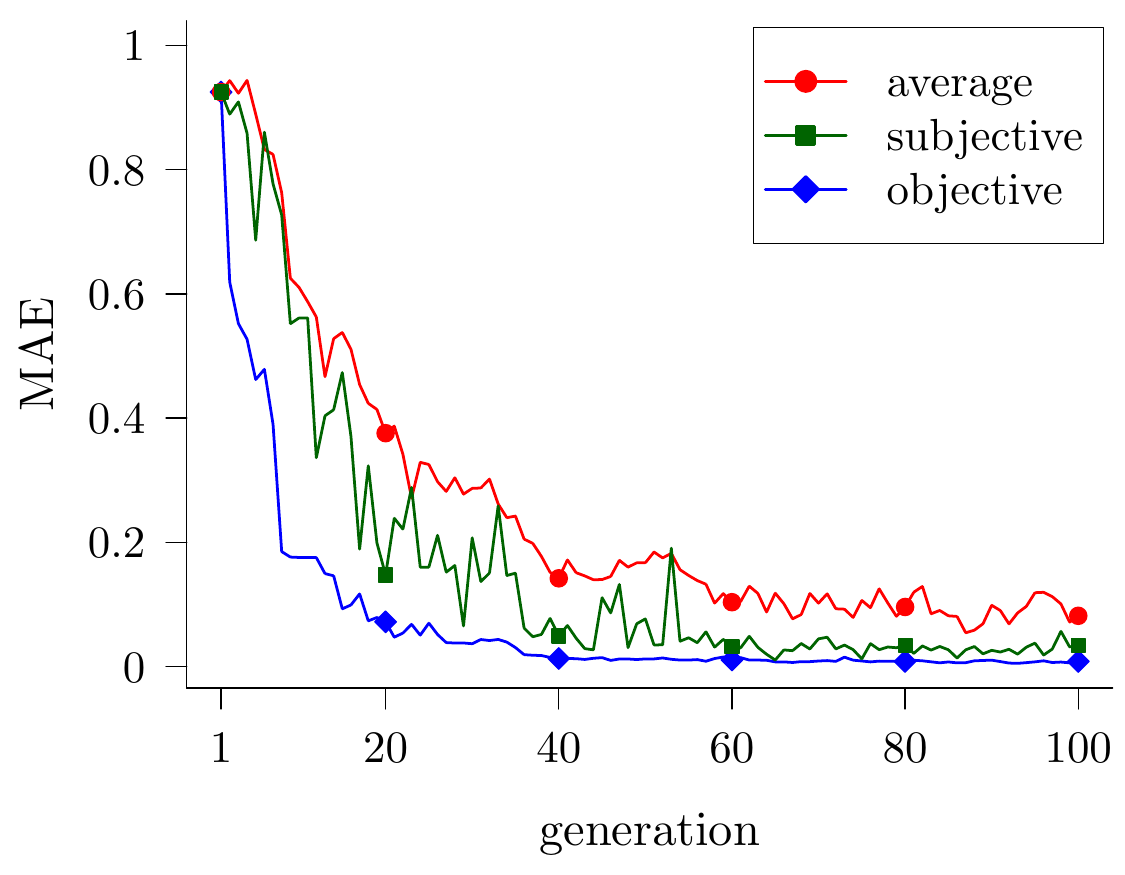}
		}
		\caption{Evolutionary dynamics of MAE (defined in Eq.~\eqref{eq:MAE}) for the candidate models in (a) $10$ physical and (b) $10$ simulated runs of \textit{Turing Learning}. Curves represent median values across $10$ runs. The red curve represents the average error of all models in a generation. The green and blue curves show, respectively, the errors of the models with the highest subjective and the highest objective fitness in a generation.}
		\label{fig:MAE_compare_simulation_physical}
\end{figure}

To investigate the effects of real-world conditions on the \todo{identification} process, we performed $10$ simulated runs \todo{of \textit{Turing Learning}} with the same setup as in the physical runs. Fig.~\ref{fig:model_parameters_convergence_compare} shows the evolutionary dynamics of the parameters of the \todo{inferred} models (with the highest subjective fitness) in the physical and simulated runs. The dynamics show good correspondence. However, the convergence in the physical runs is somewhat less smooth than that in the simulated ones (e.g., see spikes in $v_{\ell 0}$ and $v_{\ell 1}$). 
In each generation of every run (physical and simulated), we computed the MAE of each model. We compared the error of the model with the highest subjective fitness with the average and lowest errors. The results are shown in Fig.~\ref{fig:MAE_compare_simulation_physical}. For both the physical and simulated runs, the subjectively best model (green) has an error in between the lowest error (blue) and the average error (red) in the majority of generations. 

As we argued before (Section~\ref{sec:analysis_evolved_models}), in swarm systems, good agreement between local behaviors (e.g., controller parameters) may not guarantee similar global behaviors. For this reason, we investigate both 
the original controller (Eq.~\eqref{eq:aggregation_optimal_controller}), and 
a controller obtained from the physical runs. This latter controller is constructed by taking the median values of the parameters over the $10$ runs, which are:
$$
\mathbf{p}=\left(-0.65, -0.99, 0.99, -0.99\right).
$$
The set of initial configurations of the robots is common to both controllers. As it is not necessary to extract the orientation of the robots, a red circular marker is attached to each robot so that its position can be extracted with higher accuracy in the offline analysis~\citep{Gauci2014_ijrr}.
\begin{figure}[!t]%
	\centering
		\subfloat[(a) Largest Cluster Dynamics \label{fig:aggregation_dynamics_proportion}]{
			\includegraphics[width=2.3 in]{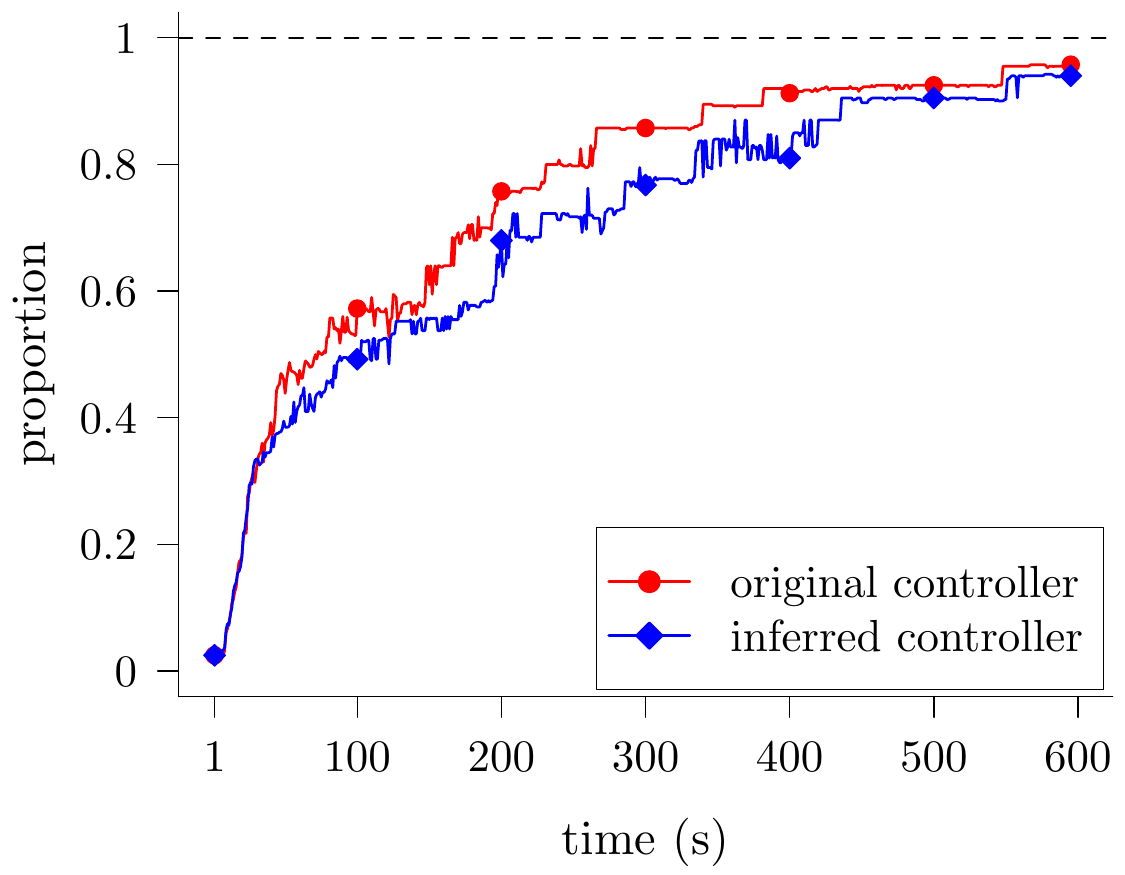}
		}
		\subfloat[(b) Dispersion Dynamics \label{fig:aggregation_dynamics_compactness}]{
			\includegraphics[width=2.3 in]{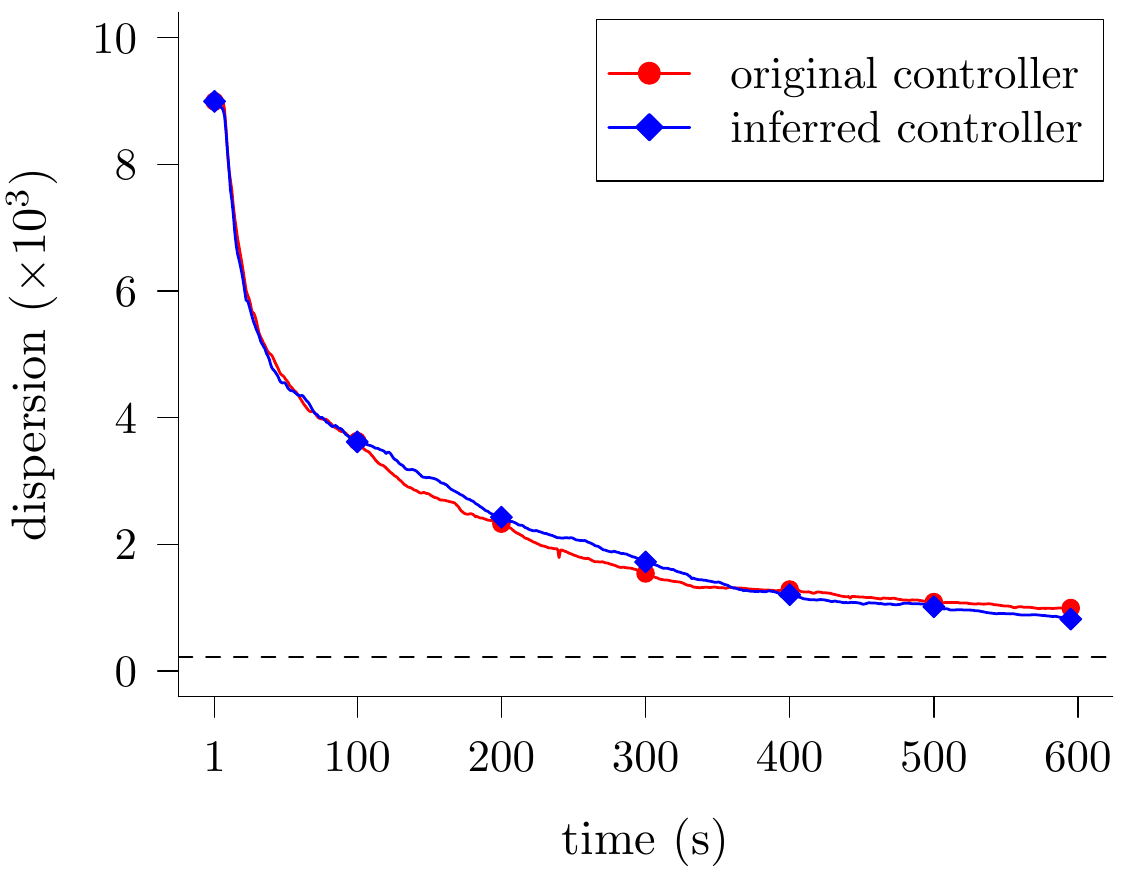}
		}
		\caption{Average aggregation dynamics in $10$ physical trials with $40$ physical e-puck robots executing the original agent controller (red) and the model controller (blue) \todo{inferred through observation of the physical system}. In (a), the vertical axis shows the proportion of robots in the largest cluster; in (b), it shows the robots' dispersion (see Section~\ref{sec:analysis_evolved_models}). Dashed lines in (a) and (b) respectively represent the maximum proportion and minimum dispersion that $40$ robots can achieve.}
		\label{fig:aggregation_dynamics_physical}
\end{figure}
\captionsetup[subfigure]{labelformat=empty}  
\begin{figure*}[!t]
	\centering
	\subfloat[initial configuration]{
		\includegraphics[width = 1.1 in]{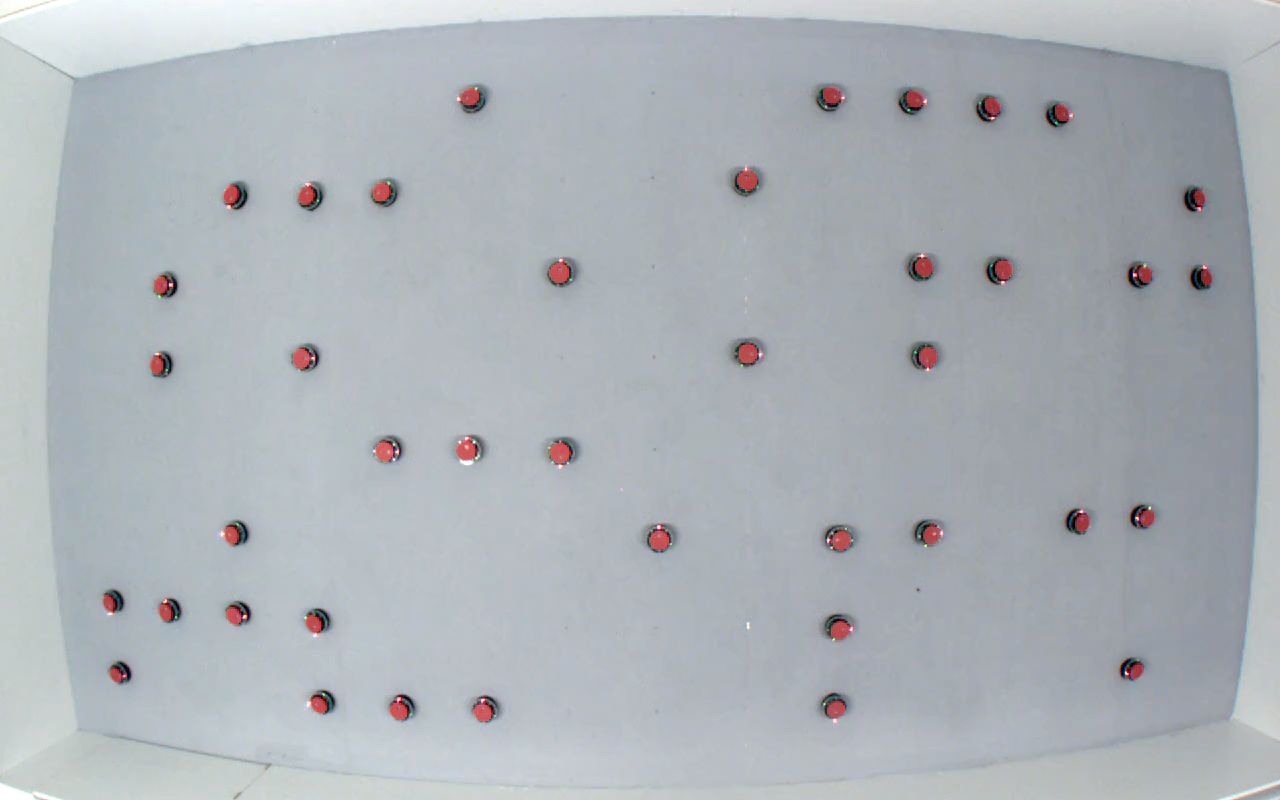}
	}
	\subfloat[after $20$ $\unit{s}$]{
		\includegraphics[width = 1.1 in]{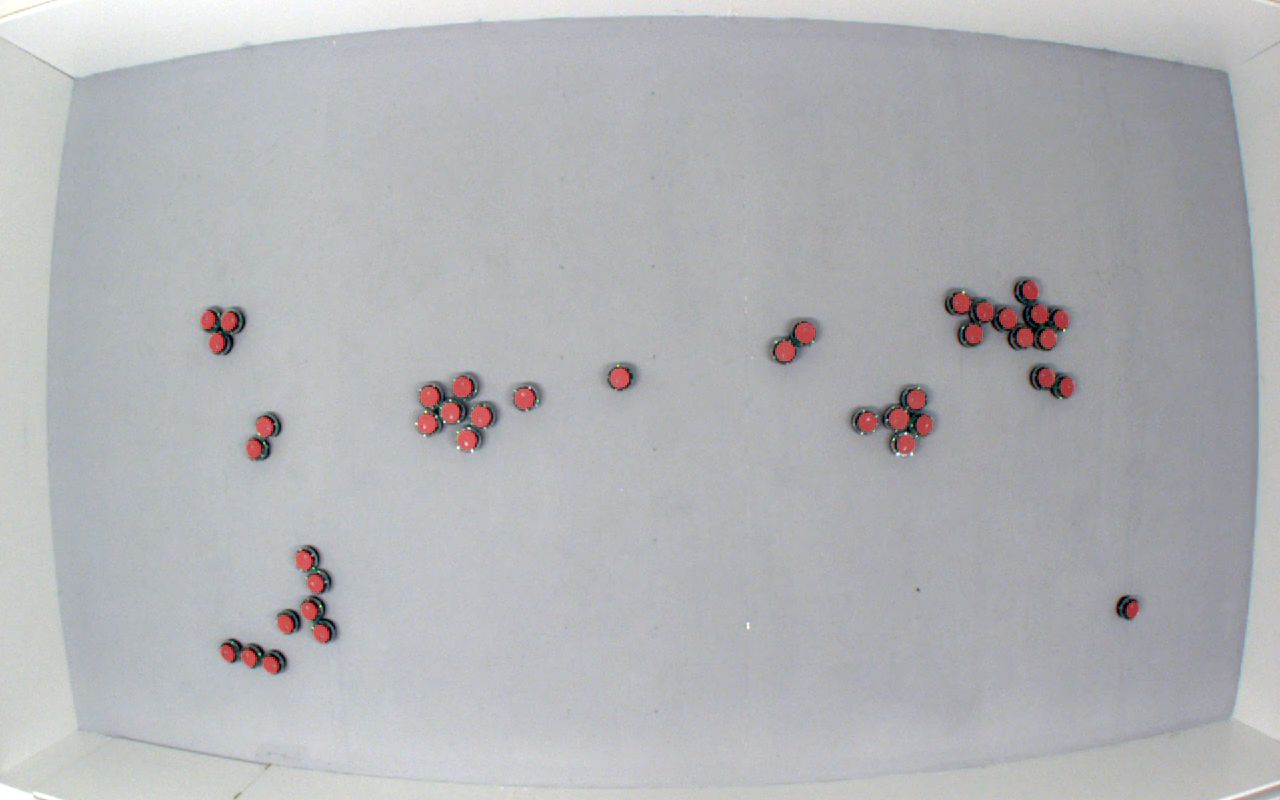}
	}
	\subfloat[after $40$ $\unit{s}$]{
		\includegraphics[width = 1.1 in]{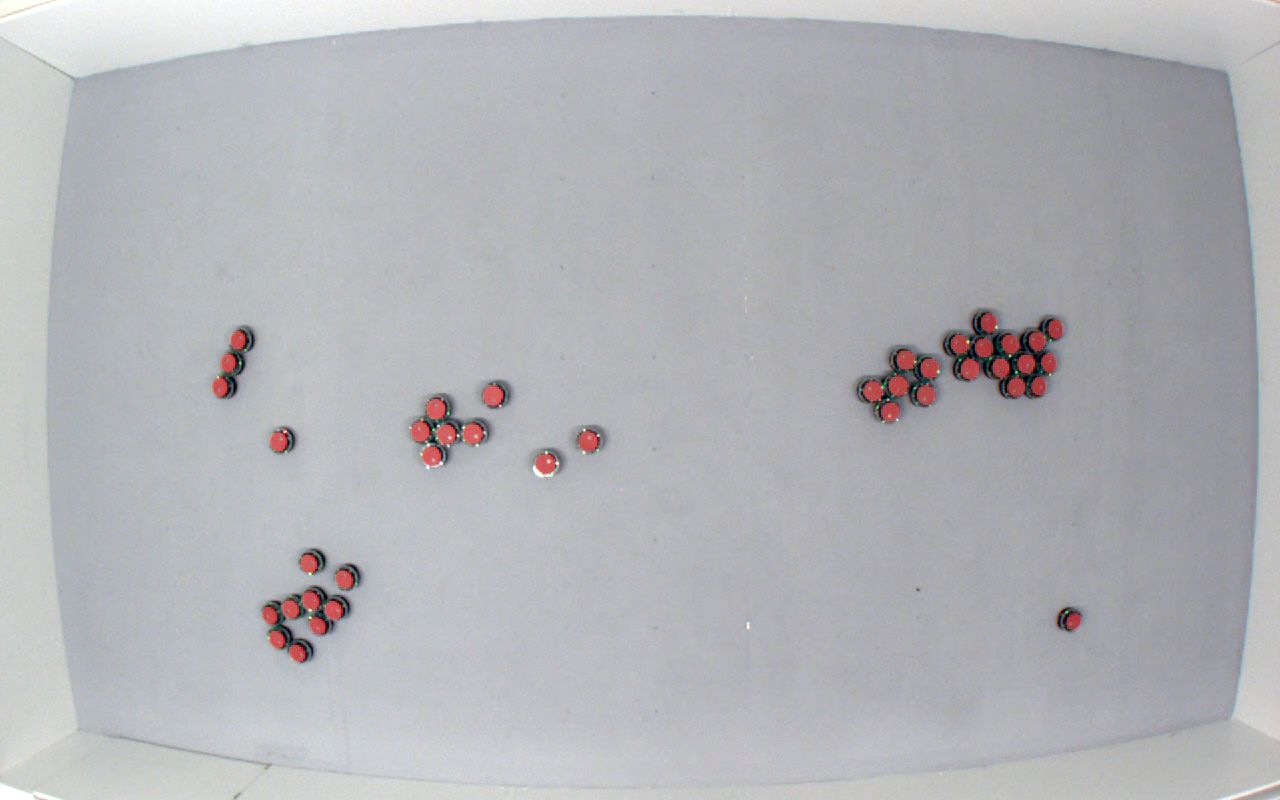}
	}
	\subfloat[after $180$ $\unit{s}$]{
		\includegraphics[width = 1.1 in]{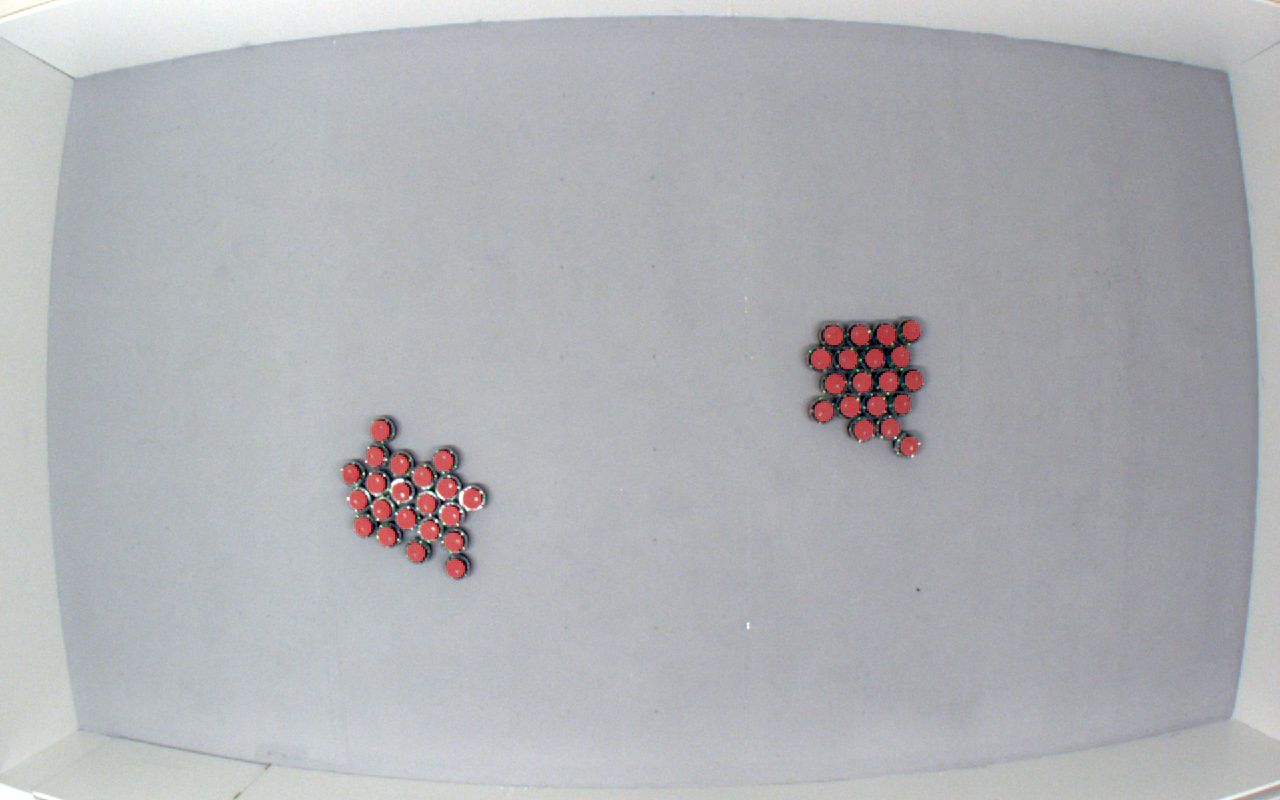}
	}\\
		\subfloat[after $360$ $\unit{s}$]{
		\includegraphics[width = 1.1 in]{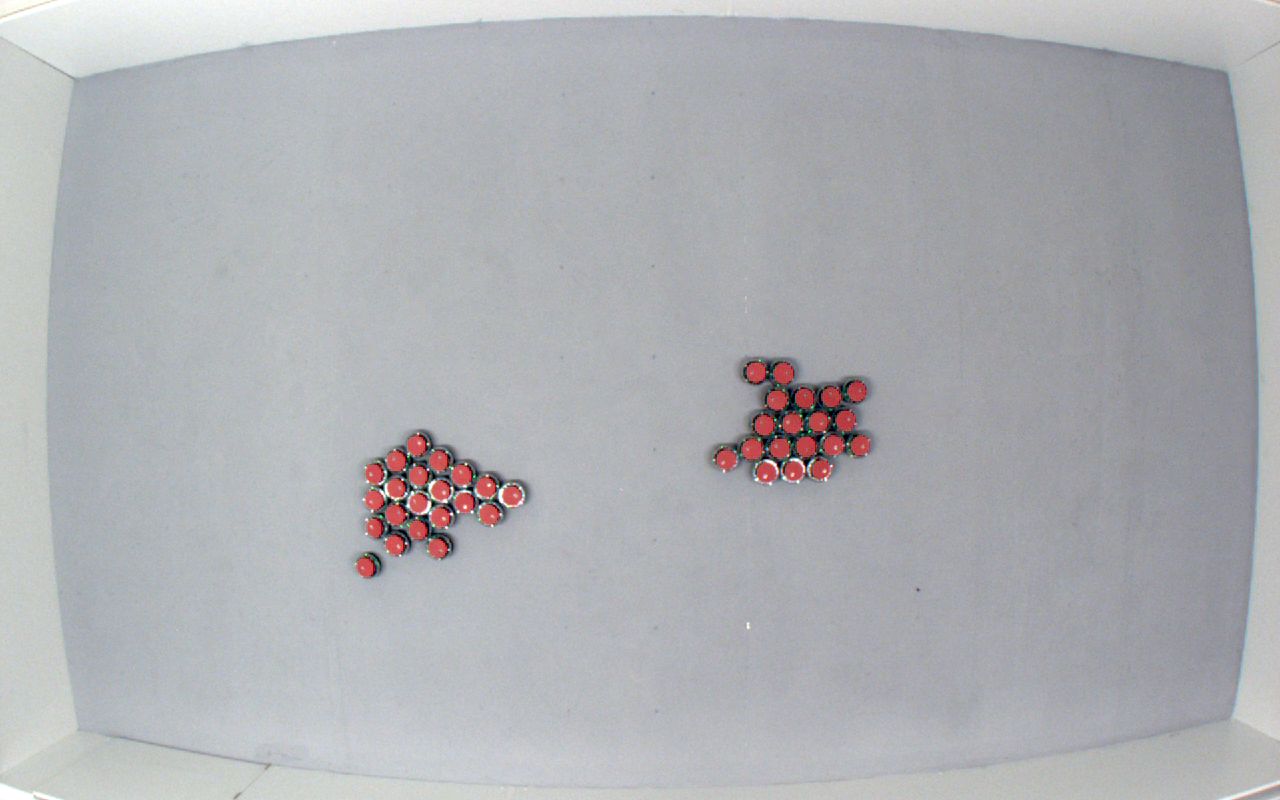}
	}
	\subfloat[after $420$ $\unit{s}$]{
		\includegraphics[width = 1.1 in]{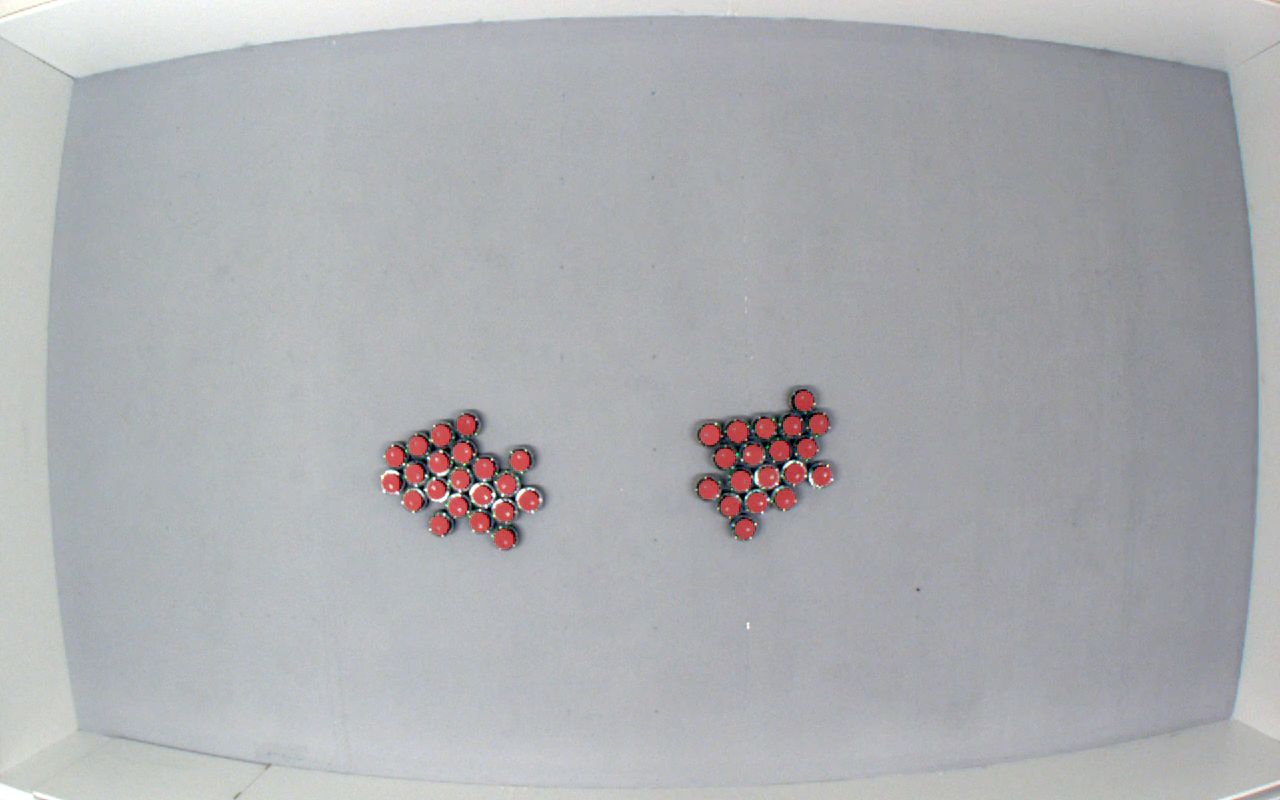}
	}
	\subfloat[after $480$ $\unit{s}$]{
		\includegraphics[width = 1.1 in]{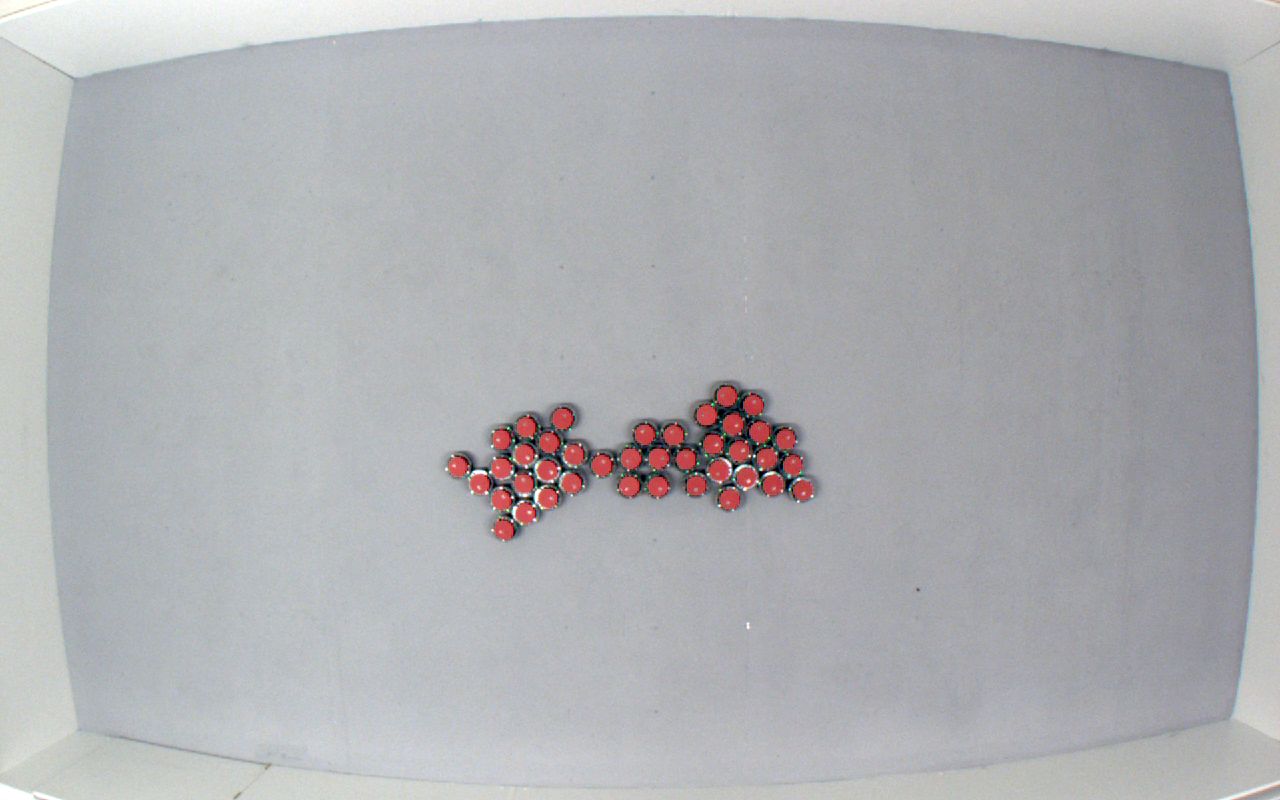}
	}
	\subfloat[after $600$ $\unit{s}$]{
		\includegraphics[width = 1.1 in]{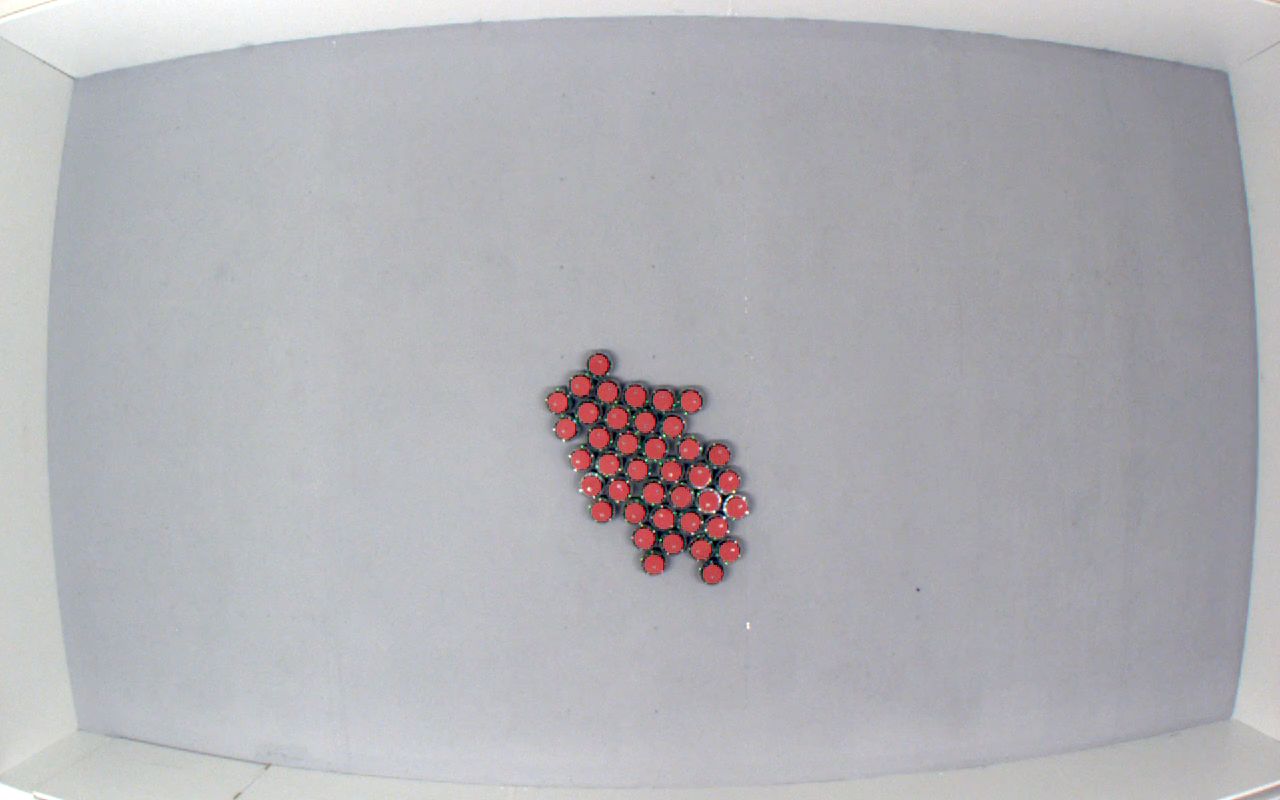}
	}
	\caption{Example of \todo{collective behavior produced by a model that was inferred} by \textit{Turing Learning} \todo{through the observation of swarms of physical e-puck robots}. A swarm of $40$ \finaltodo{physical} e-puck robots, each executing the inferred model, aggregates in a single spot.}
	\label{fig:aggregation_snapshoot_physical_validation}
\end{figure*}

For each controllers, we performed $10$ trials using $40$ physical e-pucks. Each trial lasted $10$ minutes.
Fig.~\subref*{fig:aggregation_dynamics_proportion} shows the proportion of robots in the largest cluster\footnote{A cluster of robots is defined as a maximal connected subgraph of the graph defined by the robots' positions, where two robots are considered to be adjacent if another robot cannot fit between them~\citep{Gauci2014_ijrr}.} over time with the agent and model controllers. Fig.~\subref*{fig:aggregation_dynamics_compactness} shows the dispersion (as defined in Section~\ref{sec:analysis_evolved_models}) of the robots over time with the two controllers. The aggregation dynamics of the \todo{agents and the models} show good correspondence. Fig.~\ref{fig:aggregation_snapshoot_physical_validation} shows a sequence of snapshots from a trial with $40$ e-pucks executing the \todo{inferred model} controller.

A video accompanying this paper shows the \todo{\textit{Turing Learning} identification} process of the models (in a particular run) both in simulation and on the physical system. Additionally, videos of all $20$ post-evaluation trials with $40$ e-pucks, \textcolor{black}{are provided in the online supplementary materials}~\citep{online_supplementary_material_tevc2014}.

\subsection{Analysis of \todo{generated} classifiers}

\begin{figure}[!t]
\centering
\includegraphics[width=2.8in]{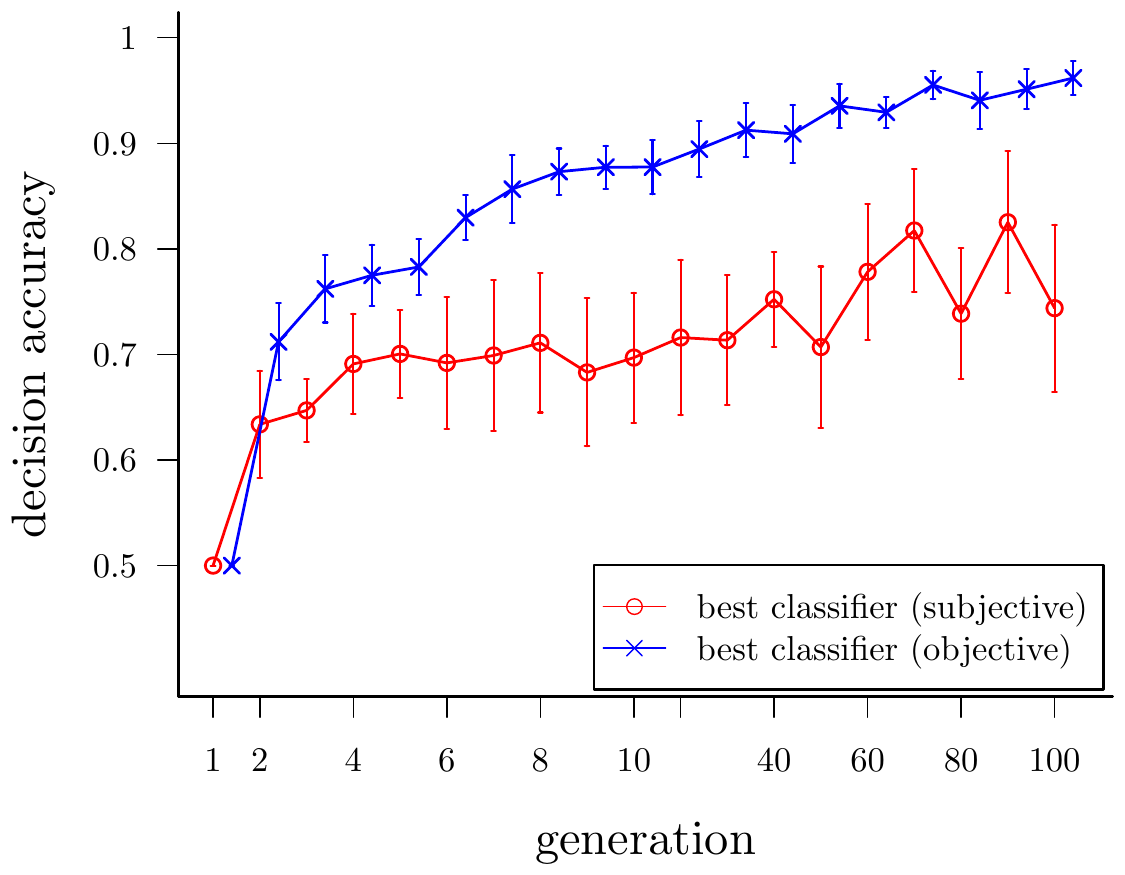}
\caption{Average decision accuracy of the best classifiers over $100$ generations (nonlinear scale) in $10$ runs of \textit{Turing Learning} with swarms of physical robots. Average over 100 data samples from a post-evaluation with physical robots executing random models. The error bars show standard deviations. See text for details.}
\label{fig:classifier_decision_accuracy_physical}
\end{figure}

When post-evaluating the classifiers \todo{generated} in the physical \todo{runs of \textit{Turing Learning}}, we limited the number of candidate models to $100$, in order to reduce the \finaltodo{physical experimentation} time. Each candidate model was chosen randomly, \textcolor{black}{with uniform distribution, 
from the parameter space defined in Eq.~\eqref{controller:form}}. Fig.~\ref{fig:classifier_decision_accuracy_physical} shows the average decision accuracy of the best classifiers over the $10$ runs. Similar to the results in simulation, \textcolor{black}{the \textit{best classifier (objective)} still has a high decision accuracy.} However, in contrast to simulation, the decision accuracy of the \textit{best classifier (subjective)} does not drop within $100$ generations. This could be due to the \todo{noise present} in the physical runs, which may have prevented the classifiers from getting over-specialized in \todo{the comparatively short time provided (100 generations)}.

\section{Conclusions}\label{sec:conclusion}

This paper presented a new system identification method---\textit{Turing Learning}---that can autonomously infer the behavior of a system from observations. To the best of our knowledge, \textit{Turing Learning} is the first system identification method that does not rely on any predefined metric to quantitatively gauge the difference between the system and the inferred models. This eliminates the need to choose a suitable metric and the bias that such metric may have on the identification process.

Through competitive and successive generation of models and classifiers, the system successfully learned two behaviors: self-organized aggregation and object clustering in swarms of mobile agents. Both the model parameters, which were automatically inferred, and emergent global behaviors closely mat\-ched those of the original swarm system. 

We also examined a conventional system identification method, which used a least-square error metric rather than classifiers. We proved that the metric-based method was fundamentally flawed for the case studies considered here. In particular, as the inputs to the agents and to the models were not correlated, the model solution that was globally optimal with respect to the metric was not identical to the agent solution. In other words, according to the metric, the parameter set of the agent itself scored worse than a different---and hence incorrect---parameter set.
Simulation results were in good agreement with these theoretical findings.

\textcolor{black}{The classifiers generated by \textit{Turing Learning} can be a useful by-product.} 
Given a data sample (motion trajectory), \textcolor{black}{they can tell whether it is genuine, in other words, whether it originates from the reference system}. Such classifiers could \textcolor{black}{be used for} detecting abnormal behavior,---for example when faults occur in some members of the swarm---and are obtained without the need to define a priori what constitutes abnormal behavior. 


\textcolor{black}{In this paper, we presented the main results using a gray box model representation; in other words, the model structure was assumed to be known.} Consequently, the inferred model parameters could be compared against the ground truth, enabling us to objectively gauge the quality of the identification process.
Note that even though the search space for the models is small, identifying the parameters is challenging as the input values are unknown (consequently, the metric-based system identification method did not succeed in this respect). 

The~\textit{Turing Learning} method was further validated using a physical system. We applied it to automatically infer the aggregation behavior of an observed swarm of e-puck robots. The behavior was learned successfully, and the results obtained in the physical experiments showed good correspondence with those obtained in simulation. This shows the robustness of our method with respect to noise and uncertainties typical of real-world experiments. To the best of our knowledge, this is also the first time that a system identification method was used to infer the behavior of \textcolor{black}{physical robots in a swarm}.


\textcolor{black}{We conducted further simulation experiments to test the generality of \textit{Turing Learning}. 
First, we showed that \textit{Turing Learning} can simultaneously infer the agent's brain (controller) as well as an aspect of the agent's morphology that determines its field of view. Second, we showed that \textit{Turing Learning} can infer the behavior without assuming the agent's control system structure to be known. The models were represented as fixed-structure recurrent neural networks, and the behavior could still be successfully inferred. For more complex behaviors, one could adopt other optimization algorithms such as NEAT~\citep{Stanley2002}, which gradually increases the complexity of the neural networks being evolved.}
\textcolor{black}{Third, we assessed an alternative setup of \textit{Turing Learning}, in which
the replica---the robot used to test the models---is not in the same environment as the swarm of agents. While this requires a swarm of replicas, it has the advantage of ensuring that the agents are not affected by the replica's presence. In addition, it opens up the possibility of the replica not being a physical agent, but rather residing in a simulated world, which may lead to a less costly implementation. On the other hand, the advantage of using a physical replica is that it may help to address the reality gap issue. As the replica shares the same physics as the agent, its evolved behavior will rely on the same physics. This is not necessarily true for a simulated replica---for instance, when evolving a simulated fish it is hard (and computationally expensive) to fully capture the hydrodynamics of the real environment. As a final experiment, we showed that \textit{Turing Learning} is able to infer a wide range of randomly generated reactive behaviors.} 


In the future, we intend to use \textit{Turing Learning} to infer the complex behaviors exhibited in natural swarms, such as in shoals of fish or herds of land mammals.
We are interested in both reactive and non-reactive behaviors.
As shown in~\citep{Li-etal2013:proc_gecco,LiThesis2016}, it can be beneficial if the classifiers are not restricted to observing the system passively. Rather, they could influence the process by which data samples are obtained, effectively choosing the conditions under which the system is to be observed.



\section*{Acknowledgments}
The authors are grateful for the support received by Jianing Chen, especially in relation to the physical implementation of \textit{Turing Learning}. The authors also thank the anonymous referees for their helpful comments.

\bibliographystyle{ifacconf}       
\bibliography{references}   

%

\end{document}